\documentclass{article}

\usepackage{tikz}
\usepackage{microtype}
\usepackage{graphicx}
\usepackage{subfigure}
\usepackage{subcaption}
\usepackage{booktabs} 

\usepackage{hyperref}
\usepackage[numbers]{natbib}

\newcommand{\crefsub}[2]{\hyperref[#1]{\cref*{#1}(#2)}}

\newcommand{\halfcircle}{%
\tikz[baseline=-0.6ex]{
  \def\r{0.7ex}
  \fill (0,0) -- (0,-\r) arc (-90:90:\r) -- cycle;
  \draw (0,0) circle (\r); ~
}}

\usepackage[accepted]{icml2026}

\usepackage{amsmath}
\usepackage{amssymb}
\usepackage{mathtools}
\usepackage{amsthm}

\usepackage[capitalize,nameinlink]{cleveref}

\definecolor{BrickRed}{rgb}{0.6,0,0}
\definecolor{RoyalBlue}{rgb}{0,0,0.8}
\definecolor{Tdgreen}{rgb}{0,0.4,0.7}
\definecolor{cadmiumgreen}{rgb}{0.0, 0.42, 0.24}
\definecolor{mybrown}{RGB}{128,64,0}
\definecolor{red}{RGB}{235, 37, 64}
\definecolor{blue}{RGB}{64, 123, 211}
\definecolor{chart_blue}{HTML}{0C7BDC}
\definecolor{chart_grey}{HTML}{525252}
\definecolor{chart_red}{rgb}{0.87, 0.36, 0.51}

\hypersetup{colorlinks, citecolor=BrickRed, linkcolor=BrickRed}

\theoremstyle{plain}
\newtheorem{theorem}{Theorem}[section]
\newtheorem{proposition}{Proposition}[section]
\newtheorem{lemma}[theorem]{Lemma}

\theoremstyle{definition}

\theoremstyle{remark}

\usepackage[disable,textsize=tiny]{todonotes}
\usepackage{booktabs}
\usepackage{multirow}
\usepackage{enumitem}

\icmltitlerunning{Routing by Reaching: Composition of Pre-trained GFlowNets for Multi-Objective Generation}

\begin{document}

\twocolumn[
\icmltitle{Routing by Reaching: Composition of Pre-trained GFlowNets \\ for Multi-Objective Generation}



\icmlsetsymbol{equal}{*}

\begin{icmlauthorlist}
\icmlauthor{Seokwon Yoon}{cse}
\icmlauthor{Youngbin Choi}{gsai}
\icmlauthor{Seunghyuk Cho}{gsai}
\icmlauthor{Seungbeom Lee}{gsai}
\icmlauthor{MoonJeong Park}{gsai}
\icmlauthor{Dongwoo Kim}{cse,gsai}

\end{icmlauthorlist}

\icmlaffiliation{cse}{Department of Computer Science \& Engineering, POSTECH, South Korea}

\icmlaffiliation{gsai}{Graduate School of Artificial Intelligence, POSTECH, South Korea}

\icmlcorrespondingauthor{Dongwoo Kim}{dongwoo.kim@postech.ac.kr}

\icmlkeywords{Machine Learning, ICML}

\vskip 0.3in
]



\printAffiliationsAndNotice{}  

\begin{abstract}

Generative Flow Networks (GFlowNets) learn to sample diverse candidates in proportion to a reward function, making them well-suited for scientific discovery, where exploring multiple promising solutions is crucial.
Further extending GFlowNets to multi-objective settings has attracted growing interest as real-world applications often involve multiple, conflicting objectives.
However, existing approaches require joint training for each combination of objectives, meaning that any change in the objective set necessitates retraining from scratch.
We propose a framework that composes pre-trained GFlowNets at inference time, enabling rapid adaptation without fine-tuning or retraining.
Importantly, our framework is flexible, capable of handling diverse reward combinations ranging from linear scalarization to complex nonlinear operators, which are often handled separately in previous literature.
We prove that our method exactly recovers the target distribution for linear scalarization, and quantify the approximation quality for nonlinear operators through a distortion factor.
Experiments on a synthetic 2D grid and real-world molecule generation tasks demonstrate that our approach achieves performance comparable to baselines.
The code is available at \url{https://github.com/ml-postech/gflownet-composition}.

\end{abstract}

\section{Introduction}

Generative models for structured discrete objects, e.g., molecules~\citep{jin2020hierarchical, guan20233d, huang2024protein,qu2024molcraft}, graphs~\citep{liao2019efficient,jo2022score,martinkus2022spectre, jang2024graphgenerationk2trees, zhou2024unifying}, and biological sequences~\citep{ingraham2019generative, truong2023poet, koh2025ai}, have become essential tools for scientific discovery. 
Among these, Generative Flow Networks (GFlowNets)~\citep{bengio2021flownetworkbasedgenerative,bengio2023gflownetfoundations,jain2023gflownets} have emerged as a particularly compelling framework for sampling diverse candidates from distributions proportional to a given reward function. 
Unlike traditional optimization methods~\citep{sutton1998introduction, schulman2015trust, schulman2017proximal} that converge to a single high-reward solution, GFlowNets explore multiple promising candidates rather than concentrating on a single optimum.

While standard GFlowNets are trained with respect to a single reward function, many real-world scientific applications require balancing multiple, often conflicting objectives simultaneously~\citep{nouhi2022multi, harris2023benchmarking, lee2025enhancing, ran2025mollm}. 
In practice, the optimal trade-off among these objectives is rarely known a priori, requiring practitioners to flexibly explore diverse compositions of objectives to identify candidates that best suit their downstream requirements.
This has motivated recent efforts to extend GFlowNets to multi-objective settings through two primary strategies: scalarization and logical operators. 
Scalarization approaches aggregate multiple objectives into a single scalar reward via a weighted sum to sample candidates across diverse trade-off profiles~\citep{jain2023multiobjectivegflownets,zhu2023sample}.
Alternatively, logical operators, as exemplified by compositional sculpting~\citep{garipov2023compositionalsculpting}, combine the distributions induced by separately trained GFlowNets through a classifier-guided framework, supporting operators such as the harmonic mean for conjunction and the contrast for subtraction.

Despite these approaches enabling multi-objective generation, they suffer from two critical limitations.
First, both strategies require joint training for each combination of objectives, limiting their applicability to a predefined set of reward functions defined during the training phase. For instance, preference-conditioned GFlowNets~\citep{jain2023multiobjectivegflownets,zhu2023sample} must be retrained from scratch to incorporate new objectives, whereas compositional sculpting requires training a dedicated auxiliary classifier for each new set of objectives. 
Both approaches incur substantial overhead that often exceeds the cost of training a single-objective GFlowNet~\citep{garipov2023compositionalsculpting}.
Second, existing methods typically address only a single formulation of multi-objective combination, either scalarization or logical composition, lacking a unified framework for both.
Consequently, developing a methodology that directly composes pre-trained GFlowNets across arbitrary reward sets and diverse composition schemes remains a significant open challenge.

In this work, we present a framework for composing pre-trained GFlowNets by mixing their forward policies at inference time. 
Our key insight is that the likelihood of visiting a state in each model provides natural weights for combining their generation processes. 
Intuitively, this prioritizes the model that considers the current state more relevant to its objective. 
We provide a theoretical analysis of our mixing rule for linear scalarization, which shows that the mixing policy exactly induces the target distribution. 
For general nonlinear composition operators, we analyze the $L_1$ distance between the induced and target distributions, and empirically show that our mixing rule provides an accurate approximation in high-density regions where it matters most.

We conduct experiments on a synthetic 2D grid domain and real-world molecule generation tasks, including both fragment- and atom-based settings. 
On the synthetic 2D grid domain, our mixing policy achieves substantially lower error than preference-conditioned baselines for linear scalarization, maintaining stable performance even as the number of objectives increases. 
For logical operators, our method performs comparably to classifier-guided mixing while requiring no additional training.
On molecular generation, our approach matches or exceeds the sample quality of baselines that require joint training for each objective combination, and for logical operators, our method is significantly faster than classifier-guided composition during inference.
Overall, experimental results demonstrate that our method enables diverse compositions without additional training at the composition stage while incurring minimal inference overhead.

\section{Related Work}

\paragraph{Multi-objective generation in GFlowNets.} 
Two primary strategies have been used to extend GFlowNets to multi-objective settings: scalarization and logical operators. 
Scalarization approaches transform multiple objectives into a single scalar reward through weighted combinations. 
\citet{jain2023multiobjectivegflownets} proposes MOGFN, which trains a single GFlowNet conditioned on preference vectors, enabling sampling across different trade-offs without retraining for each preference. 
\citet{zhu2023sample} extends this idea with hypernetwork-based GFlowNets (HN-GFN), and \citet{chen2024orderpreserving} introduces order-preserving GFlowNets that learn from a partial order over candidates without explicit reward scalarization. 
While these methods effectively explore the Pareto front, they require training a new model whenever a new objective is introduced, preventing the reuse of the existing model.

Logical operators offer an alternative by combining distributions learned by separately trained GFlowNets. 
\citet{garipov2023compositionalsculpting} proposes compositional sculpting, which introduces logical operators such as the harmonic mean for conjunction and the contrast for subtraction. 
However, their method requires training an auxiliary classifier for each set of objectives, which often exceeds the cost of training the base GFlowNet itself.

Our work provides a unified framework that covers both directions. 
By mixing pre-trained GFlowNet forward policies at inference time, we support scalarization across varying preference trade-offs and logical operators such as conjunction and subtraction.

\paragraph{Model composition.}
Building complex distributions by composing pre-trained models has a rich history in machine learning.
Products of experts~\citep{hinton1999products,hinton2002training} combine multiple probabilistic models by multiplying their densities.
In energy-based models, composition enables constructing complex concepts through algebraic operations on energy functions~\citep{du2020compositional,du2021unsupervised}.
Recent work extends these ideas to diffusion models, enabling compositional generation through score function combination~\citep{liu2022compositional,du2023reduce}.
Classifier guidance~\citep{dhariwal2021diffusion,ho2021classifierfree} further provides post-hoc control by steering generation toward desired attributes without retraining the base model.
Beyond probability-space operations, weight-space averaging provides an alternative approach: model soups~\citep{wortsman2022model} and rewarded soups~\citep{rame2023rewarded} combine fine-tuned models by interpolating their parameters.
For GFlowNets, compositional sculpting~\citep{garipov2023compositionalsculpting} represents the first attempt at model composition, but requires training a classifier for each new composition. Our work proposes training-free model composition for multi-objective generation.

\section{Background}

In this section, we review the foundations of GFlowNets, including the reaching probability. We also describe the multi-objective generation problem.

\subsection{Generative Flow Networks (GFlowNets)}
GFlowNets~\citep{bengio2021flownetworkbasedgenerative, bengio2023gflownetfoundations} are generative models designed to sample structured objects $x$, such as graphs or molecules, from a discrete space $\mathcal{X}$. 
The primary goal is to sample objects from a distribution $p(x)$ proportional to a given non-negative reward function $R(x)$:
\begin{equation*}
p(x)=\frac{R(x)}{Z}, \quad \text{where } Z=\sum_{x\in\mathcal{X}} R(x).
\end{equation*}
In practice, a temperature parameter $\beta$ is often used to control the sharpness of the distribution, yielding $p(x) \propto R(x)^\beta$. 
Increasing $\beta$ concentrates probability mass on high-reward candidates~\citep{bengio2021flownetworkbasedgenerative}.

To sample from such a distribution, GFlowNets model the generation as a sequential construction process governed by a stochastic forward policy $p_F$. 
Starting from an initial state $s_0$, the policy samples an action to transition to a new state, progressively building up the object until reaching a terminating state $x \in \mathcal{X}$ and transitioning to the sink state $s_f$, which marks the completion of a trajectory. 
Formally, this process is defined over a directed acyclic graph (DAG) $(\mathcal{S}, \mathcal{A})$, where $\mathcal{S}$ is a set of states and $\mathcal{A} \subseteq \mathcal{S} \times \mathcal{S}$ is a set of directed edges representing actions. 
There exist two distinguished states: a unique initial state $s_0 \in \mathcal{S}$ where all trajectories begin, and a unique sink state $s_f \in \mathcal{S}$ where all trajectories terminate. 
The set of terminating states $\mathcal{X} \subset \mathcal{S}$ consists of states that can transition to $s_f$ via a terminating edge, though they may also transition to other states in $\mathcal{S}$. 
The forward policy $p_F$ defines a distribution $p_F(\cdot | s)$ over the children of state $s \in \mathcal{S} \backslash \{s_f\}$, inducing a distribution over complete trajectories $\tau = (s_0 \rightarrow s_1 \rightarrow \cdots \rightarrow s_n = x \rightarrow s_f)$, where $x \in \mathcal{X}$ represents the constructed object.

\paragraph{Reaching probability.}
A key quantity that will play a central role in our method is the reaching probability $u(s)$, defined as the probability of visiting state $s$ when starting from $s_0$ under the forward policy $p_F$~\citep{bengio2023gflownetfoundations}:
\begin{equation}
u(s) \coloneq \sum_{\tau \in \mathcal{T}_{s_0, s}} \prod_{t=1}^{|\tau|} p_F(s_t \mid s_{t-1}),
\label{eq:u_def}
\end{equation}
where $\mathcal{T}_{s_0, s}$ denotes the set of all trajectories from $s_0$ to $s$, and $|\tau|$ denotes the length of $\tau$. 
By definition, $u(s_0) = 1$ since every trajectory begins at $s_0$.
Intuitively, $u(s)$ captures how much probability mass the policy routes through state $s$, and satisfies the recursion:
\begin{equation}
u(s) = \sum_{s_*:(s_* \to s) \in \mathcal{A}} u(s_*)\, p_F(s \mid s_*).
\label{eq:u_recursion}
\end{equation}
For a terminating state $x \in \mathcal{X}$, the terminating state probability decomposes as: 
\begin{equation}
p(x) = u(x) \cdot p_F(s_f \mid x).
\label{eq:p_terminal}
\end{equation}
This factorization separates the probability of reaching $x$ from the probability of stopping at $x$.

\subsection{Multi-Objective Generation in GFlowNets}
\label{sec:background_op}
Prior work in GFlowNets has addressed the multi-objective generation problem by enabling a single model to cover a family of compositions over multiple reward functions $\mathbf{R}(x) = (R_1(x), \ldots, R_k(x))$. 
\citet{jain2023multiobjectivegflownets, zhu2023sample} focus on scalarization, where multiple objectives are combined via a weighted sum, scaled by temperature $\beta$: $\tilde{R}(x) = (\sum_{i=1}^k \omega_i R_i(x))^\beta$. 
By conditioning the policy on the preference vector $\boldsymbol{\omega} \coloneq (\omega_1, \cdots, \omega_k)$, a single model learns to sample from distributions corresponding to arbitrary weight combinations within a fixed set of reward functions.

Compositional sculpting~\citep{garipov2023compositionalsculpting} takes a different approach: given separately trained GFlowNets for each reward function, it enables sampling from distributions defined by logical operators (e.g., conjunction, subtraction) over the component distributions through training an auxiliary model. 
Specifically, it introduces two logical operators. The harmonic mean operator ($\otimes$) acts as a conjunction, assigning high probability only where all component distributions have high density:
$(p_1 \otimes p_2)(x) \propto \frac{p_1(x)p_2(x)}{p_1(x)+p_2(x)}$. 
The contrast operator ($\halfcircle$) functions as a subtraction, highlighting regions where the first distribution dominates: $(p_1 \,\halfcircle\, p_2)(x) \propto \frac{p_1(x)^2}{p_1(x)+p_2(x)}$. 
These binary operators can be extended to more than two distributions through chaining. For a detailed discussion of the generalization of these operators, please refer to \citet{garipov2023compositionalsculpting}.

Following the setting of compositional sculpting, we assume access to a set of pre-trained GFlowNets $\{p_{i,F}\}_{i=1}^k$, where each $p_{i,F}$ is trained to sample from a distribution proportional to its corresponding reward function $R_i(x)$.

\section{Method}

In this section, we propose a framework that composes pre-trained GFlowNets by mixing their forward policies at inference time (\cref{sec:mixing_rule}).
We prove its exactness for linear scalarization and analyze the approximation error for nonlinear operators (\cref{sec:analysis}).

\subsection{Mixture of GFlowNets}
\label{sec:mixing_rule}
As described in \cref{sec:background_op}, we assume access to $k$ pre-trained GFlowNets, where the $i$-th GFlowNet is trained on reward function $R_i(x)$ and induces a terminating state distribution $p_i(x) \approx R_i(x)/Z_i$. 
Our goal is to sample from a target distribution $p_M^*(x) \propto \mathcal{G}(p_1(x), \ldots, p_k(x))$, where $\mathcal{G}$ specifies how the component distributions are combined.
Crucially, we aim to achieve this using only the pre-trained forward policies at inference time, without any additional training.

For the $i$-th component GFlowNet, we denote its forward policy by $p_{i,F}$, reaching probability by $u_i$, and terminating state distribution by $p_i$. We define the mixing policy as:
\begin{equation}
p_{M,F}(s'|s)
=
\frac{\mathcal{G}\!\bigl(u_1(s)p_{1,F}(s'|s),\dots,u_k(s)p_{k,F}(s'|s)\bigr)}{N_M(s)},
\label{eq:mixing_policy}
\end{equation}
where the local normalization constant $N_M(s)$ is
\begin{equation}
    \sum_{s':(s \to s') \in \mathcal{A}} \mathcal{G}\bigl(u_1(s) p_{1,F}(s'|s), \ldots, u_k(s) p_{k,F}(s'|s)\bigr).
    \label{eq:local_norm}
\end{equation}
The mixing policy weights each model's transition probability $p_{i,F}(s'|s)$ by its reaching probability $u_i(s)$, then combines them through the composition function $\mathcal{G}$.

\paragraph{Computing the reaching probability.}
\label{sec:reaching_prob}
Computing $u_i(s)$ directly via the recursion~\cref{eq:u_recursion} requires summing over all parent states, which is intractable in large state spaces.
Instead, we use the identity $u_i(s) = F_i(s)/Z_i$ (see Appendix~\ref{sec:u_F_Z_derivation} for derivation), where $F_i(s)$ is the state flow and $Z_i = F_i(s_0)$ is the total flow.
The problem thus reduces to obtaining $F_i(s)$.
Depending on the training objective, the state flow may be modeled explicitly or only implicitly through the learned policy. We therefore provide two routes for obtaining $F_i(s)$, covering both cases:

\begin{itemize}[leftmargin=*]
    \item \textbf{Model $F$.} When the GFlowNet is trained with flow matching~\citep{bengio2021flownetworkbasedgenerative}, detailed balance~\citep{bengio2023gflownetfoundations}, or sub-trajectory balance~\citep{madan2023learninggflownetsfrompartialepisodes}, $F_i(s)$ is explicitly parameterized and can be read directly from the learned model.
    \item \textbf{DB $F$.} For objectives that do not parameterize $F_i(s)$ (e.g., trajectory balance~\citep{malkin2022trajectorybalance}), we recover $F_i(s)$ via the detailed balance condition $F_i(s')\, p_{i,B}(s|s') = F_i(s)\, p_{i,F}(s'|s)$, which holds at convergence for any GFlowNet. Unrolling this identity along a trajectory gives $F_i(s_t) = Z_i \prod_{j=1}^{t} p_{i,F}(s_j|s_{j-1}) / p_{i,B}(s_{j-1}|s_j)$, which can be accumulated on the fly during trajectory generation (see Appendix~\ref{sec:db_f_derivation} for the full derivation).
\end{itemize}

\subsection{Analysis of the Induced Distribution}
\label{sec:analysis}
To understand when our mixing policy recovers the target distribution, we first derive its general form. 
The distribution induced by our mixing policy takes the form:
\begin{align} 
p_M(x) &= u_M(x) \cdot p_{M,F}(s_f \mid x) \nonumber \\ 
&= \frac{u_M(x)}{N_M(x)} \cdot \mathcal{G}\bigl(p_1(x), \ldots, p_k(x)\bigr), \label{eq:pM_result} 
\end{align} 
where the \emph{distortion factor} $\delta(x) \coloneqq u_M(x) / N_M(x)$ measures how much the induced distribution deviates from the target. 
When $\delta(x)$ is constant across all $x$, our mixing policy exactly recovers the target distribution $p_M^*(x) \propto \mathcal{G}(p_1(x), \ldots, p_k(x))$. 
We examine linear scalarization, where exact recovery is guaranteed, and nonlinear operators, where approximations may arise.

\paragraph{Linear scalarization.}
Linear scalarization combines multiple objectives via a weighted sum. 
Given a preference vector $\boldsymbol{\omega} = (\omega_1, \ldots, \omega_k)$ with $\omega_i \geq 0$, the target distribution is:
\begin{equation} 
p_M^*(x) \propto \sum_{i=1}^k \omega_i R_i(x). 
\label{eq:scalarization_target} 
\end{equation}
The corresponding composition function is $\mathcal{G}(p_1, \ldots, p_k) = \sum_{i=1}^k \omega_i Z_i p_i$. 
For this setting, $\delta(x)$ becomes constant, yielding exact recovery:

\begin{proposition}[Exact realization for linear scalarization] 
\label{prop:exact_linear} 
Given $k$ GFlowNets with terminating distributions $p_i(x) \propto R_i(x)$, the mixing policy~\cref{eq:mixing_policy} under linear scalarization exactly realizes the target distribution: 
\begin{equation} 
p_M(x) = p_M^*(x) \propto \sum_{i=1}^k \omega_i R_i(x), 
\end{equation} 
where $p_M$ is the distribution induced by our mixing policy. 
\end{proposition}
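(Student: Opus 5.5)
The plan is to show that, under linear scalarization, the mixing policy is itself a valid forward policy whose reaching probability is an explicit linear combination of the component reaching probabilities. Define
\begin{equation*}
\Phi(s) \coloneqq \sum_{i=1}^k \omega_i Z_i u_i(s),
\end{equation*}
and prove by induction over a topological order of the DAG that
\begin{equation*}
u_M(s) = \Phi(s)/\Phi(s_0), \qquad \text{where } \Phi(s_0)=\sum_i \omega_i Z_i .
\end{equation*}
Once this holds, the distortion factor $\delta(x)=u_M(x)/N_M(x)$ from \cref{eq:pM_result} will turn out to be constant, and the result follows.

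\textbf{Local normalizer.} With $\mathcal{G}(a_1,\dots,a_k)=\sum_i \omega_i Z_i a_i$, linearity and the fact that each $p_{i,F}(\cdot\mid s)$ sums to one over the children of $s$ (including $s_f$ when $s\in\mathcal{X}$) give
\begin{equation*}
N_M(s)=\sum_i \omega_i Z_i u_i(s)=\Phi(s).
\end{equation*}
The mixing policy is therefore
\begin{equation*}
p_{M,F}(s'\mid s)=\frac{\sum_i \omega_i Z_i u_i(s)\,p_{i,F}(s'\mid s)}{\Phi(s)}.
\end{equation*}
This is well defined wherever $\Phi(s)>0$. States with $\Phi(s)=0$ are never reached under the mixture, so they can be ignored; I will note this as a technical caveat.

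\textbf{Induction.} The base case is $u_M(s_0)=1=\Phi(s_0)/\Phi(s_0)$. For the inductive step, substitute the hypothesis into \cref{eq:u_recursion}. In each term the factor $\Phi(s_*)$ cancels against $N_M(s_*)$, which yields
\begin{equation*}
u_M(s)=\frac{1}{\Phi(s_0)}\sum_i \omega_i Z_i \sum_{s_*\to s} u_i(s_*)\,p_{i,F}(s\mid s_*)=\frac{\Phi(s)}{\Phi(s_0)},
\end{equation*}
where the last equality applies \cref{eq:u_recursion} again, this time to each component. This cancellation is the key step. It is also where linearity is essential: for a nonlinear $\mathcal{G}$, the normalizer would not equal a quantity that propagates through the recursion.

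\textbf{Conclusion.} Combining the above, $\delta(x)=u_M(x)/N_M(x)=1/\Phi(s_0)$ is constant. Then \cref{eq:pM_result} together with \cref{eq:p_terminal} gives
\begin{equation*}
p_M(x)=\frac{1}{\Phi(s_0)}\sum_i \omega_i Z_i p_i(x)=\frac{\sum_i\omega_i R_i(x)}{\sum_i \omega_i Z_i},
\end{equation*}
using $p_i=R_i/Z_i$. This is exactly $p_M^*$, already normalized.

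The main obstacle is bookkeeping rather than any hard estimate. Two points need care. First, the terminating edge to $s_f$ must be included among the children when computing $N_M(s)$ for $s\in\mathcal{X}$. Second, \cref{eq:pM_result} should be justified concretely: it follows from $p_{M,F}(s_f\mid x)=\mathcal{G}(u_i(x)p_{i,F}(s_f\mid x))/N_M(x)$ and \cref{eq:p_terminal}.
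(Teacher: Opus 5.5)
Your proposal is correct and is essentially the paper's proof. Your $\Phi(s)/\Phi(s_0)$ is exactly the paper's mixed reaching probability $u_M(s)=\sum_i v_i u_i(s)$ with $v_i=\omega_i Z_i/\sum_j \omega_j Z_j$; your topological induction is its lemma that this $u_M$ satisfies the reaching recursion under $p_{M,F}$; and your identity $N_M(s)=\Phi(s)$ is its remark that $N_M(s)=(\sum_j\omega_j Z_j)\,u_M(s)$. The only cosmetic difference is that you finish via the constant distortion factor $\delta(x)=1/\Phi(s_0)$, whereas the paper writes the termination step as $\rho_M(s)=\sum_i \frac{v_i u_i(s)}{u_M(s)}\rho_i(s)$ to get $p_M(s)=\sum_i v_i p_i(s)$; this is the same cancellation, and your explicit handling of states with $\Phi(s)=0$ is slightly more careful than the paper's.
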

The proof is provided in Appendix~\ref{sec:scalarization_proof}.
Under linear scalarization, this provides a method for multi-objective composition of GFlowNets with provable exactness guarantees.

\paragraph{Nonlinear composition operators.}
For nonlinear composition operators, including scalarization with $\beta \neq 1$ where $\mathcal{G}(p_1, \dots, p_k) = (\sum_{i=1}^k \omega_i Z_i p_i)^\beta$, and the logical operators introduced in \cref{sec:background_op}, $\delta(x)$ is not guaranteed to be constant, potentially introducing deviations from the target distribution. 
We empirically analyze in \cref{sec:exp_ana} that $\delta(x)$ remains approximately constant in high-reward regions most relevant to sampling, suggesting that our mixing policy provides accurate approximations where it matters most.

\section{Synthetic Experiments}
\label{sec:syn_experiments}

We conduct synthetic experiments on a 2D grid domain to validate our framework under controlled conditions where ground-truth distributions are computable.

\subsection{Experimental Settings}
\paragraph{Tasks.} We use a $32 \times 32$ 2D grid domain~\citep{bengio2021flownetworkbasedgenerative}, where the policy learns to navigate from the start state to high-reward regions. 
In this setting, each state corresponds to a grid cell, and the start state is the upper-left cell \(s_0=(0,0)\). 
At each state, the policy takes one of three actions: move right, move down, or terminate at the current state. 
We utilize five standard benchmark rewards and three synthetic circle rewards, all defined over the grid cells and normalized to the range $[0, 1]$ (\cref{fig:app_grid_rewards}). 

We conduct experiments on both scalarization and logical operators. 
For scalarization, we select subsets of size 2 to 5 from the five base rewards, use 128 evenly spaced preference vectors $\boldsymbol{\omega}$ on the simplex for each subset, and evaluate the mixing policy under each preference. 
For logical operators, we evaluate various combinations of harmonic mean and contrast operations applied to the base and circle rewards.

\paragraph{Baselines.}
For scalarization, we consider preference-conditioned GFlowNets, including MOGFN~\citep{jain2023multiobjectivegflownets} and HN-GFN~\citep{zhu2023sample}.
For logical operators, we compare against the classifier guidance approach of \citet{garipov2023compositionalsculpting}. 
For both settings, we also include an ensemble baseline that mixes forward policies without the reaching probability $u_i(\cdot)$ to validate the importance of flow-based weighting. 
Our mixing policy uses the Model $F$ route (\cref{sec:model_f_derivation}) to compute reaching probabilities.

\paragraph{Evaluation metrics.}
In this setting, where the true target distribution and induced target distribution can be computed exactly, we evaluate the approximation quality using the $L_1$ error, i.e., $\sum_x |p_{\text{model}}(x) - p_{\text{target}}(x)|$, between the distribution induced by our mixing policy and the ground-truth target distribution, following \citet{bengio2021flownetworkbasedgenerative}. 

We set $\beta=1$ in this task. Further details on experimental settings and additional results are provided in Appendix~\ref{sec:hypergrid_exp_details}.

\begin{table}[t]
\centering
\caption{$L_1$ error on linear scalarization in a 2D grid with a varying number of objectives. $N$ Obj. denotes scalarization with $N$ objectives, where the specific objectives used are detailed in \cref{tab:mixk_def}. Results are averaged over 128 evenly spaced preference vectors.}
\label{tab:hypergrid_scalarization}
\resizebox{0.7\columnwidth}{!}{
\begin{tabular}{lrrrr}
\toprule
 & 2 Obj. & 3 Obj. & 4 Obj. & 5 Obj. \\
\midrule
MOGFN    & 0.021 & 0.027 & 0.042 & 0.048 \\
HN-GFN   & 0.017 & 0.021 & 0.032 & 0.035 \\
Ensemble & 0.117 & 0.098 & 0.113 & 0.111 \\
Ours     & \textbf{0.003} & \textbf{0.003} & \textbf{0.003} & \textbf{0.003} \\
\bottomrule
\end{tabular}
}
\end{table}

\begin{figure}[t]
    \centering
    \includegraphics[width=\linewidth]{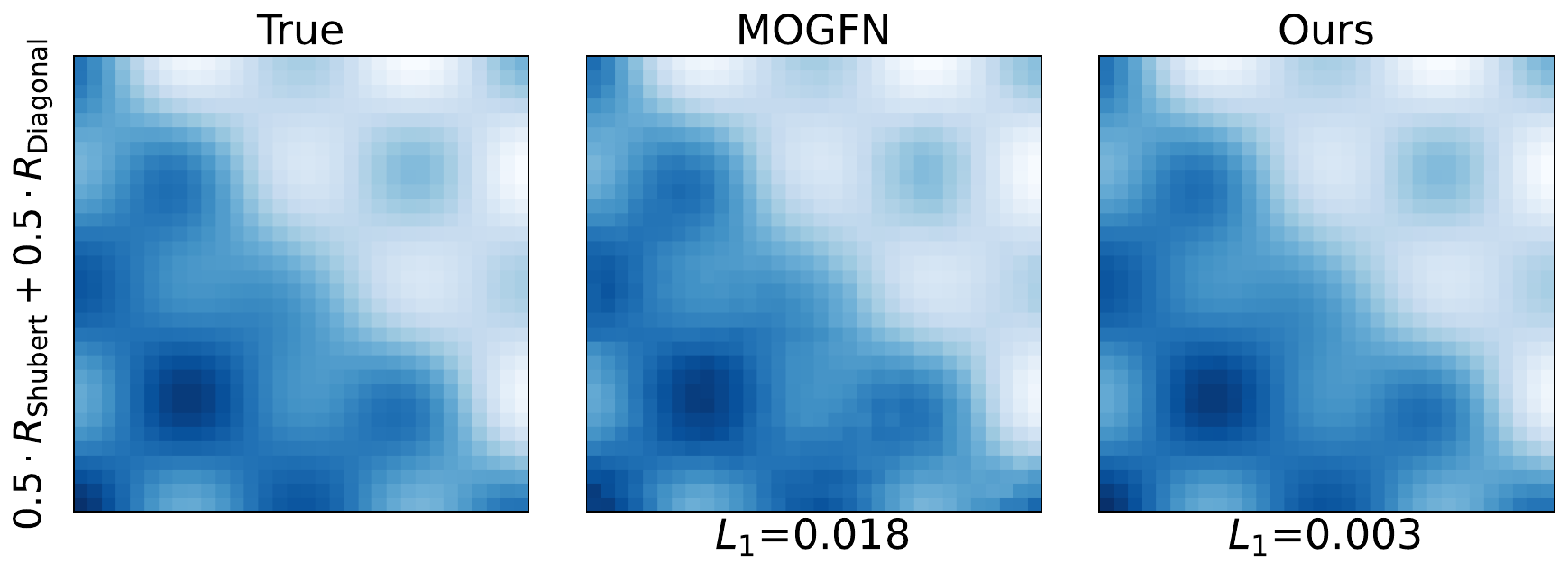}
    \caption{
        Qualitative result of scalarization on a 2D grid domain. We visualize the density over the grid for the true distribution (left), MOGFN (middle), and ours (right).
    }
    \label{fig:hypergrid_hm_contrast_scalar}
\end{figure}

\begin{table}[t]
\centering
\caption{$L_1$ error on logical operators in a 2D grid. We evaluate harmonic mean ($\otimes$) and contrast ($\halfcircle $) operators across different reward pairs. Additional results are provided in \cref{tab:hypergrid_composition_extended}.}
\label{tab:hypergrid_composition}
\resizebox{0.8\columnwidth}{!}{
\begin{tabular}{lrrr}
\toprule
  & \multicolumn{1}{c}{Classifier} & \multirow{2}{*}{Ensemble} & \multirow{2}{*}{Ours} \\
 & \multicolumn{1}{c}{guidance} & & \\
\midrule
\multicolumn{4}{l}{\textit{{Harmonic mean ($\otimes$)}}} \\ 
$p_{\text{Shubert}} \otimes p_{\text{Sphere}}$ & 0.158 & 0.145 & \textbf{0.136} \\
$p_{\text{Shubert}} \otimes p_{\text{Diagonal}}$ & \textbf{0.142} & 0.190 & {0.180} \\
$p_{\text{Circle1}} \otimes p_{\text{Circle2}}$ & 0.397 & 0.453 & \textbf{0.229} \\

\midrule
\multicolumn{4}{l}{\textit{{Contrast ($\halfcircle $)}}} \\ 
$p_{\text{Shubert}} \ \halfcircle \  p_{\text{Sphere}}$ & 0.175 & 0.175 & \textbf{0.111} \\
$p_{\text{Sphere}} \ \halfcircle \  p_{\text{Shubert}}$ & 0.190 & 0.162 & \textbf{0.116} \\
$p_{\text{Shubert}} \ \halfcircle \  p_{\text{Diagonal}}$ & 0.150 & 0.190 & \textbf{0.106} \\
$p_{\text{Diagonal}} \ \halfcircle \  p_{\text{Shubert}}$ & \textbf{0.153} & 0.190 & {0.158} \\
$p_{\text{Circle1}} \ \halfcircle \  p_{\text{Circle2}}$ & 0.245 & 0.356 & \textbf{0.231} \\
$p_{\text{Circle2}} \ \halfcircle \  p_{\text{Circle1}}$ & 0.241 & 0.390 & \textbf{0.070} \\
\bottomrule
\end{tabular}
}
\vskip -.5 em
\end{table}

\begin{figure}[t]
    \centering
    \includegraphics[width=\linewidth]{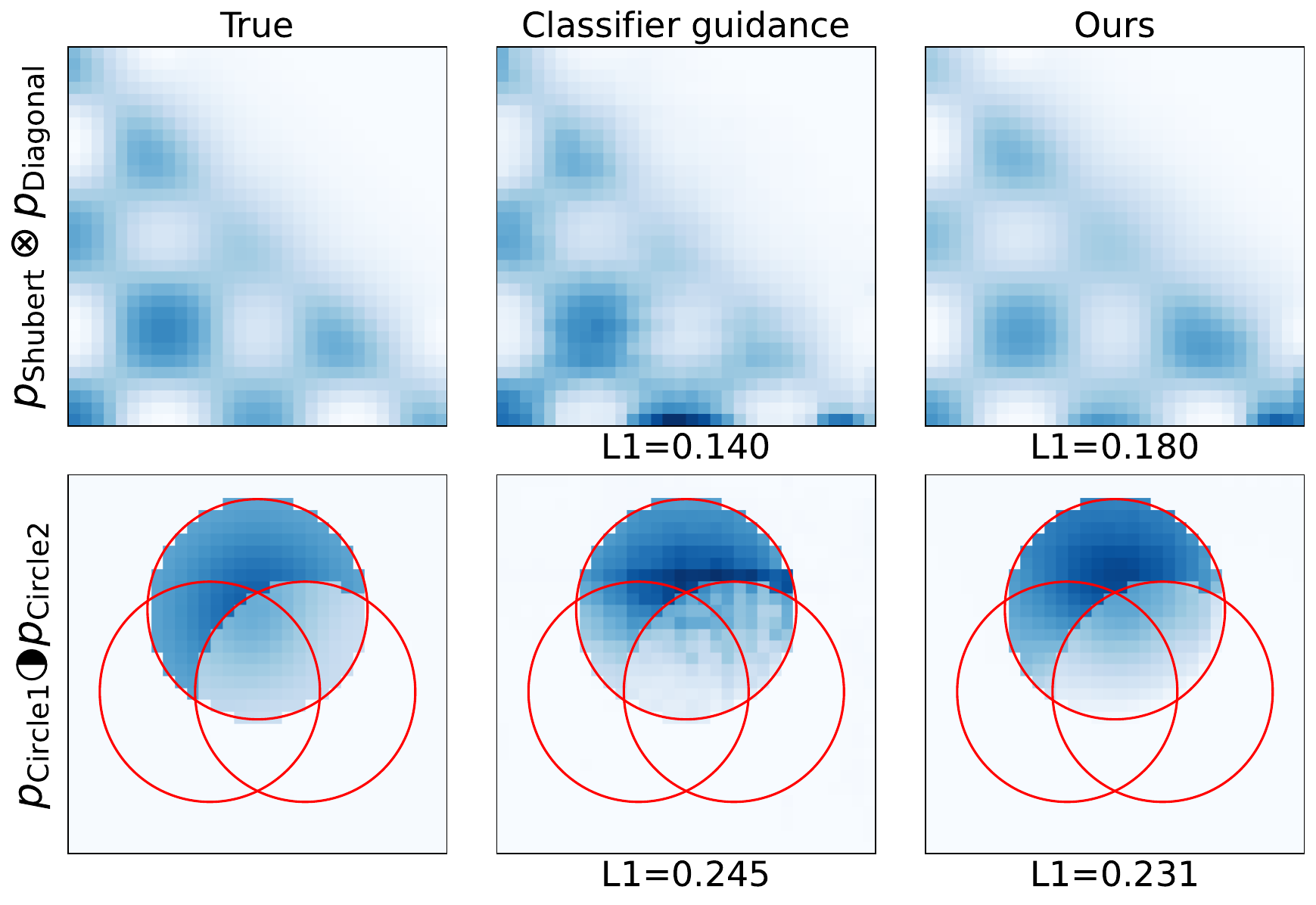}
    \caption{
        Qualitative result of logical operations on a 2D grid domain. We visualize the density over the grid for the true distribution (left), classifier guidance (middle), and ours (right). 
    }
    \label{fig:hypergrid_hm_contrast_composition}
\end{figure}

\begin{figure*}[t]
    \centering

    \subfigure[$0.5\ p_{\scriptscriptstyle\mathrm{Shubert}} + 0.5\ p_{\scriptscriptstyle\mathrm{Diagonal}}$]{%
        \includegraphics[width=0.32\linewidth]{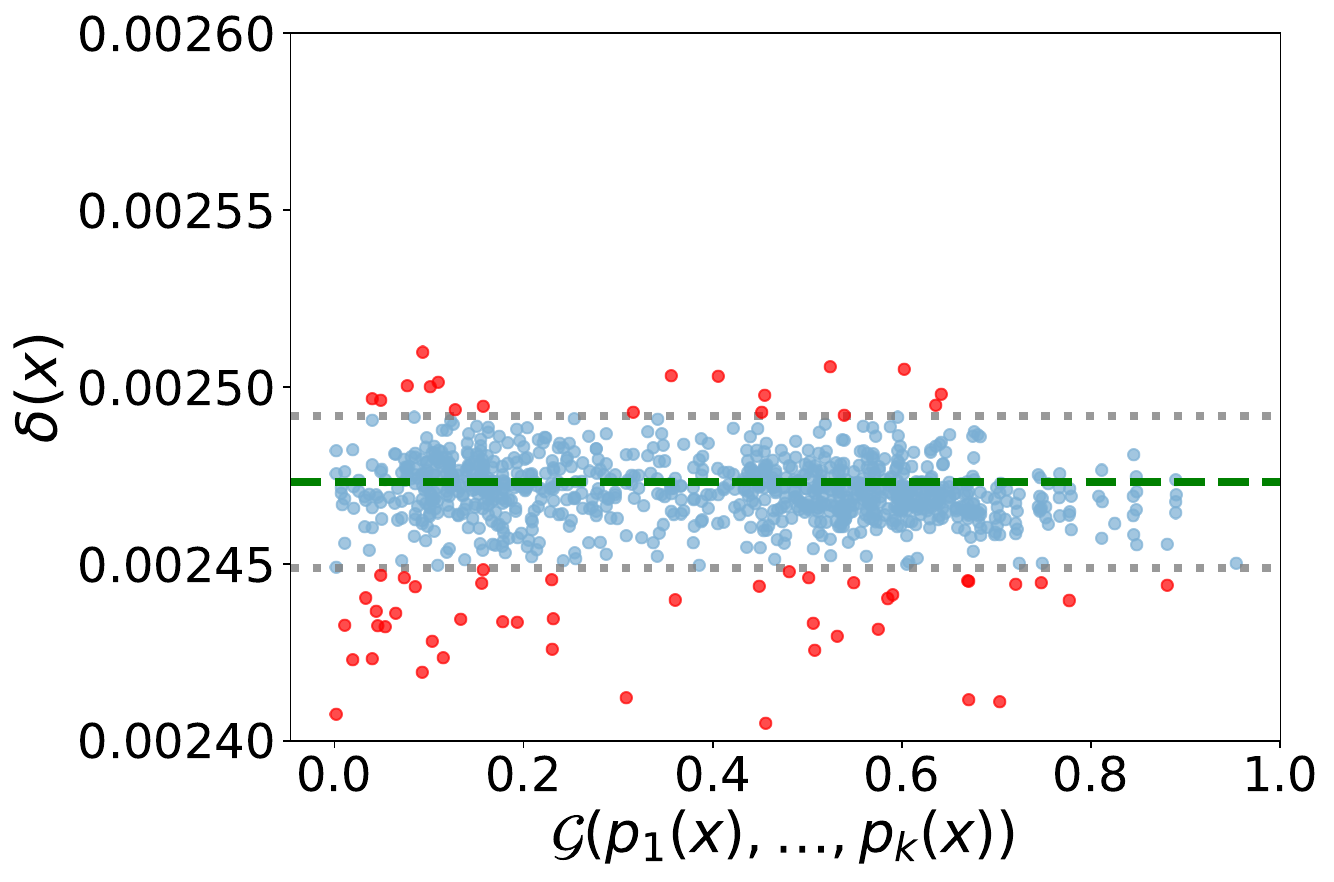}%
        \label{fig:rx_scatter_linear}%
    }
    \subfigure[$p_{\scriptscriptstyle\mathrm{Circle1}} \otimes ~ p_{\scriptscriptstyle\mathrm{Circle2}}$]{%
        \includegraphics[width=0.32\linewidth]{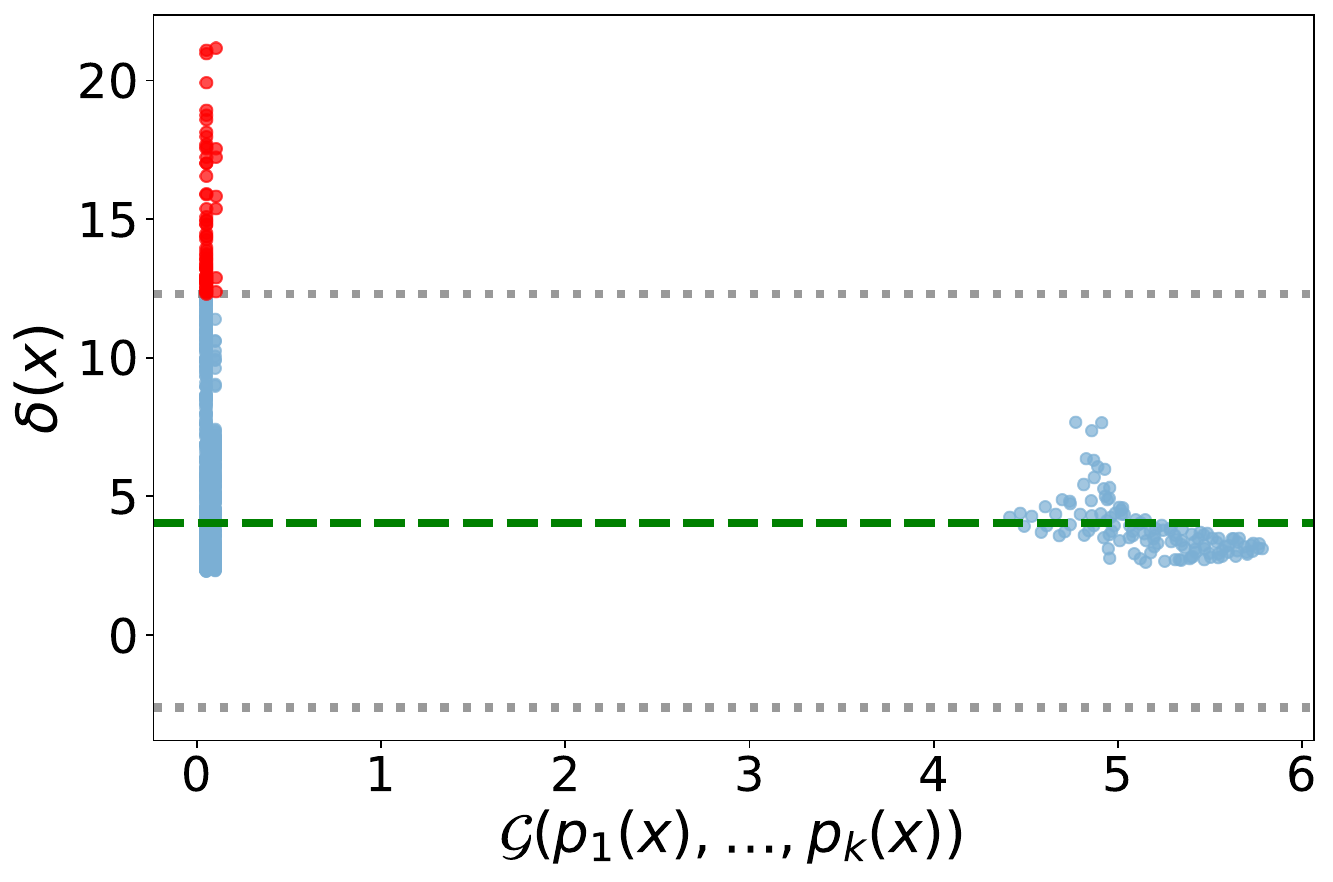}%
        \label{fig:rx_scatter_harmonic2}%
    }
    \subfigure[$p_{\scriptscriptstyle\mathrm{Circle1}} \protect\halfcircle ~ p_{\scriptscriptstyle\mathrm{Circle2}}$]{%
        \includegraphics[width=0.32\linewidth]{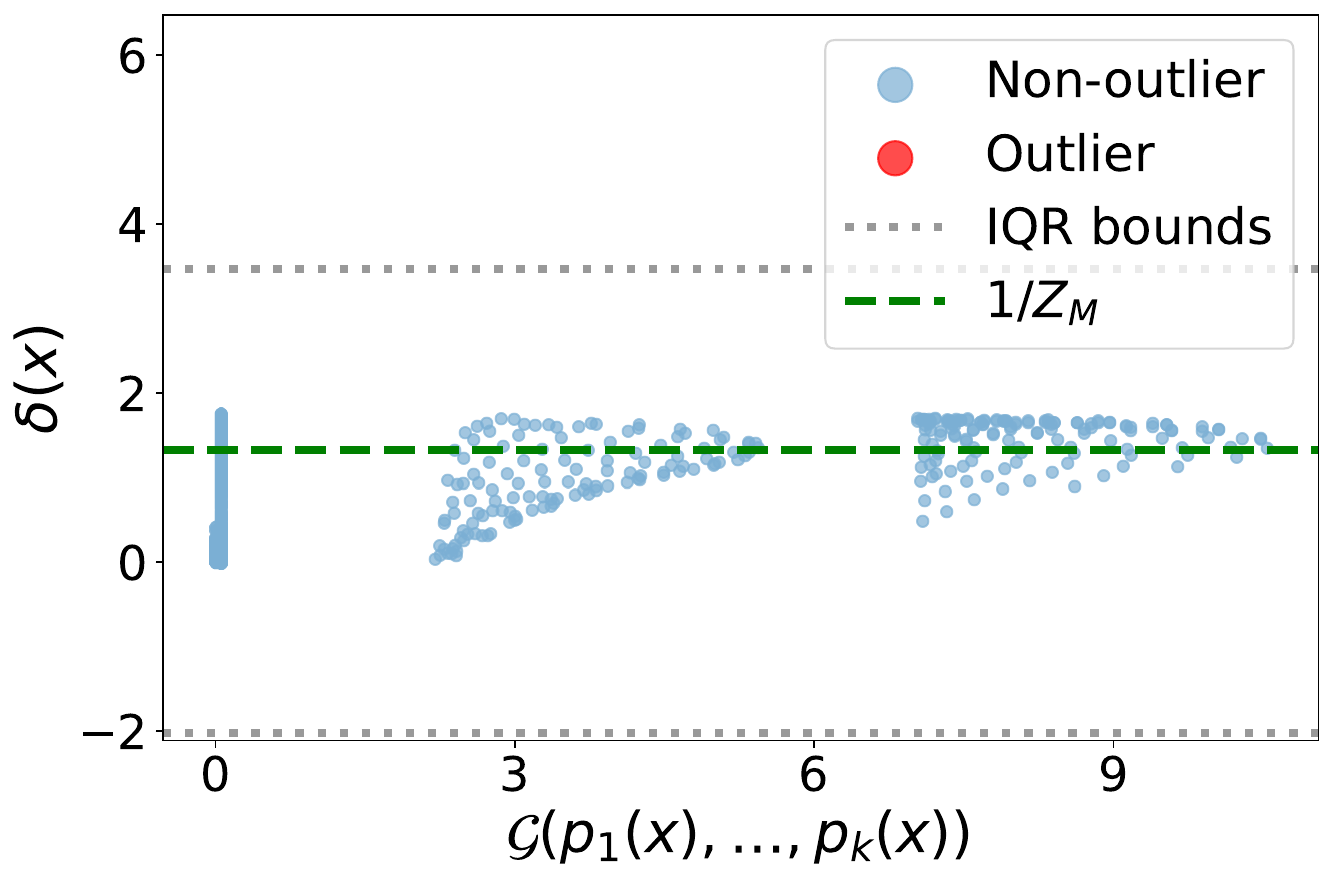}%
        \label{fig:rx_scatter_contrast}%
    }
    \caption{
Distortion factor $\delta(x) = u_M(x)/N_M(x)$ vs.\ unnormalized target density $\mathcal{G}(p_1(x),\ldots,p_k(x))$ on the 2D grid domain. Red points indicate outliers, defined as states where $\delta(x)$ falls outside $[Q_1 - 1.5 \cdot \mathrm{IQR}, Q_3 + 1.5 \cdot \mathrm{IQR}]$ (gray dotted lines), where $Q_1$, $Q_3$ are the 25th/75th percentiles and $\mathrm{IQR} = Q_3 - Q_1$. The green dashed line marks $1/Z_M$ with $Z_M = \sum_x \mathcal{G}(p_1(x),\ldots,p_k(x))$. Additional results are in \cref{fig:distortion_scatter}.
    }
    \label{fig:distortion_hypergrid}
\end{figure*}

\subsection{Results}
\paragraph{Scalarization.} \cref{tab:hypergrid_scalarization} presents the average $L_1$ error over 128 evenly spaced preference vectors for varying numbers of objectives. 
Our mixing policy consistently outperforms the baselines across all settings. 
Notably, as the number of objectives increases from 2 to 5, our method maintains nearly constant error, whereas MOGFN and HN-GFN degrade with more objectives. 
The ensemble exhibits substantially higher $L_1$ error than ours, demonstrating that naive mixing without reaching probability weighting fails to approximate the target distribution. 

For the two-objective setting, we additionally train single-objective GFlowNets separately for each of five fixed preference vectors. 
As shown in \cref{fig:scalarization_hypergrid_preference}, our mixing policy outperforms even these individually trained models.
\cref{fig:hypergrid_hm_contrast_scalar} shows the qualitative results. 
Our mixing policy more accurately recovers the multi-modal structure of the true distribution, whereas MOGFN produces slightly blurred modes.

\paragraph{Logical operators.}
\cref{tab:hypergrid_composition} compares the ground-truth composition distributions with those induced by our mixing rule and baselines.
Our mixing policy performs comparably to the classifier-guided approach, with results varying across composition types: our method outperforms on some compositions while the classifier-guided baseline performs better on others.
Notably, our method achieves this without any additional training, whereas the classifier-guided approach requires training a new auxiliary classifier for each set of objectives. 
Our mixing policy also improves over the ensemble baseline across all settings, indicating that the reaching probability $u_i(\cdot)$ provides a more refined weighting than uniform mixing of forward policies.

\cref{fig:hypergrid_hm_contrast_composition} shows qualitative results of classifier-guided and our mixing policy. Both successfully achieve the intended behavior of each operator: concentrating probability mass on regions where both component distributions have high density for the harmonic mean, and isolating regions where only the first distribution dominates for contrast.

\subsection{Analysis of Approximation Quality}
\label{sec:exp_ana}
As discussed in \cref{sec:analysis}, for nonlinear composition operators, the distortion factor $\delta(x)$ is not guaranteed to be constant, potentially introducing deviations from the target distribution. 
Here, we empirically analyze how $\delta(x)$ behaves in practice.

To quantify the approximation error, we consider the $L_1$ distance between the induced distribution $p_M(x)$ and the target distribution $p_M^*(x)$:
\begin{align} 
\label{eq:l1_decomposition} 
L_1 &= \sum_x |p_M(x) - p_M^*(x)|  \notag\\
&= \sum_x \left| \delta(x) - \frac{1}{Z_M} \right| \mathcal{G}(p_1(x), \ldots, p_k(x)),
\end{align} 
where $Z_M=\sum_x\mathcal{G}(p_1(x),\ldots,p_k(x))$. We provide the detailed derivation in \cref{app:l1_decom}.
To analyze the approximation quality, we compute $1/Z_M$, the distortion factor $\delta(x)$ and the composition value $\mathcal{G}(p_1(x), \ldots, p_k(x))$ for each $x$.

\cref{fig:distortion_hypergrid} visualizes how $\delta(x)$ varies with $\mathcal{G}(p_1(x), p_2(x))$ across different composition operators. For linear scalarization, \cref{fig:rx_scatter_linear} confirms that $\delta(x)$ remains constant at $1/Z_M$ across all samples, as expected from \cref{prop:exact_linear}.
For logical operators, \cref{fig:rx_scatter_harmonic2,fig:rx_scatter_contrast} show that $\delta(x)$ is not exactly constant but remains close to $1/Z_M$ in high-composition value regions where $\mathcal{G}(\cdot)$ is large. 
Deviations primarily occur in low-composition value regions where $\mathcal{G}(\cdot)$ is small, contributing minimally to the overall $L_1$ error. 
This confirms that our mixing policy provides accurate approximations for samples that contribute most to the target distribution. 
Additional results are provided in \cref{fig:distortion_scatter} and \cref{fig:hypergrid_beta_distortion}.

\section{Real-world Experiments}
\label{sec:r_experiments}

\begin{table*}[h]
\centering
\caption{Results of scalarization on molecule generation tasks. We report the average reward and diversity of the top-10 samples across different objective combinations (mean $\pm$ std over 3 seeds). \textit{ALL} denotes all three objectives (SEH/GAP, SA, and QED).}
\label{tab:molecule_scalarization}
\resizebox{\linewidth}{!}{%
\begin{tabular}{llrrrrrrrr}
\toprule
& & \multicolumn{4}{c}{Reward $\uparrow$} & \multicolumn{4}{c}{Diversity $\uparrow$} \\
\cmidrule(lr){3-6} \cmidrule(lr){7-10}
& & \multicolumn{1}{c}{MOGFN} & \multicolumn{1}{c}{HN-GFN} & \multicolumn{1}{c}{Ours (Model $F$)} & \multicolumn{1}{c}{Ours (DB $F$)} & \multicolumn{1}{c}{MOGFN} & \multicolumn{1}{c}{HN-GFN} & \multicolumn{1}{c}{Ours (Model $F$)} & \multicolumn{1}{c}{Ours (DB $F$)} \\
\midrule
\multirow{4}{*}{\rotatebox{90}{Fragment}}
 & SEH-SA  & $0.838_{\pm 0.002}$          & $\mathbf{0.839}_{\pm 0.005}$ & $0.836_{\pm 0.001}$          & $0.835_{\pm 0.003}$          & $\mathbf{0.573}_{\pm 0.001}$ & $\mathbf{0.573}_{\pm 0.001}$ & $0.562_{\pm 0.005}$          & $0.554_{\pm 0.006}$          \\
 & SEH-QED & $0.764_{\pm 0.006}$          & $0.711_{\pm 0.012}$          & $0.775_{\pm 0.010}$          & $\mathbf{0.777}_{\pm 0.006}$ & $\mathbf{0.571}_{\pm 0.007}$ & $0.568_{\pm 0.007}$          & $0.564_{\pm 0.006}$          & $0.565_{\pm 0.013}$          \\
 & SA-QED  & $0.783_{\pm 0.006}$          & $0.790_{\pm 0.009}$          & $\mathbf{0.797}_{\pm 0.010}$ & $0.790_{\pm 0.008}$          & $0.630_{\pm 0.008}$          & $\mathbf{0.650}_{\pm 0.006}$ & $0.624_{\pm 0.007}$          & $0.621_{\pm 0.012}$          \\
 & ALL     & $0.723_{\pm 0.004}$          & $0.670_{\pm 0.001}$          & $0.741_{\pm 0.009}$          & $\mathbf{0.742}_{\pm 0.010}$ & $0.608_{\pm 0.003}$          & $0.608_{\pm 0.008}$          & $0.608_{\pm 0.009}$          & $\mathbf{0.610}_{\pm 0.008}$ \\
\midrule
\multirow{4}{*}{\rotatebox{90}{QM9}}
 & GAP-SA  & $0.799_{\pm 0.022}$          & $0.742_{\pm 0.035}$          & $\mathbf{0.873}_{\pm 0.003}$ & $0.868_{\pm 0.003}$          & $\mathbf{0.919}_{\pm 0.008}$ & $0.912_{\pm 0.003}$          & $0.899_{\pm 0.010}$          & $0.902_{\pm 0.009}$          \\
 & GAP-QED & $\mathbf{0.784}_{\pm 0.004}$ & $0.773_{\pm 0.009}$          & $0.778_{\pm 0.004}$          & $0.781_{\pm 0.005}$          & $0.859_{\pm 0.021}$          & $0.865_{\pm 0.019}$          & $0.880_{\pm 0.014}$          & $\mathbf{0.881}_{\pm 0.014}$ \\
 & SA-QED  & $0.667_{\pm 0.023}$          & $0.707_{\pm 0.001}$          & $\mathbf{0.710}_{\pm 0.013}$ & $0.689_{\pm 0.014}$          & $\mathbf{0.853}_{\pm 0.037}$ & $0.808_{\pm 0.009}$          & $0.778_{\pm 0.040}$          & $0.826_{\pm 0.019}$          \\
 & ALL     & $0.688_{\pm 0.028}$          & $0.663_{\pm 0.005}$          & $\mathbf{0.727}_{\pm 0.005}$ & $0.719_{\pm 0.004}$          & $\mathbf{0.895}_{\pm 0.007}$ & $0.875_{\pm 0.003}$          & $0.872_{\pm 0.014}$          & $0.882_{\pm 0.010}$          \\
\bottomrule
\end{tabular}
}
\end{table*}

\begin{table}[h]
\centering
\caption{Hypervolume on molecule generation tasks. We report mean $\pm$ std over 3 seeds. \textit{ALL} denotes all three objectives (SEH/GAP, SA, and QED).}
\label{tab:moo_hv}
\setlength{\tabcolsep}{4pt}
\resizebox{\columnwidth}{!}{%
\begin{tabular}{llccccc}
\toprule
& & \multicolumn{5}{c}{Hypervolume $\uparrow$} \\
\cmidrule(lr){3-7}
& & MOGFN & HN-GFN & OP-GFN & Ours(Model $F$) & Ours(DB $F$) \\
\midrule
\multirow{4}{*}{\rotatebox{90}{Fragment}}
 & SEH-SA  & $0.832_{\pm 0.012}$ & $\mathbf{0.842}_{\pm 0.001}$ & $0.807_{\pm 0.026}$          & $0.834_{\pm 0.003}$          & $0.824_{\pm 0.005}$          \\
 & SEH-QED & $0.758_{\pm 0.023}$ & $0.747_{\pm 0.015}$          & $0.581_{\pm 0.029}$          & $\mathbf{0.768}_{\pm 0.018}$ & $0.754_{\pm 0.026}$          \\
 & SA-QED  & $0.781_{\pm 0.014}$ & $0.790_{\pm 0.017}$          & $0.655_{\pm 0.021}$          & $\mathbf{0.792}_{\pm 0.001}$ & $0.787_{\pm 0.018}$          \\
 & ALL     & $0.665_{\pm 0.012}$ & $0.651_{\pm 0.005}$          & $0.476_{\pm 0.038}$          & $0.674_{\pm 0.021}$          & $\mathbf{0.679}_{\pm 0.023}$ \\
\midrule
\multirow{4}{*}{\rotatebox{90}{QM9}}
 & GAP-SA  & $0.976_{\pm 0.051}$ & $0.958_{\pm 0.089}$          & $0.972_{\pm 0.062}$          & $\mathbf{1.000}_{\pm 0.008}$ & $0.981_{\pm 0.015}$          \\
 & GAP-QED & $0.642_{\pm 0.004}$ & $\mathbf{0.665}_{\pm 0.009}$ & $\mathbf{0.665}_{\pm 0.014}$ & $0.636_{\pm 0.002}$          & $0.655_{\pm 0.007}$          \\
 & SA-QED  & $0.558_{\pm 0.021}$ & $0.582_{\pm 0.004}$          & $0.594_{\pm 0.006}$          & $\mathbf{0.607}_{\pm 0.008}$ & $0.587_{\pm 0.011}$          \\
 & ALL     & $0.664_{\pm 0.020}$ & $0.643_{\pm 0.012}$          & $0.591_{\pm 0.035}$          & $0.658_{\pm 0.002}$          & $\mathbf{0.665}_{\pm 0.007}$ \\
\bottomrule
\end{tabular}
}
\end{table}

We evaluate our framework on real-world molecular generation tasks to demonstrate its practical applicability.

\subsection{Experimental Settings}
\paragraph{Tasks.} We evaluate our framework on two tasks: fragment- and atom-based molecule generation. In fragment-based generation, molecules are assembled from a pre-defined vocabulary of 72 fragments. 
At each step, the policy selects a fragment and specifies attachment points on both the existing structure and the new fragment, with a maximum of 9 fragments per molecule. We employ three reward functions:
(i) {SEH}, which measures the predicted binding affinity to soluble epoxide hydrolase using a pre-trained proxy model~\citep{bengio2021flownetworkbasedgenerative};
(ii) {SA}, which estimates synthetic accessibility~\citep{ertl2009sa}; and
(iii) {QED}, which quantifies drug-likeness~\citep{bickerton2012qed}.
All rewards are normalized to $[0,1]$ with larger values indicating better molecules. 

For atom-based generation (QM9), molecules are constructed atom-by-atom and bond-by-bond.
Each state is represented as a connected graph, and actions consist of adding new nodes or edges to the graph and setting their attributes.
We consider three reward functions: {GAP}, derived from the HOMO--LUMO gap predicted by an MXMNet~\citep{zhang2020mmgnn} proxy trained on the QM9 dataset (transformed so that smaller gaps yield higher rewards, clipped to $[0,2]$); and the aforementioned {SA} and {QED}.

As in the synthetic experiments, we evaluate both scalarization and logical operators. For scalarization, we use 10 and 128 evenly spaced preference vectors for two- and three-objective settings, respectively. 
For logical operators, we test various combinations applied to the base rewards.

\paragraph{Baselines.}
For scalarization, we compare against two preference-conditioned GFlowNets, MOGFN~\citep{jain2023multiobjectivegflownets} and HN-GFN~\citep{zhu2023sample}. 
We additionally include OP-GFN~\citep{chen2024orderpreserving} for the hypervolume comparison.
Since OP-GFN targets Pareto-optimal samples without conditioning on preference vectors, it is not directly comparable on per-preference reward metrics.
For logical operators, we compare against the classifier guidance approach of \citet{garipov2023compositionalsculpting}.
We report two variants of our method, Model $F$ and DB $F$, corresponding to the two routes for computing the reaching probabilities introduced in \cref{sec:reaching_prob}.

\paragraph{Evaluation metrics.}
Unlike in the grid experiments, neither the target nor the induced distributions can be computed due to the large state space.
For scalarization, we sample 128 candidates per preference vector, select the top-10 by scalarized reward, and compute their mean reward and diversity (1 - \text{Tanimoto similarity}).
We additionally report the hypervolume of the generated molecules, a standard multi-objective optimization metric capturing both convergence and spread~\citep{jain2023multiobjectivegflownets}.
For logical operators, following \citet{garipov2023compositionalsculpting}, we categorize 5,000 samples into $2^3=8$ bins based on whether each reward is above or below a threshold and report the percentage in each bin.

For the temperature parameter, we set $\beta=32$ for scalarization and logical operators.
Further details are provided in \cref{sec:real_exp_details}.

\begin{figure}[t]
    \centering
    \subfigure[]{
        \includegraphics[width=0.42\columnwidth]{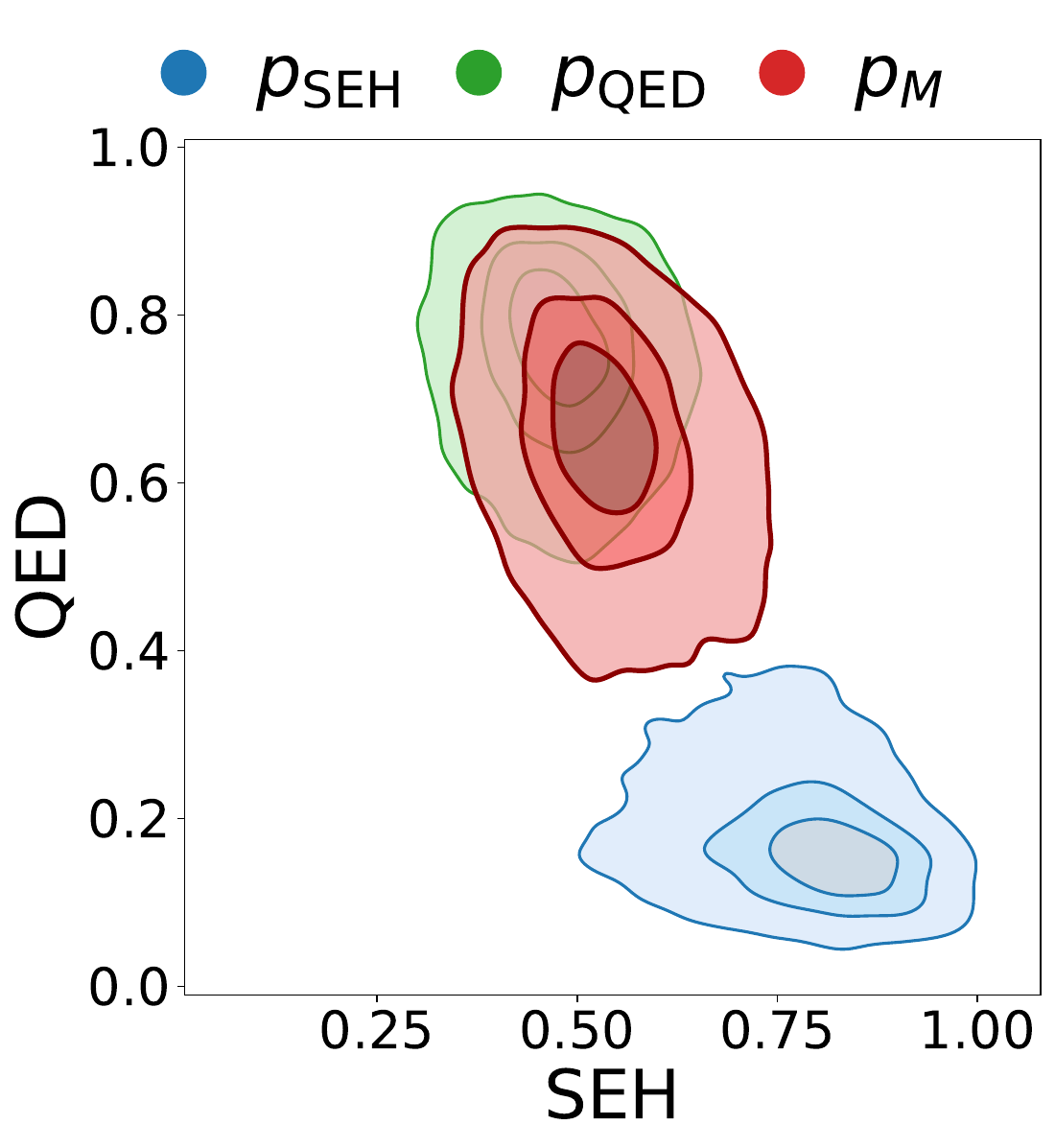}
        \label{fig:frag_scalar_03_07}
    }
    \subfigure[]{
        \includegraphics[width=0.42\columnwidth]{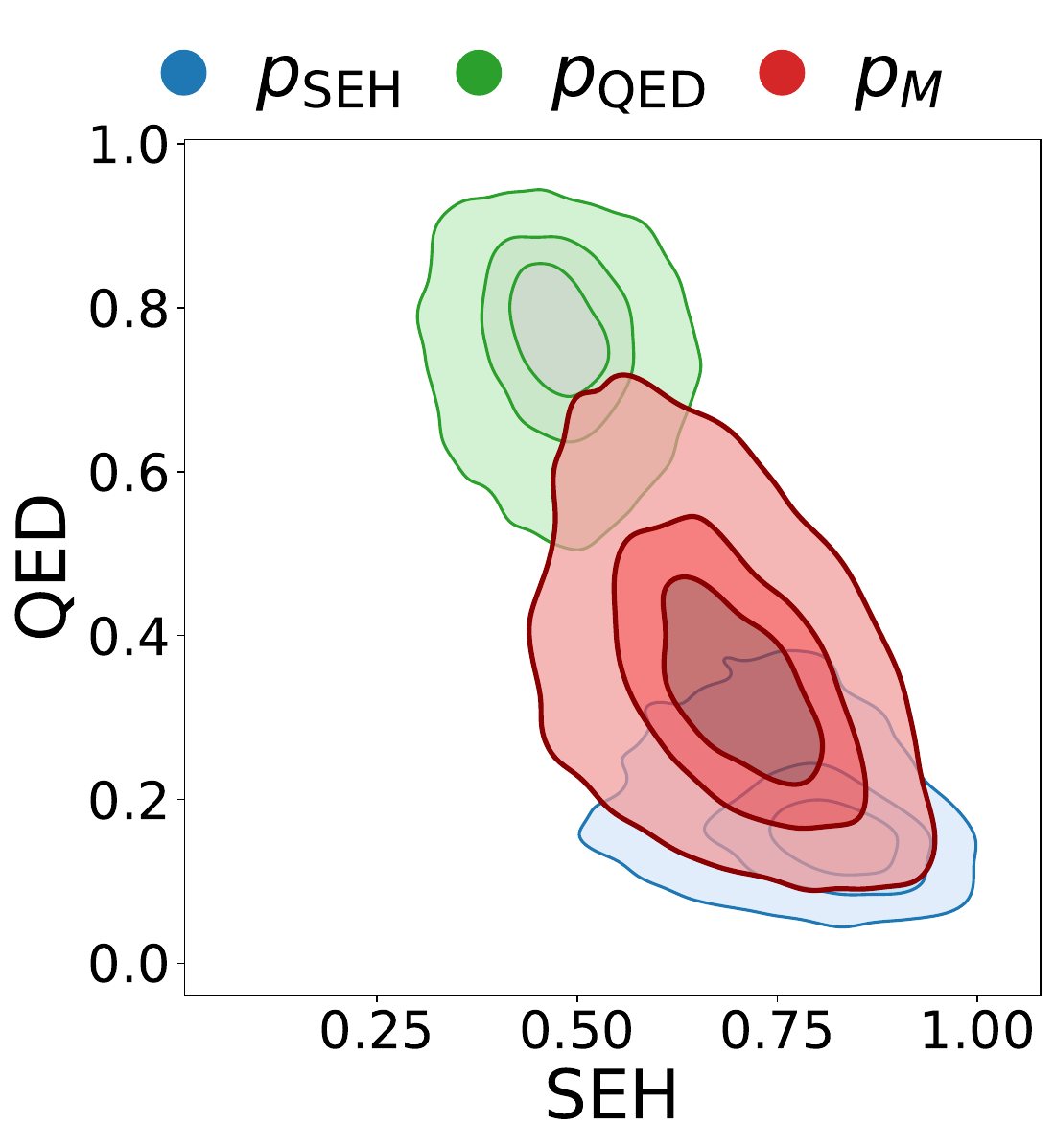}
        \label{fig:frag_scalar_07_03}
    }
    \caption{Density visualization of $p_{\scriptscriptstyle\mathrm{SEH}}$, $p_{\scriptscriptstyle\mathrm{QED}}$, and the composed distribution $p_M$ induced by our mixing policy via scalarization on fragment-based molecule generation. Each distribution is estimated from 5,000 samples. We vary the weights as (a) $p_M \propto (0.3 \cdot R_{\scriptscriptstyle\mathrm{SEH}} + 0.7 \cdot R_{\scriptscriptstyle\mathrm{QED}})^{32}$ and (b) $p_M \propto (0.7 \cdot R_{\scriptscriptstyle\mathrm{SEH}} + 0.3 \cdot R_{\scriptscriptstyle\mathrm{QED}})^{32}$. }
    \label{fig:frag_experiments_qual}
\end{figure}

\begin{table*}[t]
\centering
\caption{Results of logical operators on molecule generation tasks (mean $\pm$ std over 3 seeds). We report the percentage of 5{,}000 samples falling into the \textit{target bin}: for harmonic mean ($\otimes$), where all composed objectives are high; for contrast ($\halfcircle$), where only the first objective is high.}
\label{tab:composition_baseline}
\small
\setlength{\tabcolsep}{4pt}
\begin{tabular}{lrrr@{\hskip 18pt}lrrr}
\toprule
\multicolumn{4}{c}{\textit{Fragment-based}} & \multicolumn{4}{c}{\textit{Atom-based (QM9)}} \\
\cmidrule(lr){1-4} \cmidrule(lr){5-8}
 & Classifier & Ours & Ours &  & Classifier & Ours & Ours \\
 & guidance & (Model $F$) & (DB $F$) &  & guidance & (Model $F$) & (DB $F$) \\
\midrule
\multicolumn{4}{l}{\textit{Harmonic mean ($\otimes$)}} & \multicolumn{4}{l}{\textit{Harmonic mean ($\otimes$)}} \\
$p_{\scriptscriptstyle\mathrm{SEH}} \otimes p_{\scriptscriptstyle\mathrm{SA}}$ & 81\textsubscript{$\pm$2} & 87\textsubscript{$\pm$3} & \textbf{88}\textsubscript{$\pm$3}
& $p_{\scriptscriptstyle\mathrm{GAP}} \otimes p_{\scriptscriptstyle\mathrm{SA}}$ & 78\textsubscript{$\pm$4} & \textbf{87}\textsubscript{$\pm$1} & 86\textsubscript{$\pm$2} \\
$p_{\scriptscriptstyle\mathrm{SEH}} \otimes p_{\scriptscriptstyle\mathrm{QED}}$ & 80\textsubscript{$\pm$4} & 87\textsubscript{$\pm$2} & \textbf{88}\textsubscript{$\pm$2}
& $p_{\scriptscriptstyle\mathrm{GAP}} \otimes p_{\scriptscriptstyle\mathrm{QED}}$ & 76\textsubscript{$\pm$2} & \textbf{79}\textsubscript{$\pm$10} & 78\textsubscript{$\pm$2} \\
$p_{\scriptscriptstyle\mathrm{SEH}} \otimes p_{\scriptscriptstyle\mathrm{SA}} \otimes p_{\scriptscriptstyle\mathrm{QED}}$ & 40\textsubscript{$\pm$3} & 65\textsubscript{$\pm$2} & \textbf{66}\textsubscript{$\pm$3}
& $p_{\scriptscriptstyle\mathrm{GAP}} \otimes p_{\scriptscriptstyle\mathrm{SA}} \otimes p_{\scriptscriptstyle\mathrm{QED}}$ & 51\textsubscript{$\pm$3} & 61\textsubscript{$\pm$8} & \textbf{63}\textsubscript{$\pm$2} \\
\midrule
\multicolumn{4}{l}{\textit{Contrast ($\halfcircle$)}} & \multicolumn{4}{l}{\textit{Contrast ($\halfcircle$)}} \\
$p_{\scriptscriptstyle\mathrm{SEH}} \ \halfcircle \ p_{\scriptscriptstyle\mathrm{SA}}$ & 68\textsubscript{$\pm$2} & \textbf{76}\textsubscript{$\pm$1} & \textbf{76}\textsubscript{$\pm$1}
& $p_{\scriptscriptstyle\mathrm{GAP}} \ \halfcircle \ p_{\scriptscriptstyle\mathrm{SA}}$ & 55\textsubscript{$\pm$3} & 55\textsubscript{$\pm$3} & \textbf{56}\textsubscript{$\pm$4} \\
$p_{\scriptscriptstyle\mathrm{SEH}} \ \halfcircle \ p_{\scriptscriptstyle\mathrm{QED}}$ & 83\textsubscript{$\pm$5} & \textbf{85}\textsubscript{$\pm$5} & \textbf{85}\textsubscript{$\pm$4}
& $p_{\scriptscriptstyle\mathrm{GAP}} \ \halfcircle \ p_{\scriptscriptstyle\mathrm{QED}}$ & 79\textsubscript{$\pm$2} & \textbf{85}\textsubscript{$\pm$2} & 84\textsubscript{$\pm$2} \\
$p_{\scriptscriptstyle\mathrm{SEH}} \ \halfcircle \ p_{\scriptscriptstyle\mathrm{SA}} \ \halfcircle \ p_{\scriptscriptstyle\mathrm{QED}}$ & 57\textsubscript{$\pm$2} & \textbf{64}\textsubscript{$\pm$3} & 63\textsubscript{$\pm$3}
& $p_{\scriptscriptstyle\mathrm{GAP}} \ \halfcircle \ p_{\scriptscriptstyle\mathrm{SA}} \ \halfcircle \ p_{\scriptscriptstyle\mathrm{QED}}$ & 45\textsubscript{$\pm$3} & 49\textsubscript{$\pm$3} & \textbf{50}\textsubscript{$\pm$4} \\
\bottomrule
\end{tabular}
\end{table*}

\begin{figure*}[h]
    \centering
    \subfigure[Classifier guidance]{
        \includegraphics[width=0.22\linewidth]{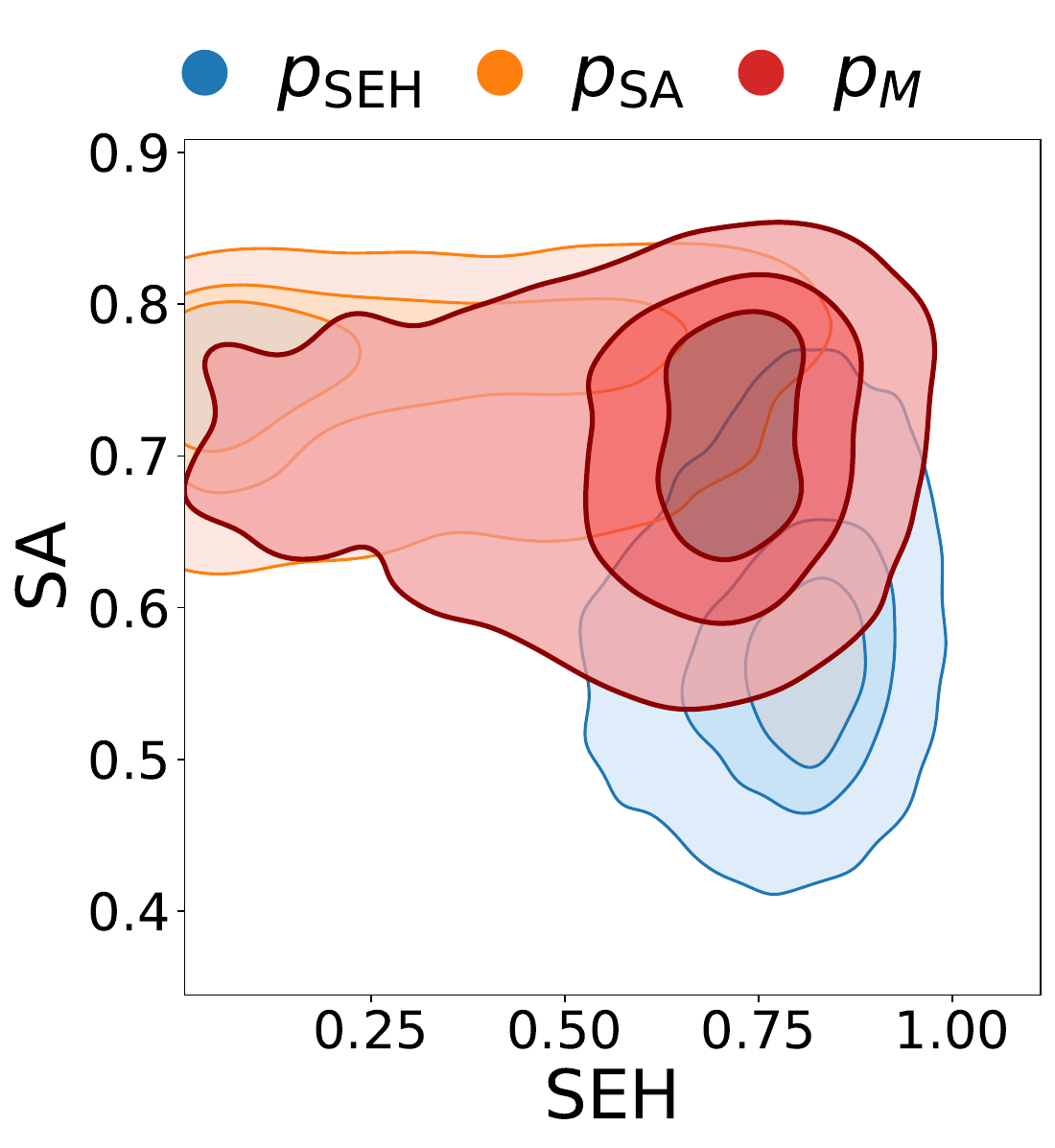}
        \label{fig:frag_hm_cls}
    }
    \subfigure[Ours]{
        \includegraphics[width=0.22\linewidth]{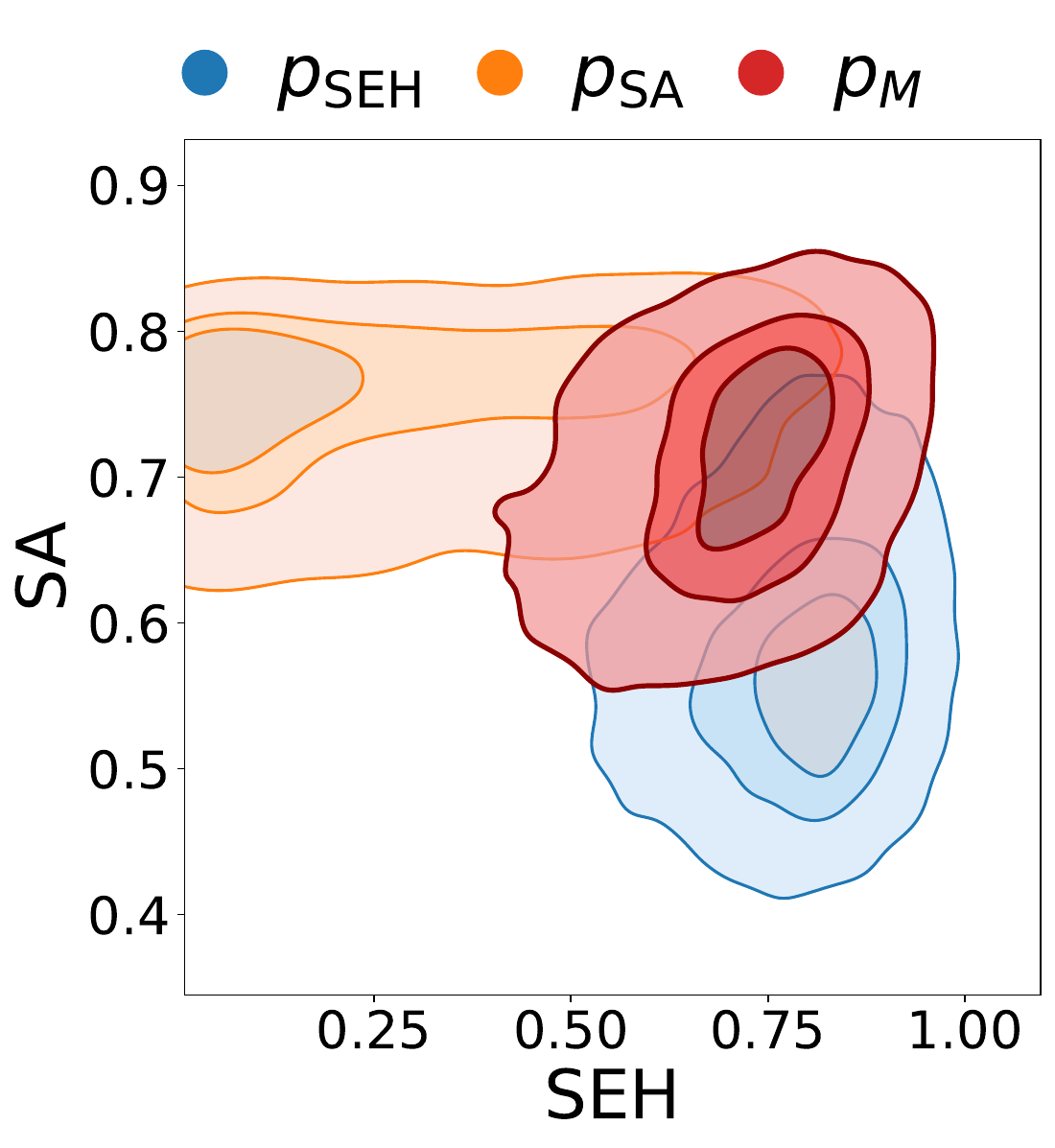}
        \label{fig:frag_hm}
    }
    \subfigure[Classifier guidance]{ 
        \includegraphics[width=0.22\linewidth]{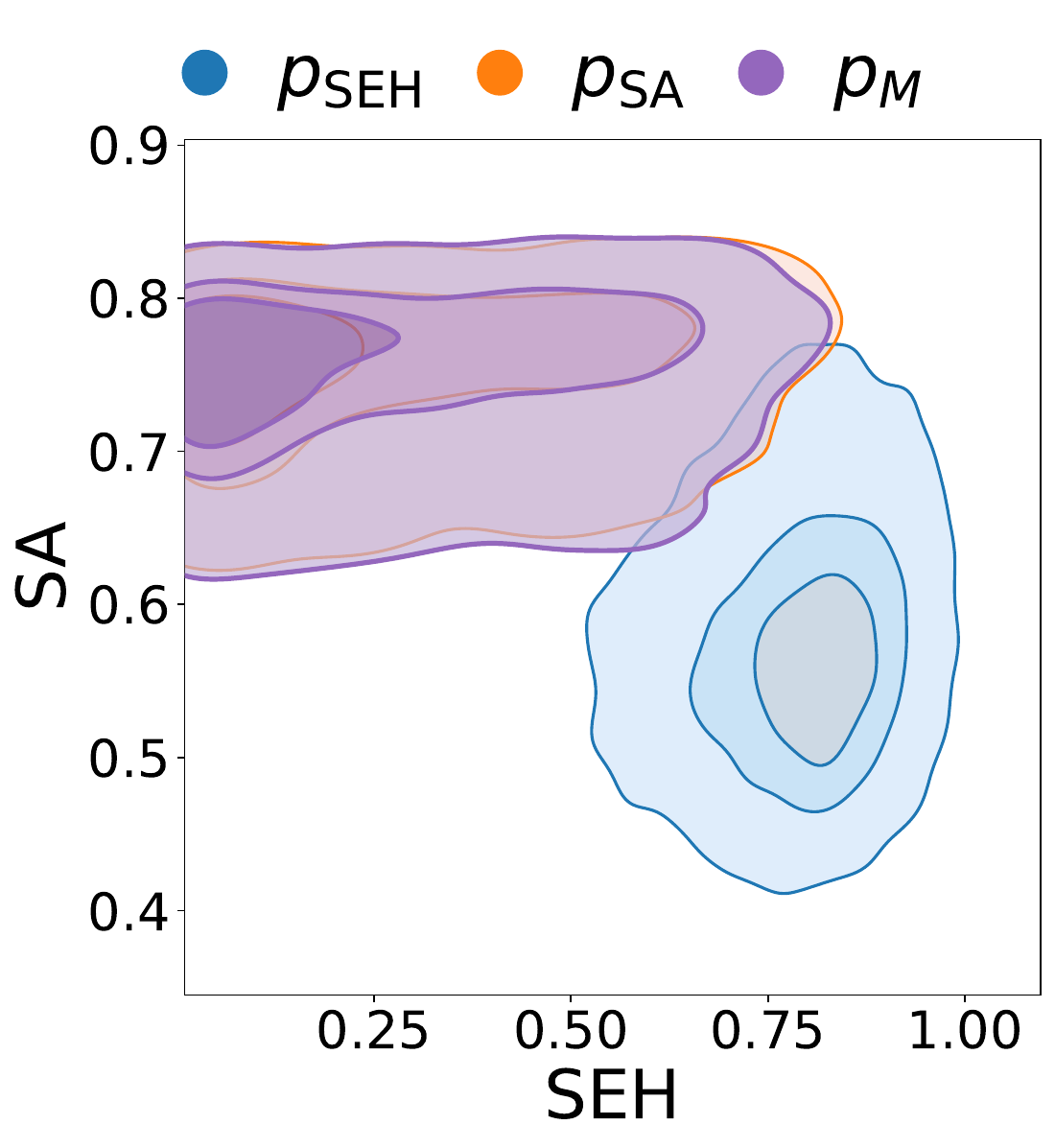}
        \label{fig:frag_contrast_sa_cls}
    }
    \subfigure[Ours]{
        \includegraphics[width=0.22\linewidth]{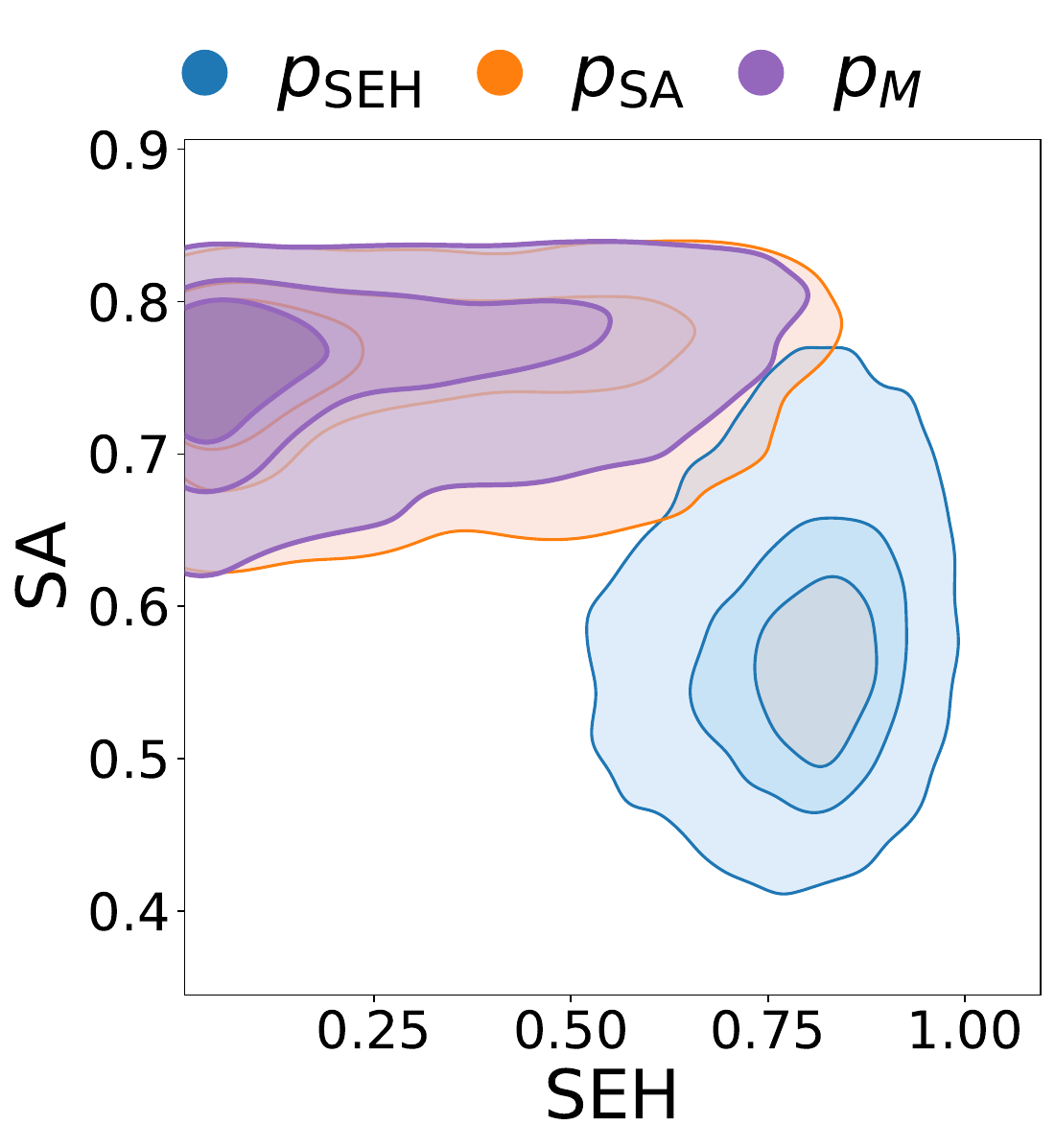}
        \label{fig:frag_contrast_sa}
    }
    \caption{Distributions $p_M$ induced by classifier-guidance and our mixing policy for fragment-based molecule generation. Each distribution is estimated from 5,000 samples.
    Base distributions $p_{\scriptscriptstyle\mathrm{SEH}}$ and $p_{\scriptscriptstyle\mathrm{SA}}$ are shown alongside the composed distribution.
    (a) Classifier-guidance on $p_{\scriptscriptstyle\mathrm{SEH}} \otimes p_{\scriptscriptstyle\mathrm{SA}}$.
    (b) Ours on $p_{\scriptscriptstyle\mathrm{SEH}} \otimes p_{\scriptscriptstyle\mathrm{SA}}$.
    (c) Classifier-guidance on $ p_{\scriptscriptstyle\mathrm{SA}} \ \halfcircle \ p_{\scriptscriptstyle\mathrm{SEH}}$.
    (d) Ours on $ p_{\scriptscriptstyle\mathrm{SA}} \ \halfcircle \ p_{\scriptscriptstyle\mathrm{SEH}}$.}
    \label{fig:frag_experiments_composition}
\end{figure*}

\subsection{Results}
\paragraph{Scalarization.}
\cref{tab:molecule_scalarization,tab:moo_hv} present the scalarization results for both fragment-based and atom-based molecule generation.
\cref{tab:molecule_scalarization} reports the average reward and diversity over preference vectors, and \cref{tab:moo_hv} reports the hypervolume of the generated molecules.
Across both domains, our method outperforms the preference-conditioned baselines on reward in most objective combinations while maintaining comparable diversity.
On hypervolume, our method also attains the best result in most objective combinations.
Overall, our approach matches or exceeds the performance of baselines that require joint training for each objective combination.
\cref{fig:frag_experiments_qual} shows scalarization with different preference vectors $\boldsymbol{\omega}$. As $\boldsymbol{\omega}$ shifts toward one objective, the induced distribution moves toward the corresponding high-reward region.
Additional results, including IGD+, Pareto Count, and an ensemble ablation across SubTB- and TB-trained base GFlowNets, are provided in \cref{tab:moo_igd,tab:moo_pareto,tab:ensemble_comparison}.

\paragraph{Logical operators.} 
\cref{tab:composition_baseline} reports the percentage of samples in the target bin for each composition of the models. 
For the harmonic mean operator, the target bin is where all composed objectives are high; for the contrast operator, it is where only the first objective is high and the remaining objectives are low. 
Our method consistently achieves high target bin percentages, outperforming classifier guidance.

\cref{tab:combined_bins_model_f,tab:combined_bins_db_f,tab:combined_bins_cls} present the detailed distribution of samples across all reward bins for the base GFlowNets as well as their compositions with our mixing policy and classifier guidance.
In single-objective models, samples concentrate on high values of their respective training rewards, with variation across the other rewards. 
For the harmonic mean operator, our mixing policy successfully concentrates samples on molecules scoring high in all composed objectives, and extending to three objectives further amplifies this effect. 
For the contrast operator, our mixing policy effectively emphasizes the first objective while suppressing the others, isolating regions distinctive to the first component. 
These results confirm that our mixing policy correctly induces the intended compositional behaviors across both operators.

\cref{fig:frag_experiments_composition} visualizes the reward distributions induced by the classifier-guidance and our mixing policies. \cref{fig:frag_hm_cls,fig:frag_hm} show the results on the harmonic mean of SEH and SA ($p_{\scriptscriptstyle\mathrm{SEH}} \otimes p_{\scriptscriptstyle\mathrm{SA}}$). 
Our mixing policy produces samples concentrated in regions where both rewards are high, whereas the classifier-guidance often produces high-SA but low-SEH samples. 
\cref{fig:frag_contrast_sa_cls,fig:frag_contrast_sa} present the results on the contrast of SA and SEH ($p_{\scriptscriptstyle\mathrm{SA}} \ \halfcircle \ p_{\scriptscriptstyle\mathrm{SEH}}$). 
Our mixing policy successfully isolates samples scoring high only on SA while suppressing the high-SEH region, as intended by the contrast operator. 
Conversely, the classifier guidance produces a distribution that nearly coincides with the base SA distribution ($p_{\scriptscriptstyle\mathrm{SA}}$), failing to achieve the subtraction effect.

\subsection{Analysis}
\paragraph{Inference costs.}
We evaluate inference costs in real-world experiments. 
We generate 1,000 samples and report the average generation time and trajectory length in \cref{tab:inference_cost}. 
For scalarization, our method takes approximately 8--40 milliseconds longer per sample than MOGFN, HN-GFN, and OP-GFN, due to the overhead of querying multiple base models.
The gap is larger on QM9 partly because our method generates longer trajectories. 
Despite this overhead, our method offers the flexibility to incorporate new objectives by training only the corresponding base GFlowNet while reusing existing ones.                    
For logical operators, our method is approximately 30--65 times faster than classifier-guidance, while both methods generate trajectories of similar length.
This is because classifier guidance must enumerate and evaluate all successor states at every step to compute the guidance term, whereas our method only requires computation at the current state. 
Notably, our implementation does not parallelize forward passes of base models.
These results demonstrate that our approach incurs minimal inference overhead.


\begin{table}[t]
\centering
\caption{Inference cost comparison on molecule generation tasks. We report the average generation time (ms) and trajectory length over 1,000 samples. We use preference vectors $\boldsymbol{\omega} \in \{(0.5,0.5),(0.33,0.33,0.34)\}$ for scalarization, and the harmonic mean for logical operators.}
\label{tab:inference_cost}
\resizebox{\linewidth}{!}{%
\begin{tabular}{cclrrrr}
\toprule
& \multirow{2}{*}{\rotatebox{90}{\# Obj.}} & & \multicolumn{2}{c}{Fragment-based} & \multicolumn{2}{c}{Atom-based (QM9)} \\
\cmidrule(lr){4-5} \cmidrule(lr){6-7}
&  &  & Time (ms) & Traj. Len. & Time (ms) & Traj. Len. \\
\midrule
\multirow{10}{*}{\rotatebox{90}{\textit{Scalarization}}}
& \multirow{5}{*}{2}
& MOGFN              & 21.62 & 26.0 & 38.41 & 31.5 \\
& & HN-GFN           & 21.35 & 26.0 & 38.90 & 32.8 \\
& & OP-GFN           & 21.78 & 26.0 & 50.51 & 41.7 \\ \cmidrule(lr){3-7}
& & Ours (Model $F$) & 29.89 & 26.0 & 54.68 & 30.2 \\
& & Ours (DB $F$)    & 32.79 & 26.0 & 57.88 & 30.1 \\ \cmidrule(lr){2-7}
& \multirow{5}{*}{3}
& MOGFN              & 21.94 & 25.9 & 40.13 & 32.8 \\
& & HN-GFN           & 21.49 & 26.0 & 29.93 & 24.7 \\
& & OP-GFN           & 22.10 & 26.0 & 41.30 & 37.0 \\ \cmidrule(lr){3-7}
& & Ours (Model $F$) & 33.57 & 26.0 & 62.95 & 31.5 \\
& & Ours (DB $F$)    & 37.36 & 26.0 & 68.30 & 31.8 \\
\midrule
\multirow{6}{*}{\rotatebox{90}{\textit{Logical}}}
& \multirow{3}{*}{2}
& Classifier guidance & 1900.71 & 25.9 & 1765.92 & 27.8 \\ \cmidrule(lr){3-7}
& & Ours (Model $F$)  & 29.62 & 26.0 & 55.88 & 31.4 \\
& & Ours (DB $F$)     & 32.47 & 26.0 & 57.55 & 30.3 \\ \cmidrule(lr){2-7}
& \multirow{3}{*}{3}
& Classifier guidance & 2107.68 & 25.9 & 1853.85 & 31.0 \\ \cmidrule(lr){3-7}
& & Ours (Model $F$)  & 32.50 & 26.0 & 63.09 & 32.6 \\
& & Ours (DB $F$)     & 36.34 & 26.0 & 66.08 & 31.7 \\
\bottomrule
\end{tabular}
}
\end{table}

\paragraph{Validity of generated molecules.}
We evaluate the validity of generated molecules on QM9. 
We generate 1,000 samples and report the percentage of valid molecules for four representative logical operations in \cref{tab:validity}. 
Whereas classifier guidance trains an auxiliary classifier jointly over all objectives and steers sampling with it as an external guidance signal, our mixing policy directly composes base GFlowNets that are each trained in isolation on a single objective. 
Despite this, our method maintains molecular validity, staying close to the base GFlowNets and performing on par with or better than classifier guidance. 
Overall, our mixing rule composes independently trained GFlowNets while preserving the molecular validity of the generated samples.

\begin{table}[t]
\centering
\caption{Validity of generated molecules on the atom-based molecule generation task. We report the percentage of valid molecules among 1,000 generated samples, averaged over 3 seeds (mean $\pm$ std). Note that the base models achieve validity 
($p_{\scriptscriptstyle\mathrm{GAP}}$: 98.5\textsubscript{$\pm$0.5}, 
$p_{\scriptscriptstyle\mathrm{SA}}$: 90.0\textsubscript{$\pm$14.5}, 
$p_{\scriptscriptstyle\mathrm{QED}}$: 97.2\textsubscript{$\pm$2.6}).}
\label{tab:validity}
\resizebox{\columnwidth}{!}{%
\begin{tabular}{lccc}
\toprule
 & \multicolumn{1}{c}{Classifier} & \multicolumn{1}{c}{Ours} & \multicolumn{1}{c}{Ours} \\
 & \multicolumn{1}{c}{guidance} & \multicolumn{1}{c}{(Model $F$)} & \multicolumn{1}{c}{(DB $F$)} \\
\midrule
$p_{\scriptscriptstyle\mathrm{GAP}} \otimes p_{\scriptscriptstyle\mathrm{SA}}$ & 93.8\textsubscript{$\pm$2.9} & 97.0\textsubscript{$\pm$2.5} & \textbf{97.8}\textsubscript{$\pm$1.4} \\
$p_{\scriptscriptstyle\mathrm{GAP}} \otimes p_{\scriptscriptstyle\mathrm{SA}} \otimes p_{\scriptscriptstyle\mathrm{QED}}$ & 90.4\textsubscript{$\pm$6.5} & 95.6\textsubscript{$\pm$2.6} & \textbf{97.2}\textsubscript{$\pm$1.8} \\
$p_{\scriptscriptstyle\mathrm{SA}} \ \halfcircle \ p_{\scriptscriptstyle\mathrm{GAP}}$ & 92.3\textsubscript{$\pm$10.3} & 98.2\textsubscript{$\pm$0.7} & \textbf{98.3}\textsubscript{$\pm$0.8} \\
$p_{\scriptscriptstyle\mathrm{SA}} \ \halfcircle \ p_{\scriptscriptstyle\mathrm{GAP}} \ \halfcircle \ p_{\scriptscriptstyle\mathrm{QED}}$ & 91.3\textsubscript{$\pm$12.2} & \textbf{92.3}\textsubscript{$\pm$11.0} & 90.9\textsubscript{$\pm$13.9} \\
\bottomrule
\end{tabular}}
\end{table}

\section{Conclusion}
In this work, we propose a method for composing pre-trained GFlowNets via a simple mixing rule over learned forward policies. For linear scalarization, we theoretically prove that our mixing rule exactly recovers the target composition distribution. For more general nonlinear operators, we empirically verify that our method accurately approximates the target distribution in high-density regions most relevant to generation.
Experiments on 2D grid and molecule generation tasks demonstrate that our method achieves competitive performance with existing baselines, requiring no additional training at the composition stage and incurring only minimal inference overhead. 
Our framework enables pre-trained GFlowNets to serve as reusable building blocks, composed to target diverse objective combinations at inference time.

\section*{Impact Statement}
This paper presents work whose goal is to advance the field of Machine Learning. The primary application of our method is molecular generation for drug discovery, where composing pre-trained GFlowNets can accelerate the exploration of candidates balancing multiple properties such as binding affinity, synthetic accessibility, and drug-likeness. 
As with any molecular generation technology, dual-use concerns exist; however, our method composes existing models rather than introducing fundamentally new capabilities, and the generation of candidate molecules remains only one step in a complex pipeline requiring synthesis and experimental validation. We encourage continued development of safety guidelines and best practices for responsible deployment of generative models in scientific discovery.

\section*{Acknowledgements}
This work was supported by the National Research Foundation of Korea (NRF) grant funded by the Korea government (MSIT) (RS-2024-00337955); by the Institute of Information \& Communications Technology Planning \& Evaluation (IITP) grants funded by the Korea government (MSIT) (RS-2024-00457882, Artificial Intelligence Research Hub Project; RS-2019-II191906, Artificial Intelligence Graduate School Program (POSTECH)); and by IITP under the Leading Generative AI Human Resources Development (IITP-2026-RS-2026-25546560) grant funded by the Korea government (MSIT).

\bibliography{references}

@inproceedings{lee2025enhancing,
  title={Enhancing Ligand Validity and Affinity in Structure-Based Drug Design with Multi-Reward Optimization},
  author={Lee, Seungbeom and Jo, Munsun and Ok, Jungseul and Kim, Dongwoo},
  booktitle={International Conference on Machine Learning},
  pages={33129--33142},
  year={2025},
  organization={PMLR}
}

@inproceedings{du2020compositional,
  title={Compositional visual generation with energy based models},
  author={Du, Yilun and Li, Shuang and Mordatch, Igor},
  booktitle={Advances in Neural Information Processing Systems},
  volume={33},
  pages={6637--6647},
  year={2020}
}

@article{hinton2002training,
  title={Training products of experts by minimizing contrastive divergence},
  author={Hinton, Geoffrey E},
  journal={Neural Computation},
  volume={14},
  number={8},
  pages={1771--1800},
  year={2002},
  publisher={MIT Press}
}

@inproceedings{hinton1999products,
  title={Products of experts},
  author={Hinton, Geoffrey E},
  booktitle={International Conference on Artificial Neural Networks},
  volume={1},
  pages={1--6},
  year={1999}
}

@inproceedings{du2021unsupervised,
  title={Unsupervised Learning of Compositional Energy Concepts},
  author={Du, Yilun and Li, Shuang and Sharma, Yash and Tenenbaum, Joshua B and Mordatch, Igor},
  booktitle={Advances in Neural Information Processing Systems},
  volume={34},
  pages={15608--15620},
  year={2021}
}

@inproceedings{liu2022compositional,
  title={Compositional Visual Generation with Composable Diffusion Models},
  author={Liu, Nan and Li, Shuang and Du, Yilun and Torralba, Antonio and Tenenbaum, Joshua B},
  booktitle={European Conference on Computer Vision},
  pages={423--439},
  year={2022},
  organization={Springer}
}

@inproceedings{wortsman2022model,
  title={Model Soups: Averaging Weights of Multiple Fine-tuned Models Improves Accuracy without Increasing Inference Time},
  author={Wortsman, Mitchell and Ilharco, Gabriel and Gadre, Samir Ya and Roelofs, Rebecca and Gontijo-Lopes, Raphael and Morcos, Ari S and Namkoong, Hongseok and Farhadi, Ali and Carmon, Yair and Kornblith, Simon and Schmidt, Ludwig},
  booktitle={International Conference on Machine Learning},
  pages={23965--23998},
  year={2022},
  organization={PMLR}
}

@inproceedings{rame2023rewarded,
  title={Rewarded Soups: Towards {Pareto}-Optimal Alignment by Interpolating Weights Fine-Tuned on Diverse Rewards},
  author={Ram{\'e}, Alexandre and Couairon, Guillaume and Shukor, Mustafa and Dancette, Corentin and Gaya, Jean-Baptiste and Soulier, Laure and Cord, Matthieu},
  booktitle={Advances in Neural Information Processing Systems},
  volume={36},
  year={2023}
}

@inproceedings{du2023reduce,
  title={Reduce, Reuse, Recycle: Compositional Generation with Energy-Based Diffusion Models and {MCMC}},
  author={Du, Yilun and Durkan, Conor and Strudel, Robin and Tenenbaum, Joshua B and Dieleman, Sander and Fergus, Rob and Sohl-Dickstein, Jascha and Doucet, Arnaud and Grathwohl, Will},
  booktitle={International Conference on Machine Learning},
  pages={8489--8510},
  year={2023},
  organization={PMLR}
}

@article{bengio2023gflownetfoundations,
  title={{GFlowNet} Foundations},
  author={Bengio, Yoshua and Lahlou, Salem and Deleu, Tristan and Hu, Edward J. and Tiwari, Mo and Bengio, Emmanuel},
  journal={Journal of Machine Learning Research},
  volume={24},
  number={210},
  pages={1--55},
  year={2023}
}

@article{jain2023gflownets,
  title={{GFlowNets} for {AI}-driven scientific discovery},
  author={Jain, Moksh and Deleu, Tristan and Hartford, Jason and Liu, Cheng-Hao and Hernandez-Garcia, Alex and Bengio, Yoshua},
  journal={Digital Discovery},
  volume={2},
  number={3},
  pages={557--577},
  year={2023},
  publisher={Royal Society of Chemistry}
}

@inproceedings{bengio2021flownetworkbasedgenerative,
  title={Flow Network based Generative Models for Non-Iterative Diverse Candidate Generation},
  author={Bengio, Emmanuel and Jain, Moksh and Korablyov, Maksym and Precup, Doina and Bengio, Yoshua},
  booktitle={Advances in Neural Information Processing Systems},
  volume={34},
  pages={27381--27394},
  year={2021}
}

@inproceedings{malkin2022trajectorybalance,
  title={Trajectory Balance: Improved Credit Assignment in {GFlowNets}},
  author={Malkin, Nikolay and Jain, Moksh and Bengio, Emmanuel and Sun, Chen and Bengio, Yoshua},
  booktitle={Advances in Neural Information Processing Systems},
  volume={35},
  pages={5955--5967},
  year={2022}
}

@inproceedings{madan2023learninggflownetsfrompartialepisodes,
  title={Learning {GFlowNets} from Partial Episodes for Improved Convergence and Stability},
  author={Madan, Kanika and Rector-Brooks, Jarrid and Korablyov, Maksym and Bengio, Emmanuel and Jain, Moksh and Nica, Andrei Cristian and Bosc, Tom and Bengio, Yoshua and Malkin, Nikolay},
  booktitle={International Conference on Machine Learning},
  pages={23467--23483},
  year={2023},
  organization={PMLR}
}

@inproceedings{garipov2023compositionalsculpting,
  title={Compositional Sculpting of Iterative Generative Processes},
  author={Garipov, Timur and De Peuter, Sebastiaan and Yang, Ge and Garg, Vikas and Kaski, Samuel and Jaakkola, Tommi},
  booktitle={Advances in Neural Information Processing Systems},
  volume={36},
  pages={12665--12702},
  year={2023}
}

@inproceedings{jain2023multiobjectivegflownets,
  title={Multi-Objective {GFlowNets}},
  author={Jain, Moksh and Raparthy, Sharath Chandra and Hern{\'a}ndez-Garc{\'i}a, Alex and Rector-Brooks, Jarrid and Bengio, Yoshua and Miret, Santiago and Bengio, Emmanuel},
  booktitle={International Conference on Machine Learning},
  pages={14631--14653},
  year={2023},
  organization={PMLR}
}

@inproceedings{zhu2023sample,
  title={Sample-efficient Multi-objective Molecular Optimization with {GFlowNets}},
  author={Zhu, Yiheng and Wu, Jialu and Hu, Chaowen and Yan, Jiahuan and Hsieh, Chang-Yu and Hou, Tingjun and Wu, Jian},
  booktitle={Advances in Neural Information Processing Systems},
  volume={36},
  year={2023}
}

@article{ertl2009sa,
  title        = {Estimation of synthetic accessibility score of drug-like molecules based on molecular complexity and fragment contributions},
  author       = {Ertl, Peter and Schuffenhauer, Ansgar},
  journal      = {Journal of Cheminformatics},
  year         = {2009},
  month        = jun,
  volume       = {1},
  number       = {1},
  pages        = {8},
  doi          = {10.1186/1758-2946-1-8},
  pmid         = {20298526},
  pmcid        = {PMC3225829}
}

@misc{landrum2010rdkit,
  author       = {Landrum, Greg},
  title        = {{RDKit}: Open-source cheminformatics},
  year         = {2010},
  howpublished = {\url{https://www.rdkit.org/}},
  note         = {Accessed: 2025-01-05}
}

@article{bickerton2012qed,
  author  = {Bickerton, G. Richard and Paolini, Gaia V. and Besnard, J{\'e}r{\'e}my and Muresan, Sorel and Hopkins, Andrew L.},
  title   = {Quantifying the chemical beauty of drugs},
  journal = {Nature Chemistry},
  year    = {2012},
  volume  = {4},
  number  = {2},
  pages   = {90--98}
}

@inproceedings{yun2019graphtransformernetworks,
  title={Graph Transformer Networks},
  author={Yun, Seongjun and Jeong, Minbyul and Kim, Raehyun and Kang, Jaewoo and Kim, Hyunwoo J.},
  booktitle={Advances in Neural Information Processing Systems},
  volume={32},
  pages={11960--11970},
  year={2019}
}

@misc{zhang2020mmgnn,
  author       = {Zhang, Shuo and Liu, Yang and Xie, Lei},
  title        = {Molecular Mechanics-driven Graph Neural Network with Multiplex Graph for Molecular Structures},
  year         = {2020},
  eprint       = {2011.07457},
  archivePrefix= {arXiv},
  primaryClass = {cs.LG},
  url          = {https://arxiv.org/abs/2011.07457}
}

@article{jamil2013literature,
  title={A literature survey of benchmark functions for global optimization problems},
  author={Jamil, Momin and Yang, Xin-She},
  journal={International Journal of Mathematical Modelling and Numerical Optimisation},
  volume={4},
  number={2},
  pages={150--194},
  year={2013},
  publisher={Inderscience Publishers}
}

@inproceedings{kingma2015adam,
  title={Adam: A Method for Stochastic Optimization},
  author={Kingma, Diederik P. and Ba, Jimmy},
  booktitle={International Conference on Learning Representations},
  year={2015}
}

@inproceedings{nair2010relu,
  title={Rectified Linear Units Improve Restricted Boltzmann Machines},
  author={Nair, Vinod and Hinton, Geoffrey E.},
  booktitle={International Conference on Machine Learning},
  pages={807--814},
  year={2010}
}

@inproceedings{maas2013leakyrelu,
  title={Rectifier Nonlinearities Improve Neural Network Acoustic Models},
  author={Maas, Andrew L. and Hannun, Awni Y. and Ng, Andrew Y.},
  booktitle={International Conference on Machine Learning Workshop on Deep Learning for Audio, Speech and Language Processing},
  year={2013}
}

@inproceedings{dhariwal2021diffusion,
  title={Diffusion Models Beat {GANs} on Image Synthesis},
  author={Dhariwal, Prafulla and Nichol, Alexander},
  booktitle={Advances in Neural Information Processing Systems},
  volume={34},
  pages={8780--8794},
  year={2021}
}

@inproceedings{ho2021classifierfree,
  title={Classifier-Free Diffusion Guidance},
  author={Ho, Jonathan and Salimans, Tim},
  booktitle={NeurIPS 2021 Workshop on Deep Generative Models and Downstream Applications},
  year={2021}
}

@inproceedings{chen2024orderpreserving,
  title={Order-Preserving {GFlowNets}},
  author={Chen, Yihang and Mauch, Lukas},
  booktitle={International Conference on Learning Representations},
  year={2024}
}

@inproceedings{jang2024graphgenerationk2trees,
    author = {Yunhui Jang and Dongwoo Kim and Sungsoo Ahn},
    title = {Graph Generation with $K^2$-trees},
    booktitle = {ICLR},
    year = {2024},
    url={https://arxiv.org/abs/2305.19125}, 
}

@inproceedings{huang2024protein,
  title={Protein-ligand interaction prior for binding-aware 3d molecule diffusion models},
  author={Huang, Zhilin and Yang, Ling and Zhou, Xiangxin and Zhang, Zhilong and Zhang, Wentao and Zheng, Xiawu and Chen, Jie and Wang, Yu and Cui, Bin and Yang, Wenming},
  booktitle={The Twelfth International Conference on Learning Representations},
  year={2024}
}

@article{qu2024molcraft,
  title={MolCRAFT: structure-based drug design in continuous parameter space},
  author={Qu, Yanru and Qiu, Keyue and Song, Yuxuan and Gong, Jingjing and Han, Jiawei and Zheng, Mingyue and Zhou, Hao and Ma, Wei-Ying},
  journal={arXiv preprint arXiv:2404.12141},
  year={2024}
}

@article{guan20233d,
  title={3d equivariant diffusion for target-aware molecule generation and affinity prediction},
  author={Guan, Jiaqi and Qian, Wesley Wei and Peng, Xingang and Su, Yufeng and Peng, Jian and Ma, Jianzhu},
  journal={arXiv preprint arXiv:2303.03543},
  year={2023}
}

@inproceedings{jin2020hierarchical,
  title={Hierarchical generation of molecular graphs using structural motifs},
  author={Jin, Wengong and Barzilay, Regina and Jaakkola, Tommi},
  booktitle={International conference on machine learning},
  pages={4839--4848},
  year={2020},
  organization={PMLR}
}

@inproceedings{jo2022score,
  title={Score-based generative modeling of graphs via the system of stochastic differential equations},
  author={Jo, Jaehyeong and Lee, Seul and Hwang, Sung Ju},
  booktitle={International conference on machine learning},
  pages={10362--10383},
  year={2022},
  organization={PMLR}
}

@article{liao2019efficient,
  title={Efficient graph generation with graph recurrent attention networks},
  author={Liao, Renjie and Li, Yujia and Song, Yang and Wang, Shenlong and Nash, Charlie and Hamilton, Will and Duvenaud, David K and Urtasun, Raquel and Zemel, Richard},
  journal={Advances in neural information processing systems},
  volume={32},
  year={2019}
}

@inproceedings{martinkus2022spectre,
  title={Spectre: Spectral conditioning helps to overcome the expressivity limits of one-shot graph generators},
  author={Martinkus, Karolis and Loukas, Andreas and Perraudin, Nathana{\"e}l and Wattenhofer, Roger},
  booktitle={International Conference on Machine Learning},
  pages={15159--15179},
  year={2022},
  organization={PMLR}
}

@article{zhou2024unifying,
  title={Unifying generation and prediction on graphs with latent graph diffusion},
  author={Zhou, Cai and Wang, Xiyuan and Zhang, Muhan},
  journal={Advances in Neural Information Processing Systems},
  volume={37},
  pages={61963--61999},
  year={2024}
}

@article{ingraham2019generative,
  title={Generative models for graph-based protein design},
  author={Ingraham, John and Garg, Vikas and Barzilay, Regina and Jaakkola, Tommi},
  journal={Advances in neural information processing systems},
  volume={32},
  year={2019}
}

@article{truong2023poet,
  title={{PoET}: A generative model of protein families as sequences-of-sequences},
  author={Truong Jr., Timothy F. and Bepler, Tristan},
  journal={Advances in Neural Information Processing Systems},
  volume={36},
  pages={77379--77415},
  year={2023}
}

@article{koh2025ai,
  title={AI-driven protein design},
  author={Koh, Huan Yee and Zheng, Yizhen and Yang, Madeleine and Arora, Rohit and Webb, Geoffrey I and Pan, Shirui and Li, Li and Church, George M},
  journal={Nature Reviews Bioengineering},
  volume={3},
  number={12},
  pages={1034--1056},
  year={2025},
  publisher={Nature Publishing Group UK London}
}

@article{harris2023benchmarking,
  title={Benchmarking generated poses: How rational is structure-based drug design with generative models?},
  author={Harris, Charles and Didi, Kieran and Jamasb, Arian R and Joshi, Chaitanya K and Mathis, Simon V and Lio, Pietro and Blundell, Tom},
  journal={arXiv preprint arXiv:2308.07413},
  year={2023}
}

@article{ran2025mollm,
  title={MOLLM: Multi-Objective Large Language Model for Molecular Design--Optimizing with Experts},
  author={Ran, Nian and Wang, Yue and Allmendinger, Richard},
  journal={arXiv preprint arXiv:2502.12845},
  year={2025}
}

@article{nouhi2022multi,
  title={Multi-objective material generation algorithm (MOMGA) for optimization purposes},
  author={Nouhi, Behnaz and Khodadadi, Nima and Azizi, Mahdi and Talatahari, Siamak and Gandomi, Amir H},
  journal={IEEE Access},
  volume={10},
  pages={107095--107115},
  year={2022},
  publisher={IEEE}
}

@book{sutton1998introduction,
  title={Reinforcement Learning: An Introduction},
  author={Sutton, Richard S and Barto, Andrew G},
  year={1998},
  publisher={MIT Press}
}

@article{schulman2017proximal,
  title={Proximal policy optimization algorithms},
  author={Schulman, John and Wolski, Filip and Dhariwal, Prafulla and Radford, Alec and Klimov, Oleg},
  journal={arXiv preprint arXiv:1707.06347},
  year={2017}
}

@inproceedings{schulman2015trust,
  title={Trust region policy optimization},
  author={Schulman, John and Levine, Sergey and Abbeel, Pieter and Jordan, Michael and Moritz, Philipp},
  booktitle={International conference on machine learning},
  pages={1889--1897},
  year={2015},
  organization={PMLR}
}

@inproceedings{ishibuchi2015modified,
  title={Modified Distance Calculation in Generational Distance and Inverted Generational Distance},
  author={Ishibuchi, Hisao and Masuda, Hiroyuki and Tanigaki, Yuki and Nojima, Yusuke},
  booktitle={Evolutionary Multi-Criterion Optimization (EMO)},
  pages={110--125},
  year={2015}
}
\bibliographystyle{icml2026}

\clearpage
\appendix
\onecolumn

\appendix

\renewcommand{\thefigure}{A\arabic{figure}}
\setcounter{figure}{0}
\renewcommand{\thetable}{A\arabic{table}}
\setcounter{table}{0}

\section{Theoretical Details}

\subsection{Exact Realization for Scalarization}
\label{sec:scalarization_proof}

We prove that for scalarization with $\beta=1$, our mixing rule exactly realizes the target distribution.

\paragraph{Setup.}
Let $(\mathcal{S}, \mathcal{A})$ be the directed acyclic state graph of a GFlowNet with initial state $s_0$, sink state $s_f$, and terminating states $\mathcal{X} \subset \mathcal{S}$.
For each objective $i \in [k]$, we have a pre-trained GFlowNet with forward policy $p_{i,F}(\cdot|\cdot)$, partition function $Z_i > 0$, and terminating state distribution
\begin{equation}
p_i(x) = \frac{R_i(x)}{Z_i}, \qquad R_i(x) \geq 0,\ x \in \mathcal{X}.
\end{equation}
We write $u_i(s)$ for the reaching probability of $s$ under $p_{i,F}$, satisfying the recursion~\cref{eq:u_recursion}.
Let the scalarization weights be $\omega_i \geq 0$ and define
\begin{equation}
v_i = \frac{\omega_i Z_i}{\sum_{j=1}^k \omega_j Z_j}, \qquad \sum_{i=1}^k v_i = 1.
\label{eq:app_def_v}
\end{equation}

\paragraph{Target terminating state distribution.}
The mixture terminating state distribution satisfies
\begin{equation}
p_M(x) := \sum_{i=1}^k v_i p_i(x) = \frac{\sum_{i=1}^k \omega_i R_i(x)}{\sum_{j=1}^k \omega_j Z_j} \propto \sum_{i=1}^k \omega_i R_i(x).
\label{eq:app_terminal_mixture}
\end{equation}

\paragraph{Composed forward policy.}
Define the mixed reaching probability
\begin{equation}
u_M(s) := \sum_{i=1}^k v_i u_i(s),
\label{eq:app_def_uM}
\end{equation}
and the composed forward policy
\begin{equation}
p_{M,F}(s'|s) := \sum_{i=1}^k \frac{v_i u_i(s)}{u_M(s)}\, p_{i,F}(s'|s) \quad \text{for all } s \in \mathcal{S}\setminus \{s_f\}.
\label{eq:app_def_pMF}
\end{equation}
Note that~\cref{eq:app_def_pMF} is a special case of the general mixing policy~\cref{eq:mixing_policy} for scalarization with $\beta = 1$: since $\mathcal{G}$ reduces to a weighted sum, $N_M(s) = (\sum_j \omega_j Z_j)\, u_M(s)$, and this constant factor cancels in the policy.

We now prove that $p_{M,F}$ is a valid policy and induces the desired mixture distribution.

\begin{lemma}[Normalization]
\label{lem:app_normalization}
For any $s \neq s_f$, $\sum_{s':(s \to s') \in \mathcal{A}} p_{M,F}(s'|s) = 1$.
\end{lemma}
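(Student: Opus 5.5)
The plan is to sum the definition~\cref{eq:app_def_pMF} over the children $s'$ of $s$ and exchange the two finite sums. For fixed $s \neq s_f$ the coefficients $v_i u_i(s)/u_M(s)$ do not depend on $s'$, so they can be pulled out of the inner sum:
\begin{equation*}
\sum_{s':(s\to s')\in\mathcal{A}} p_{M,F}(s'|s) = \sum_{i=1}^k \frac{v_i u_i(s)}{u_M(s)} \sum_{s':(s\to s')\in\mathcal{A}} p_{i,F}(s'|s).
\end{equation*}
Each $p_{i,F}(\cdot|s)$ is a distribution over the children of $s$, so every inner sum equals $1$. The expression then reduces to $\sum_i v_i u_i(s)/u_M(s)$, and this equals $1$ by the definition~\cref{eq:app_def_uM} of $u_M(s)$.

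The main obstacle is well-definedness. The quotient requires $u_M(s) > 0$, and this can fail at states that no component policy reaches; for instance, $v_i = 0$ whenever $\omega_i = 0$. I would handle this by restricting the lemma to states with $u_M(s) > 0$, which are exactly the states the mixed policy can ever visit. At the remaining states $p_{M,F}(\cdot|s)$ can be set to an arbitrary distribution, say $p_{1,F}(\cdot|s)$, without affecting anything downstream.

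To make this precise, I would show by induction along a topological order, using~\cref{eq:u_recursion}, that any state reachable with positive probability under $p_{M,F}$ has $u_M(s) > 0$. The step is that if $p_{M,F}(s'|s) > 0$, then some $i$ has $v_i u_i(s) p_{i,F}(s'|s) > 0$, which gives $u_i(s') > 0$ and hence $u_M(s') > 0$. The base case is $u_M(s_0) = \sum_i v_i = 1 > 0$, which holds because $Z_i > 0$ and we may assume $\sum_j \omega_j > 0$.

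With this convention in place, normalization holds at every nonsink state.
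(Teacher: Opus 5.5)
Your proof is correct and is essentially the paper's argument: pull the $s'$-independent weights $v_i u_i(s)/u_M(s)$ out of the sum, use that each $p_{i,F}(\cdot|s)$ sums to one, and conclude from the definition of $u_M(s)$. Your extra handling of states with $u_M(s)=0$ is a sound refinement the paper omits, since its proof divides by $u_M(s)$ without comment. You restrict to the states the mixed policy can reach and fix an arbitrary policy at the others.
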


\begin{proof}
Using~\cref{eq:app_def_pMF} and the fact that each $p_{i,F}(\cdot|s)$ is a probability distribution,
\begin{align*}
\sum_{s'} p_{M,F}(s'|s)
&= \sum_{i=1}^k \frac{v_i u_i(s)}{u_M(s)} \underbrace{\sum_{s'} p_{i,F}(s'|s)}_{=1}
= \frac{\sum_{i=1}^k v_i u_i(s)}{u_M(s)} = 1,
\end{align*}
where the last equality uses~\cref{eq:app_def_uM}.
\end{proof}

\begin{lemma}[Mixed reaching probabilities]
\label{lem:app_mixed_reach}
Let $u_M$ be defined by~\cref{eq:app_def_uM}. Then $u_M$ coincides with the reaching probability induced by $p_{M,F}$:
\begin{align*}
u_M(s_0) &= 1, \\
u_M(s) &= \sum_{s_*:(s_* \to s) \in \mathcal{A}} u_M(s_*)\, p_{M,F}(s|s_*) \quad \forall s \in \mathcal{S}\setminus \{s_0\}.
\end{align*}
\end{lemma}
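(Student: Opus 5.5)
We verify the two defining properties of the reaching probability of $p_{M,F}$ directly from the definitions \cref{eq:app_def_uM} and \cref{eq:app_def_pMF}. Since the state graph is a DAG, the recursion~\cref{eq:u_recursion} together with the boundary condition $u(s_0)=1$ determines the reaching probability uniquely. This can be seen by induction along a topological order, or directly from the path-sum definition~\cref{eq:u_def}. So it suffices to show that $u_M$ satisfies the initial condition and the recursion under $p_{M,F}$.

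For the initial condition, each component satisfies $u_i(s_0)=1$. Hence $u_M(s_0)=\sum_i v_i u_i(s_0)=\sum_i v_i=1$ by \cref{eq:app_def_v}.

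For the recursion, fix $s\neq s_0$ and expand the right-hand side using \cref{eq:app_def_pMF}. For each parent $s_*$, the factor $u_M(s_*)$ cancels the denominator in $p_{M,F}(s|s_*)$:
\begin{equation*}
u_M(s_*)\,p_{M,F}(s|s_*) = \sum_{i=1}^k v_i u_i(s_*)\,p_{i,F}(s|s_*).
\end{equation*}
Summing over the parents $s_*$ of $s$ and exchanging the two finite sums gives $\sum_i v_i \sum_{s_*} u_i(s_*)p_{i,F}(s|s_*)$. By the componentwise recursion~\cref{eq:u_recursion}, this equals $\sum_i v_i u_i(s)=u_M(s)$.

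The only delicate point is the cancellation when $u_M(s_*)=0$, where $p_{M,F}(\cdot|s_*)$ as written in \cref{eq:app_def_pMF} is undefined. I would handle it by noting that $u_M(s_*)=0$ together with $v_i\ge 0$ forces $v_iu_i(s_*)=0$ for every $i$. In that case both sides of the displayed identity vanish, for any choice of the policy at such unreachable states, for instance any fixed distribution such as $p_{1,F}(\cdot|s_*)$. The identity therefore holds termwise in all cases, and the recursion follows.

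Finally, uniqueness gives $u_M$ as the reaching probability of $p_{M,F}$. Concretely, let $\hat u$ denote the true reaching probability of $p_{M,F}$. Induction on a topological order from $s_0$ shows $\hat u=u_M$, since both functions satisfy the same initial value and the same recursion with the same coefficients.
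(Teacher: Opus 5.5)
Your proof is correct and follows the paper's argument: the same base case $u_M(s_0)=\sum_i v_i=1$, and the same cancellation of $u_M(s_*)$ against the denominator of $p_{M,F}(s|s_*)$, followed by exchanging the sums and applying the componentwise recursion. The paper leaves two details implicit that you supply: the treatment of states with $u_M(s_*)=0$, and the explicit uniqueness step (induction along a topological order). Together they make the claim that $u_M$ \emph{is} the reaching probability of $p_{M,F}$ fully rigorous.
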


\begin{proof}
The base case $u_M(s_0) = \sum_i v_i u_i(s_0) = \sum_i v_i = 1$ follows from $u_i(s_0) = 1$ and~\cref{eq:app_def_v}.
For $s \neq s_0$, substitute~\cref{eq:app_def_pMF}:
\begin{align*}
&\sum_{s_*} u_M(s_*)\, p_{M,F}(s|s_*) \\
&= \sum_{s_*} \Big(\sum_{j=1}^k v_j u_j(s_*)\Big) \Big(\sum_{i=1}^k \frac{v_i u_i(s_*)}{u_M(s_*)}\, p_{i,F}(s|s_*)\Big) \\
&= \sum_{i=1}^k v_i \sum_{s_*} u_i(s_*)\, p_{i,F}(s|s_*)
= \sum_{i=1}^k v_i u_i(s)
= u_M(s),
\end{align*}
using the reaching probability recursion~\cref{eq:u_recursion} in the penultimate equality.
\end{proof}

\begin{lemma}[State distribution factorization]
\label{lem:app_state_factorization}
For each $i$, the state probability satisfies
\begin{align}
p_i(s) &= u_i(s)\, \rho_i(s) \quad \text{and} \notag \\
p_M(s) &= u_M(s)\, \rho_M(s),
\label{eq:app_state_factorization}
\end{align}
where $\rho_i(s)$ (resp.\ $\rho_M(s)$) is the probability of terminating the trajectory when at state $s$ under $p_{i,F}$ (resp.\ $p_{M,F}$).
\end{lemma}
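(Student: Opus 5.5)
The plan is to derive the factorization directly from the Markov structure of trajectory generation, exactly as \cref{eq:p_terminal} was obtained for terminating states. Here $p_i(s)$ denotes the probability that a trajectory sampled under $p_{i,F}$ ends at $s$, that is, that it visits $s$ and then takes the edge $s \to s_f$. We define $\rho_i(s) := p_{i,F}(s_f \mid s)$ when $s \in \mathcal{X}$ and $\rho_i(s) := 0$ otherwise. With this convention $p_i(s) = 0$ for non-terminating $s$, so both sides vanish there. The same definitions apply to the mixture, with $\rho_M(s) := p_{M,F}(s_f \mid s)$.

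\textbf{Step 1: path decomposition.} Every complete trajectory ending at $s$ has the form $\tau = (s_0 \to \cdots \to s \to s_f)$. Because the graph is a DAG, $s$ occurs at most once along it, so the trajectory splits uniquely into a prefix in $\mathcal{T}_{s_0,s}$ followed by the terminating edge. Its probability under $p_{i,F}$ is therefore the product of the prefix probability and $p_{i,F}(s_f\mid s)$. Summing over prefixes and using definition \cref{eq:u_def}, we get $p_i(s) = u_i(s)\,\rho_i(s)$.

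\textbf{Step 2: the mixture.} For the mixture we repeat Step 1 with $p_{M,F}$. This is a legitimate policy by \cref{lem:app_normalization}. The step gives $p_M(s) = \tilde u_M(s)\,\rho_M(s)$, where $\tilde u_M$ is the reaching probability \emph{induced} by $p_{M,F}$ via \cref{eq:u_def}.

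\textbf{Step 3: identify $\tilde u_M$ with $u_M$.} It remains to show that $\tilde u_M$ equals the $u_M$ of \cref{eq:app_def_uM}. We argue by induction along a topological order of the DAG. Both functions satisfy the base case, value $1$ at $s_0$. Both also satisfy the recursion \cref{eq:u_recursion} driven by $p_{M,F}$: for $\tilde u_M$ this holds by definition, and for $u_M$ it is \cref{lem:app_mixed_reach}. Since the recursion determines the values uniquely on a DAG, $\tilde u_M = u_M$.

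This identification is the only step with real content; the rest is bookkeeping. One caveat is states with $u_M(s)=0$, where $p_{M,F}(\cdot\mid s)$ in \cref{eq:app_def_pMF} is undefined. At such states we can fix any policy, since they are never reached and contribute zero to both sides.
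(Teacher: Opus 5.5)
Your proposal is correct and follows the paper's route: the paper's proof simply cites the standard factorization (probability of reaching $s$ times probability of terminating from $s$) and invokes \cref{lem:app_normalization} to justify applying it to $p_{M,F}$. Your Step~3 identifies the reaching probability induced by $p_{M,F}$ with $u_M$ via \cref{lem:app_mixed_reach} and uniqueness of the recursion on the DAG, and together with your remark on states where $u_M(s)=0$ it spells out a point the paper's one-line argument leaves implicit.
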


\begin{proof}
This is the standard GFlowNet factorization: the probability of being at $s$ equals the probability of reaching $s$ times the probability of terminating from $s$. The equality for $p_M$ follows since $p_{M,F}$ is a valid forward policy by Lemma~\ref{lem:app_normalization}.
\end{proof}

\begin{theorem}[Correctness of the composed forward policy]
\label{thm:app_correctness}
Let $v_i$ be given by~\cref{eq:app_def_v} and $p_{M,F}$ by~\cref{eq:app_def_pMF}. Then, for all states $s$,
\begin{equation}
p_M(s) = \sum_{i=1}^k v_i p_i(s).
\label{eq:app_state_mixture}
\end{equation}
Consequently, the induced terminating state distribution satisfies $p_M(x) = \sum_i v_i p_i(x) \propto \sum_i \omega_i R_i(x)$ for $x \in \mathcal{X}$.
\end{theorem}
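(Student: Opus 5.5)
Using Lemma~\ref{lem:app_state_factorization}, $p_M(s)$ equals $u_M(s)\rho_M(s)$, with $u_M$ the reaching probability of $p_{M,F}$. Lemma~\ref{lem:app_mixed_reach} says this reaching probability is exactly $\sum_i v_i u_i(s)$. The remaining task is therefore to identify the termination factor $\rho_M(s) = p_{M,F}(s_f\mid s)$. For $s \in \mathcal{X}$, take $s'=s_f$ in the definition~\cref{eq:app_def_pMF}:
\begin{equation*}
\rho_M(s) = p_{M,F}(s_f\mid s) = \sum_{i=1}^k \frac{v_i u_i(s)}{u_M(s)}\, \rho_i(s).
\end{equation*}
Multiplying by $u_M(s)$ cancels the normalizer:
\begin{equation*}
p_M(s) = u_M(s)\rho_M(s) = \sum_{i=1}^k v_i u_i(s)\rho_i(s) = \sum_{i=1}^k v_i p_i(s),
\end{equation*}
where the last step is again Lemma~\ref{lem:app_state_factorization}. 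For $s\notin\mathcal{X}$ there is no terminating edge, so $p_M(s)=0=p_i(s)$ for every $i$ and the identity holds trivially.

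The one delicate step is the division by $u_M(s)$. If $u_M(s)=0$, then $u_i(s)=0$ for every $i$ with $v_i>0$. In that case $p_{M,F}(\cdot\mid s)$ may be defined arbitrarily: it never affects anything, because $s$ is reached with probability zero under $p_{M,F}$. Both sides of~\cref{eq:app_state_mixture} then vanish, since each $p_i(s)=u_i(s)\rho_i(s)=0$ for $v_i>0$ and terms with $v_i=0$ contribute nothing. With this convention fixed, Lemma~\ref{lem:app_normalization} and Lemma~\ref{lem:app_mixed_reach} still apply, because the recursion only involves products $u_M(s_*)p_{M,F}(s\mid s_*)$, and these are zero whenever $u_M(s_*)=0$.

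Restricting to $x\in\mathcal{X}$ and substituting~\cref{eq:app_terminal_mixture} gives $p_M(x)=\sum_i v_i p_i(x) = \sum_i \omega_i R_i(x)/\sum_j \omega_j Z_j$, which is the consequence in the statement. This in turn yields Proposition~\ref{prop:exact_linear}, since~\cref{eq:app_def_pMF} coincides with the mixing policy~\cref{eq:mixing_policy} up to the constant factor that cancels in $N_M$.
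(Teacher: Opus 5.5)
Your proof is correct and follows the paper's argument essentially step for step: you factor $p_M(s)=u_M(s)\rho_M(s)$, read off $\rho_M(s)=p_{M,F}(s_f\mid s)$ from \cref{eq:app_def_pMF}, cancel $u_M(s)$, and refactor each term using Lemma~\ref{lem:app_state_factorization}. The route is the same as the paper's; your additions add rigor: the explicit appeal to Lemma~\ref{lem:app_mixed_reach} to identify $u_M$ as the reaching probability of $p_{M,F}$, the check for $s\notin\mathcal{X}$, and the convention for states with $u_M(s)=0$, where the paper silently divides by zero.
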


\begin{proof}
By Lemma~\ref{lem:app_state_factorization} and the definition of $p_{M,F}$,
\begin{align*}
p_M(s)
&= u_M(s)\, \rho_M(s) \\
&= u_M(s) \sum_{i=1}^k \frac{v_i u_i(s)}{u_M(s)}\, \rho_i(s) \\
&= \sum_{i=1}^k v_i u_i(s)\, \rho_i(s) \\
&= \sum_{i=1}^k v_i p_i(s),
\end{align*}
where the second equality follows from applying~\cref{eq:app_def_pMF} to the termination probability, and the last step uses~\cref{eq:app_state_factorization} for each $i$.
Evaluating~\cref{eq:app_state_mixture} on terminating states $x \in \mathcal{X}$ yields $p_M(x) = \sum_i v_i p_i(x)$; substituting the definitions proves the proportionality claim via~\cref{eq:app_terminal_mixture}.
\end{proof}

\paragraph{Remarks.}
(i) The result holds for any non-negative $\omega_i$; they need not sum to $1$.
(ii) If estimated partition functions $\widehat{Z}_i$ are used in place of $Z_i$, the sampler targets the scalarized reward up to a global rescaling, which does not affect the induced distribution.
(iii) The support of $p_M$ equals the union of supports of the $p_i$; no extra coverage assumptions are required beyond a shared state graph.
(iv) This exactness result holds specifically for scalarization with $\beta = 1$.
For general nonlinear composition operators or $\beta \neq 1$, the distortion factor $\delta(x) = u_M(x)/N_M(x)$ introduced in \cref{sec:analysis} characterizes the gap between the realized and target distributions.

\subsection{Derivation of Reaching Probability}
\label{sec:reaching_prob_derivation}

We derive an expression for the reaching probability and describe two equivalent routes for obtaining the state flow $F_i(s)$ that make our framework compatible with every standard GFlowNet training objective.

\paragraph{Training objectives and model parameterization.}
GFlowNet training objectives differ in what quantities they parameterize.
Flow Matching (FM)~\citep{bengio2021flownetworkbasedgenerative} parameterizes the state flow $F(s)$ for all states $s \in \mathcal{S}$, from which the forward policy $p_F$ is derived.
Detailed Balance (DB)~\citep{bengio2023gflownetfoundations} parameterizes the forward policy $p_F$ and backward policy $p_B$ alongside $F(s)$.
In both cases, the partition function is $Z = F(s_0)$.
Sub-Trajectory Balance (SubTB)~\citep{madan2023learninggflownetsfrompartialepisodes} similarly parameterizes the state flow $F(s)$, $p_F$, and $p_B$.
In contrast, Trajectory Balance (TB)~\citep{malkin2022trajectorybalance} only parameterizes $p_F$, $p_B$, and $Z$, without explicitly learning the state flow $F(s)$.
Our mixing rule requires the reaching probability $u_i(s)$, which can be computed from $F_i(s)$ and $Z_i$ (as we derive below). 
The problem thus reduces to obtaining $F_i(s)$. We present two routes: a \textbf{Model $F$} route that directly reads $F_i(s)$ from a learned state flow (available for FM, DB, and SubTB), and a \textbf{DB $F$} route that reconstructs $F_i(s)$ recursively from $p_F$ and $p_B$ via the detailed balance identity, making our framework compatible with any standard objective including TB.

\subsubsection{Derivation of $u_i(s) = F_i(s)/Z_i$}
\label{sec:u_F_Z_derivation}
For any state $s \in \mathcal{S}$, the state flow $F_i(s)$ represents the total flow passing through $s$:
\begin{equation}
    F_i(s) = \sum_{\tau : s \in \tau} F_i(\tau),
\end{equation}
where the sum is over all complete trajectories $\tau$ that pass through $s$, and $F_i(\tau)$ is the flow along trajectory $\tau$.

Since the total flow equals the partition function, $\sum_{\tau} F_i(\tau) = Z_i$, we can write:
\begin{equation}
    \frac{F_i(s)}{Z_i} = \frac{\sum_{\tau : s \in \tau} F_i(\tau)}{\sum_{\tau} F_i(\tau)} = \sum_{\tau : s \in \tau} \frac{F_i(\tau)}{Z_i} = \sum_{\tau : s \in \tau} p_i(\tau),
\end{equation}
where $p_i(\tau) = F_i(\tau)/Z_i$ is the probability of trajectory $\tau$.

By definition~\cref{eq:u_def}, $u_i(s) = \sum_{\tau \in \mathcal{T}_{s_0,s}} p_i(\tau)$, where $\mathcal{T}_{s_0,s}$ denotes the set of trajectories from $s_0$ to $s$.
Since the DAG structure ensures that every trajectory starting from $s$ eventually reaches $s_f$ with probability one, this equals $\sum_{\tau : s \in \tau} p_i(\tau)$.
Therefore,
\begin{equation}
    u_i(s) = \frac{F_i(s)}{Z_i} \quad \text{for all } s \in \mathcal{S}.
    \label{eq:u_F_Z}
\end{equation}
Given this identity, computing $u_i(s)$ reduces to obtaining $F_i(s)$, which we recover through the two routes below.

\subsubsection{Model $F$: direct read from the learned state flow}
\label{sec:model_f_derivation}
When the base GFlowNet is trained with FM, DB, or SubTB, the state flow $F_i(s)$ is an explicit output of the learned model for every state $s \in \mathcal{S}$, and $Z_i = F_i(s_0)$. Plugging these learned quantities directly into~\cref{eq:u_F_Z} yields $u_i(s)$, requiring no further computation beyond a forward pass of the flow network.

\subsubsection{DB $F$: recursive computation via the detailed balance identity}
\label{sec:db_f_derivation}
The detailed balance identity holds at convergence for any well-trained GFlowNet, regardless of the training objective, allowing us to recover $u_i(s)$ from the forward and backward policies alone. For every edge $s \to s'$:
\begin{equation}
    F_i(s)\, p_{i,F}(s'|s) = F_i(s')\, p_{i,B}(s|s'),
    \label{eq:db_identity}
\end{equation}
which can be rearranged as
\begin{equation}
    F_i(s') = F_i(s) \cdot \frac{p_{i,F}(s'|s)}{p_{i,B}(s|s')}.
\end{equation}
Unrolling this recursion along any trajectory $s_0 \to s_1 \to \cdots \to s_t$ starting from $F_i(s_0) = Z_i$ gives
\begin{equation}
    F_i(s_t) = Z_i \prod_{j=1}^{t} \frac{p_{i,F}(s_j|s_{j-1})}{p_{i,B}(s_{j-1}|s_j)}.
    \label{eq:db_f_F}
\end{equation}
Substituting~\cref{eq:db_f_F} into~\cref{eq:u_F_Z} yields a closed-form expression for the reaching probability that depends only on $p_{i,F}$ and $p_{i,B}$:
\begin{equation}
    u_i(s_t) = \frac{F_i(s_t)}{Z_i} = \prod_{j=1}^{t} \frac{p_{i,F}(s_j|s_{j-1})}{p_{i,B}(s_{j-1}|s_j)}.
    \label{eq:db_f_u}
\end{equation}
Because~\cref{eq:db_f_u} references neither $F_i(\cdot)$ nor $Z_i$ on the right-hand side, it applies to any base GFlowNet trained with a standard objective, including TB.
Moreover, the product form admits on-the-fly computation during trajectory rollout: as the sampler traverses $s_0, s_1, \ldots, s_t$, the ratio $p_{i,F}(s_j|s_{j-1})/p_{i,B}(s_{j-1}|s_j)$ is evaluated at each step and accumulated multiplicatively, yielding $u_i(s_t)$ with no additional model queries beyond those already performed for sampling.

\subsection{Derivation of $L_1$ Decomposition}
\label{app:l1_decom}

We derive the decomposition of the $L_1$ distance between the induced distribution $p_M(x)$ and the target distribution $p_M^*(x)$ (\cref{eq:l1_decomposition} in the main text).

\paragraph{Setup.} Given $k$ pre-trained GFlowNets with terminating distributions $p_i(x) \propto R_i(x)$, recall from \cref{sec:analysis} that the distribution induced by our mixing policy takes the form:
\begin{equation}
    p_M(x) = \frac{u_M(x)}{N_M(x)} \cdot \mathcal{G}(p_1(x), \ldots, p_k(x)),
\end{equation}
where $\delta(x) := u_M(x)/N_M(x)$ is the distortion factor.

The target distribution is defined as:
\begin{equation}
    p_M^*(x) = \frac{\mathcal{G}(p_1(x), \ldots, p_k(x))}{Z_M},
\end{equation}
where $Z_M = \sum_{x} \mathcal{G}(p_1(x), \ldots, p_k(x))$ is the partition function.

\paragraph{Derivation.} The $L_1$ distance between the induced and target distributions is:
\begin{equation}
    L_1 = \sum_{x \in \mathcal{X}} |p_M(x) - p_M^*(x)|.
\end{equation}

Substituting the expressions for $p_M(x)$ and $p_M^*(x)$:
\begin{align}
    L_1 &= \sum_{x \in \mathcal{X}} \left| p_M(x) - p_M^*(x) \right| \notag \\
    &= \sum_{x \in \mathcal{X}} \left| \frac{u_M(x)}{N_M(x)} \cdot \mathcal{G}(p_1(x), \ldots, p_k(x)) - \frac{1}{Z_M} \cdot \mathcal{G}(p_1(x), \ldots, p_k(x)) \right|.
\end{align}

Using the definition of the distortion factor $\delta(x) = u_M(x)/N_M(x)$:
\begin{align}
    L_1 &= \sum_{x \in \mathcal{X}} \left| \delta(x) \cdot \mathcal{G}(p_1(x), \ldots, p_k(x)) - \frac{1}{Z_M} \cdot \mathcal{G}(p_1(x), \ldots, p_k(x)) \right|.
\end{align}

Since $\mathcal{G}(p_1(x), \ldots, p_k(x)) \geq 0$ for all $x$ (as it is defined through non-negative reward functions and probability distributions), we can factor it out of the absolute value:
\begin{align}
    L_1 &= \sum_{x \in \mathcal{X}} \left| \delta(x) - \frac{1}{Z_M} \right| \cdot \mathcal{G}(p_1(x), \ldots, p_k(x)).
\end{align}

\section{Additional Experimental Details}

\subsection{Synthetic Experiments}
\label{sec:hypergrid_exp_details}

\paragraph{Reward functions.}
We use five reward functions (\crefsub{fig:app_grid_rewards}{a}): Shubert, Currin, Sphere, and Branin from standard benchmark functions for global optimization~\citep{jamil2013literature}, and a Diagonal sigmoid reward. All rewards are normalized to \([0,1]\). We additionally use three synthetic Circle rewards (\crefsub{fig:app_grid_rewards}{b}).
Each Circle reward defines a Gaussian (mixture) density inside a circular region of radius \(0.6\) and assigns a constant background reward \(0.1\) outside the circle.

\begin{figure}[h!]
    \centering
    \subfigure[Five rewards on the \(32\times 32\) grid. From left to right: Shubert, Diagonal, Currin, Sphere, and Branin.]{
        \includegraphics[width=0.9\linewidth]{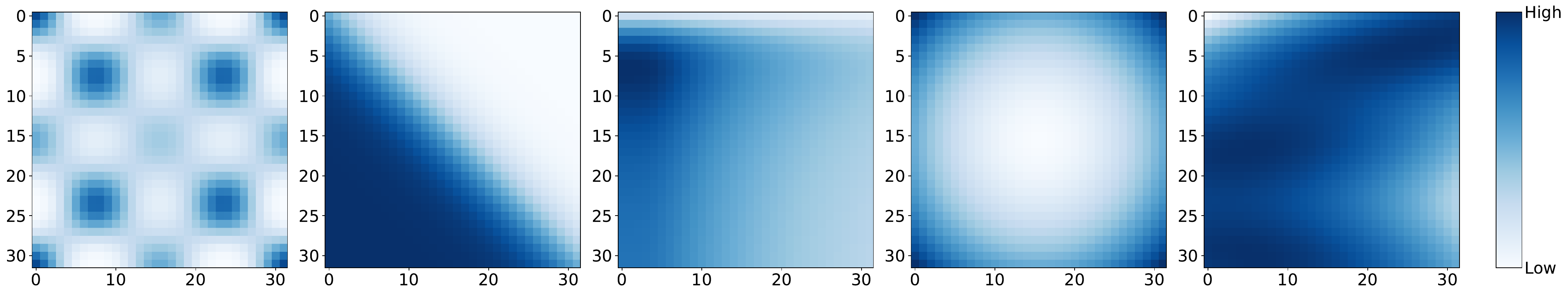}
        \label{fig:app_five_rewards}
    }
    \subfigure[Circle rewards on the \(32\times 32\) grid. From left to right: Circle1--3.]{
        \includegraphics[width=0.59\linewidth]{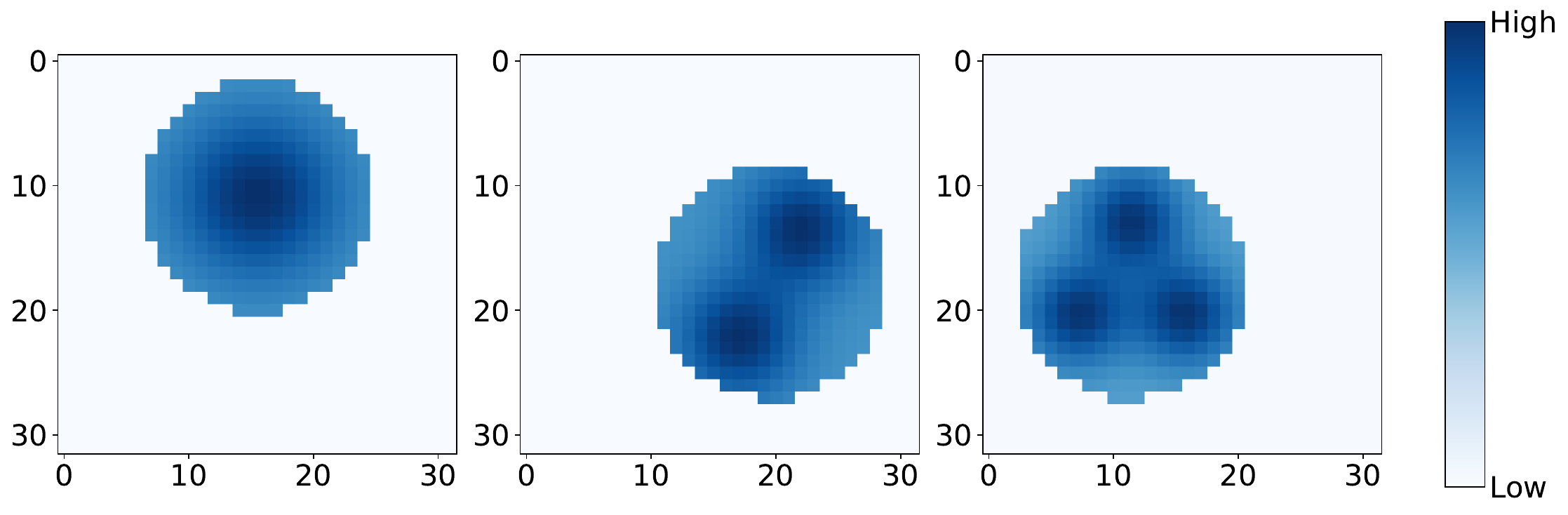}
        \label{fig:app_circle_rewards}
    }

    \caption{Visualization of reward functions on the \(32\times 32\) grid domain used in our experiments.}
    \label{fig:app_grid_rewards}
\end{figure}

\paragraph{Scalarization (Mix-\(k\)).}
We define scalarization targets using weighted-sum rewards over a subset of \(k\) objectives. 
\cref{tab:mixk_def} lists the objective subsets used in our experiments.

\begin{table}[h!]
\centering
\caption{Objective subsets used for Mix-\(k\) scalarization (\crefsub{fig:app_grid_rewards}{a}).}
\begin{tabular}{ll}
\toprule
 & {Objectives} \\
\midrule
Mix2 & Shubert, Diagonal \\
Mix3 & Shubert, Diagonal, Currin \\
Mix4 & Shubert, Diagonal, Currin, Sphere \\
Mix5 & Shubert, Diagonal, Currin, Sphere, Branin \\
\bottomrule
\end{tabular}

\label{tab:mixk_def}
\end{table}

\paragraph{Model details and hyperparameters.}
All GFlowNets are trained using the Sub-Trajectory Balance (SubTB)~\citep{madan2023learninggflownetsfrompartialepisodes} objective with geometric within-trajectory weighting ($\lambda = 2.0$).
Each state is encoded as a 64-dimensional vector formed by concatenating two 32-dimensional one-hot coordinate vectors (one for each axis). 
The forward policy $p_F$ and backward policy $p_B$ are MLPs with 2 hidden layers of 64 units and ReLU~\citep{nair2010relu} activations, outputting logits over 3 actions (right, down, terminate) and 2 actions (left, up), respectively.
The flow estimator $F$ uses a single hidden layer of 64 units and outputs $\log F(s)$.
We train for 20{,}000 iterations using Adam~\citep{kingma2015adam} with learning rate $10^{-3}$, batch size 128, and $\epsilon$-greedy exploration rate 0.05.
We additionally use a replay buffer of size 10{,}000.

For the multi-objective baselines (MOGFN and HN-GFN), preferences are sampled from $\mathrm{Dirichlet}(\alpha=1.0)$ during training.
For MOGFN, the state and preference vector are encoded separately by two MLPs: the state encoder has 2 hidden layers of 64 units (output 64-dim), and the preference encoder has 2 hidden layers of 16 units (output 16-dim). The two embeddings are concatenated and passed through a 2-hidden-layer MLP output head.
For HN-GFN~\citep{zhu2023sample}, the state is first encoded by an MLP with 2 hidden layers of 64 units. A HyperNetwork with a 3-layer ray MLP (hidden dimension 32) maps the preference vector to the weights of a single linear output layer that maps the 64-dim state embedding to action logits.
Both also use a replay buffer of size 10{,}000.

For the classifier-guided baseline~\citep{garipov2023compositionalsculpting}, the classifier is an MLP with 2 hidden layers of 64 units and LeakyReLU~\citep{maas2013leakyrelu} activations.
Training uses Adam with learning rate $10^{-3}$, batch size 64, and 15{,}000 iterations.
The mixing parameter is sampled as $z \sim \mathcal{U}[-3.5, 3.5]$ with $\alpha = \sigma(z)$.
We use an EMA target network (smoothing factor 0.995) and linearly increase the non-terminal loss weight $\gamma$ from 0 to 1 over the first 3{,}000 steps.

\paragraph{Additional results.}
\cref{tab:hypergrid_composition_extended,fig:scalarization_hypergrid_preference,fig:additional_results_1,fig:additional_results_2,fig:additional_results_3,fig:distortion_scatter,fig:hypergrid_beta_distortion} show additional results across diverse reward combinations.

\begin{table}[h!]
\centering
\caption{Extended results of \cref{tab:hypergrid_composition}. $L_1$ error on logical operators in a 2D grid. We evaluate harmonic mean ($\otimes$) and contrast ($\halfcircle $) operators across different reward pairs.
}
\label{tab:hypergrid_composition_extended}
\begin{tabular}{lrrr}
\toprule
  & \multicolumn{1}{c}{Classifier} & \multirow{2}{*}{Ensemble} & \multirow{2}{*}{Ours} \\
 & \multicolumn{1}{c}{guidance} & & \\ 
\midrule
\multicolumn{4}{l}{\textit{{Harmonic mean ($\otimes$)}}} \\ 
$p_{\text{Shubert}} \otimes p_{\text{Sphere}}$ & 0.158 & 0.145 & \textbf{0.136} \\
$p_{\text{Branin}} \otimes p_{\text{Sphere}}$ & \textbf{0.053} & 0.121 & 0.110 \\
$p_{\text{Shubert}} \otimes p_{\text{Diagonal}}$ & \textbf{0.142} & 0.190 & 0.180 \\
$p_{\text{Circle1}} \otimes p_{\text{Circle2}}$ & 0.397 & 0.453 & \textbf{0.229} \\
$p_{\text{Circle1}} \otimes p_{\text{Circle3}}$ & 0.266 & 0.476 & \textbf{0.189} \\
$p_{\text{Circle2}} \otimes p_{\text{Circle3}}$ & \textbf{0.108} & 0.472 & 0.301 \\
\midrule
\multicolumn{4}{l}{\textit{{Contrast ($\halfcircle $)}}} \\ 
$p_{\text{Shubert}} \ \halfcircle \  p_{\text{Sphere}}$ & 0.175 & 0.175 & \textbf{0.111} \\
$p_{\text{Sphere}} \ \halfcircle \  p_{\text{Shubert}}$ & 0.190 & 0.162 & \textbf{0.116} \\
$p_{\text{Branin}} \ \halfcircle \  p_{\text{Sphere}}$ & 0.082 & 0.116 & \textbf{0.080} \\
$p_{\text{Sphere}} \ \halfcircle \  p_{\text{Branin}}$ & \textbf{0.073} & 0.100 & 0.102 \\
$p_{\text{Diagonal}} \ \halfcircle \  p_{\text{Shubert}}$ & \textbf{0.153} & 0.190 & 0.158 \\
$p_{\text{Shubert}} \ \halfcircle \  p_{\text{Diagonal}}$ & 0.150 & 0.190 & \textbf{0.106} \\
$p_{\text{Circle1}} \ \halfcircle \  p_{\text{Circle2}}$ & 0.245 & 0.356 & \textbf{0.231} \\
$p_{\text{Circle2}} \ \halfcircle \  p_{\text{Circle1}}$ & 0.241 & 0.390 & \textbf{0.070} \\
$p_{\text{Circle1}} \ \halfcircle \  p_{\text{Circle3}}$ & \textbf{0.122} & 0.413 & 0.163 \\
$p_{\text{Circle3}} \ \halfcircle \  p_{\text{Circle1}}$ & 0.138 & 0.407 & \textbf{0.058} \\
$p_{\text{Circle2}} \ \halfcircle \  p_{\text{Circle3}}$ & \textbf{0.098} & 0.327 & 0.135 \\
$p_{\text{Circle3}} \ \halfcircle \  p_{\text{Circle2}}$ & \textbf{0.093} & 0.286 & 0.285 \\
\bottomrule
\end{tabular}
\end{table}

\begin{figure}[h!]
    \centering
    \includegraphics[width=0.4\columnwidth]{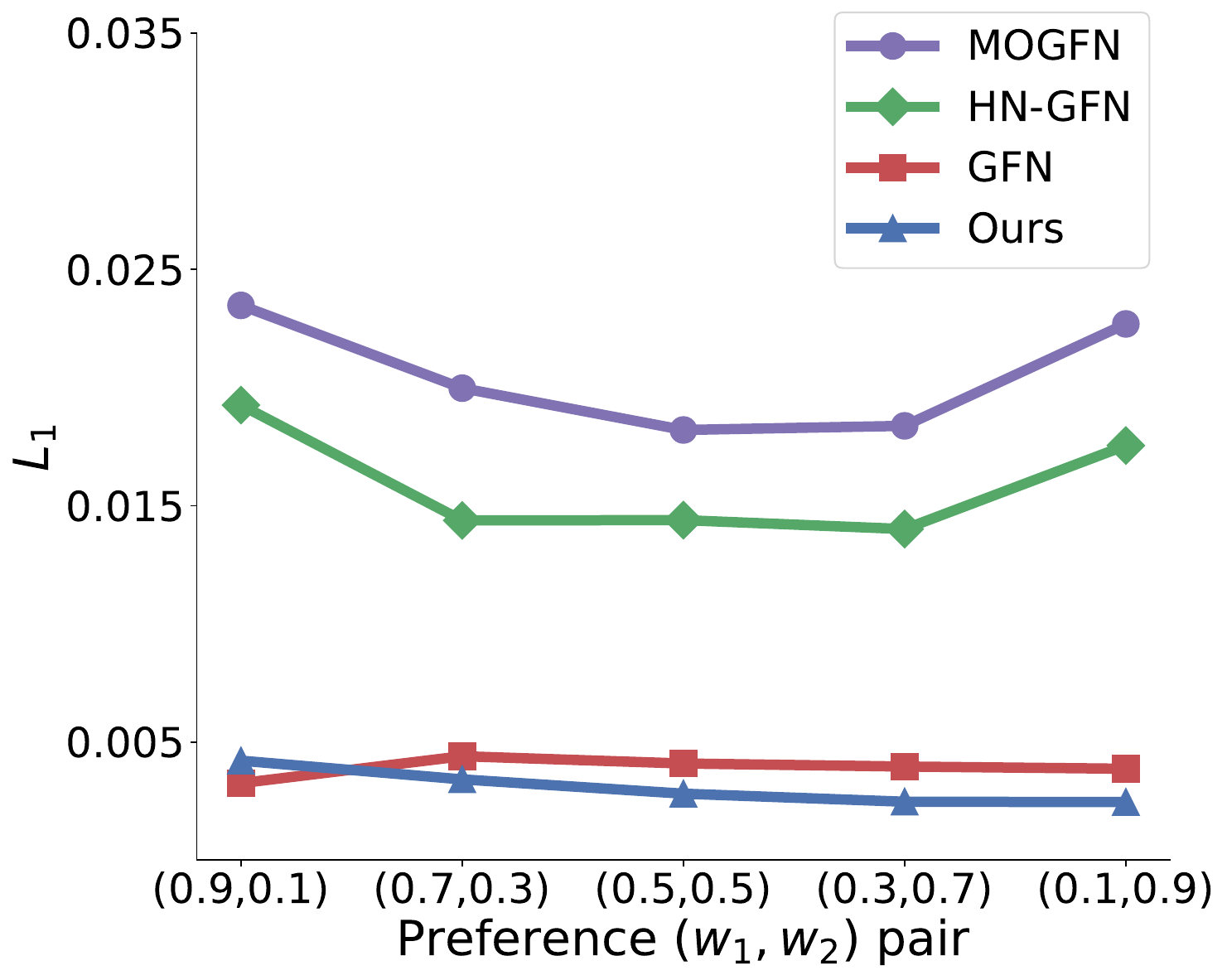}
    \caption{$L_1$ error for the scalarization of $p_{\scriptscriptstyle\mathrm{Shubert}}$ and $p_{\scriptscriptstyle\mathrm{Diagonal}}$ ($\beta=1$) over five preference vectors $\boldsymbol{\omega}=(\omega_1,\omega_2)$. \emph{GFN} denotes single-objective GFlowNets trained separately for each $\boldsymbol{\omega}$.}
    \label{fig:scalarization_hypergrid_preference}
\end{figure}

\begin{figure}[h!]
    \centering
    \includegraphics[width=0.91\linewidth]{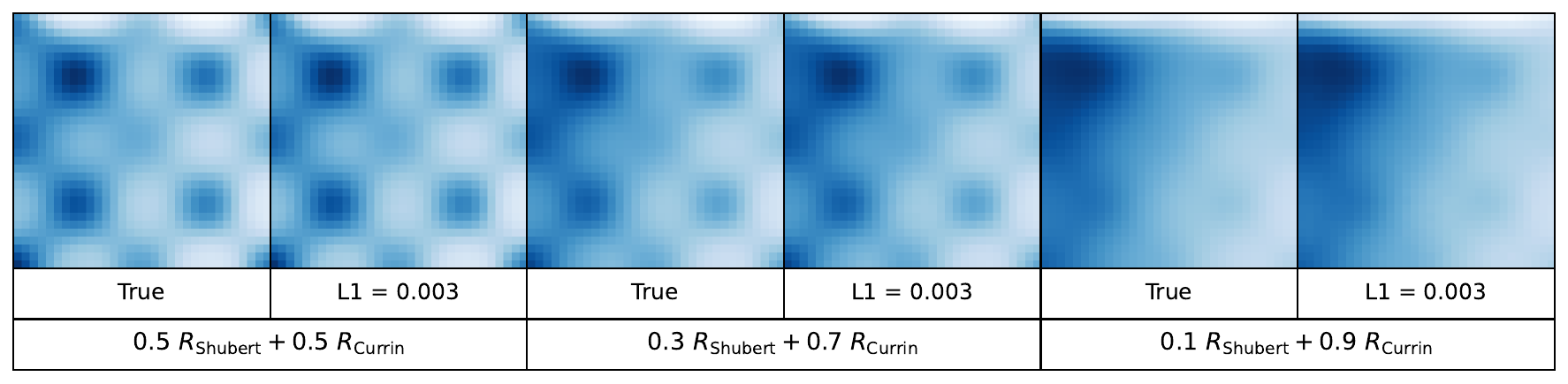}
    \caption{Density visualization of scalarization under different weight settings. For each setting, we show the ground-truth target distribution and the distribution induced by our method, with corresponding $L_1$ errors.}
    \label{fig:additional_results_3}
\end{figure}

\begin{figure}[h!]
    \centering
    \includegraphics[width=0.91\linewidth]{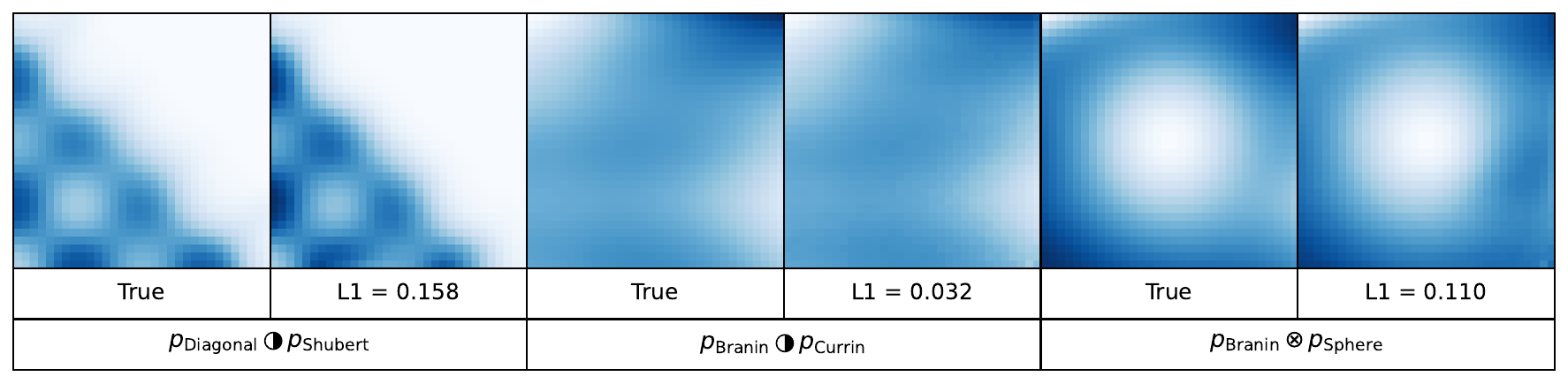}
    \caption{Density visualization of harmonic mean ($\otimes$) and contrast ($\halfcircle$) compositions on pairs of benchmark reward functions. For each composition, we show the ground-truth target distribution and the distribution induced by our method, with corresponding $L_1$ errors.}
    \label{fig:additional_results_1}
\end{figure}

\begin{figure}[h!]
    \centering
    \includegraphics[width=0.91\linewidth]{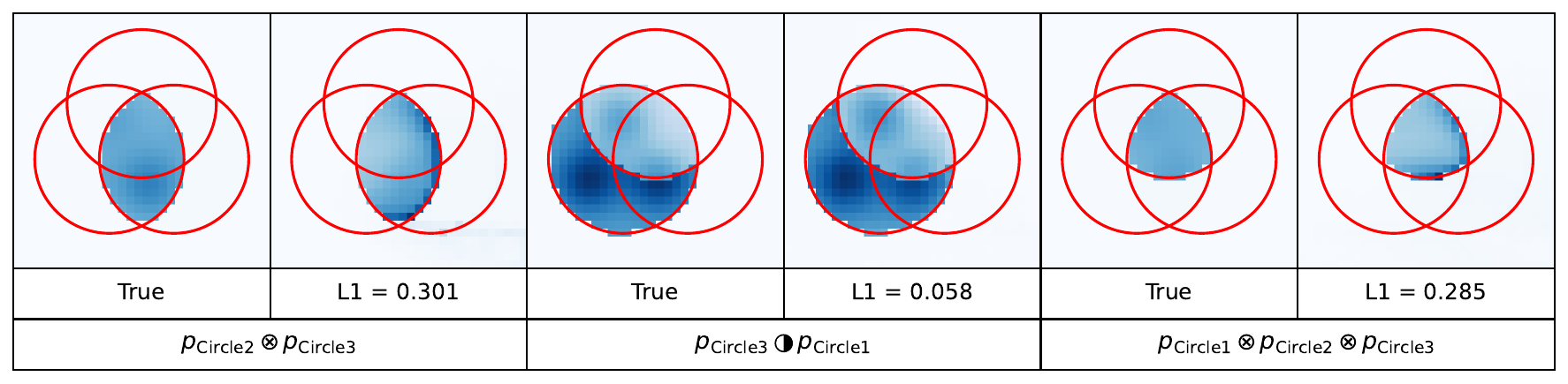}
    \caption{Density visualization of harmonic mean ($\otimes$) and contrast ($\halfcircle$) compositions on Circle rewards. For each composition, we show the ground-truth target distribution and the distribution induced by our method, with corresponding $L_1$ errors.}
    \label{fig:additional_results_2}
\end{figure}

\begin{figure}[h!]
    \centering
    \subfigure[$p_{\scriptscriptstyle\mathrm{Shubert}} \otimes p_{\scriptscriptstyle\mathrm{Diagonal}}$]{
        \includegraphics[width=0.32\textwidth]{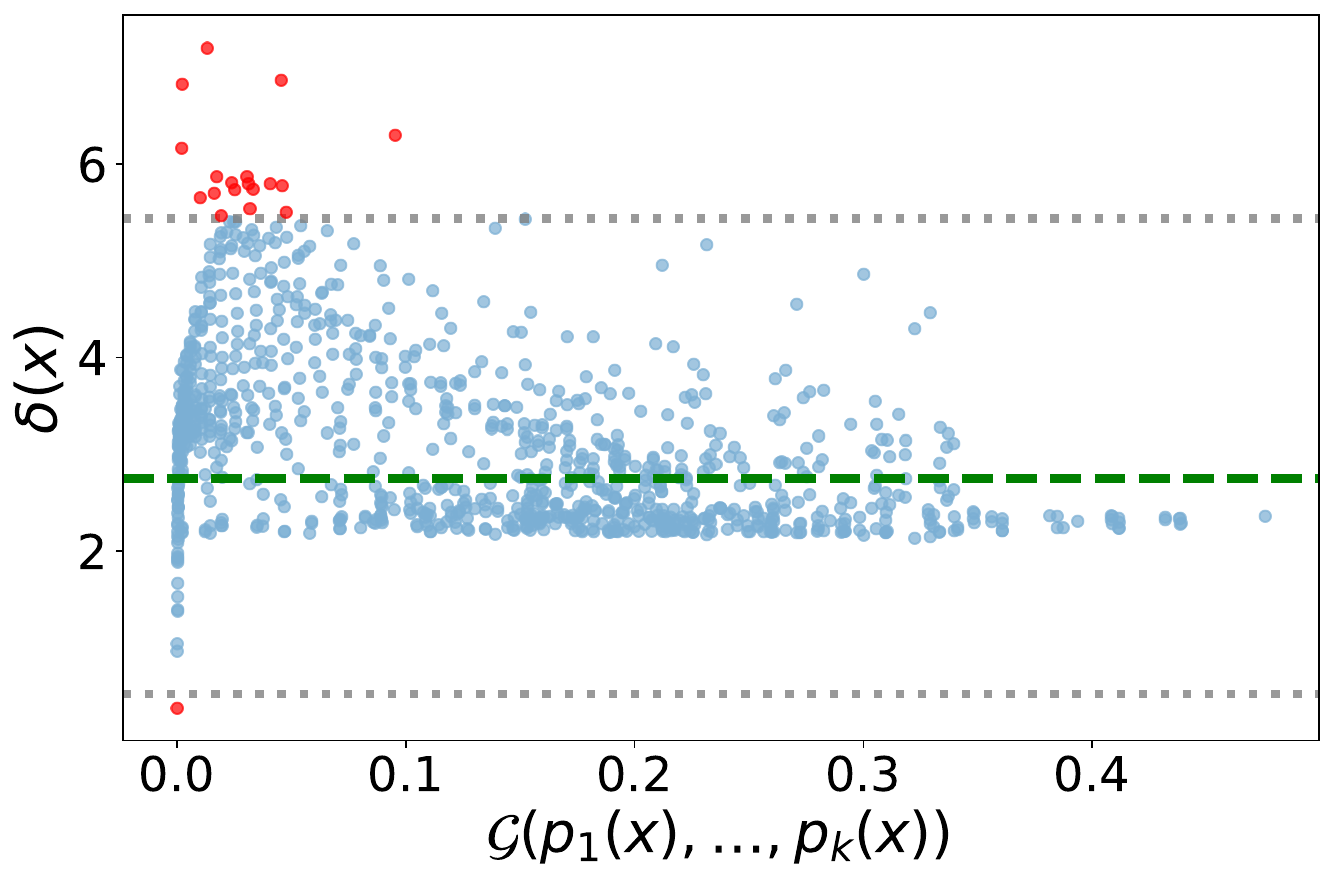}
        \label{fig:ds_shubert_hm_diagonal}
    }%
    \subfigure[$p_{\scriptscriptstyle\mathrm{Shubert}} \protect\halfcircle\ p_{\scriptscriptstyle\mathrm{Diagonal}}$]{
        \includegraphics[width=0.32\textwidth]{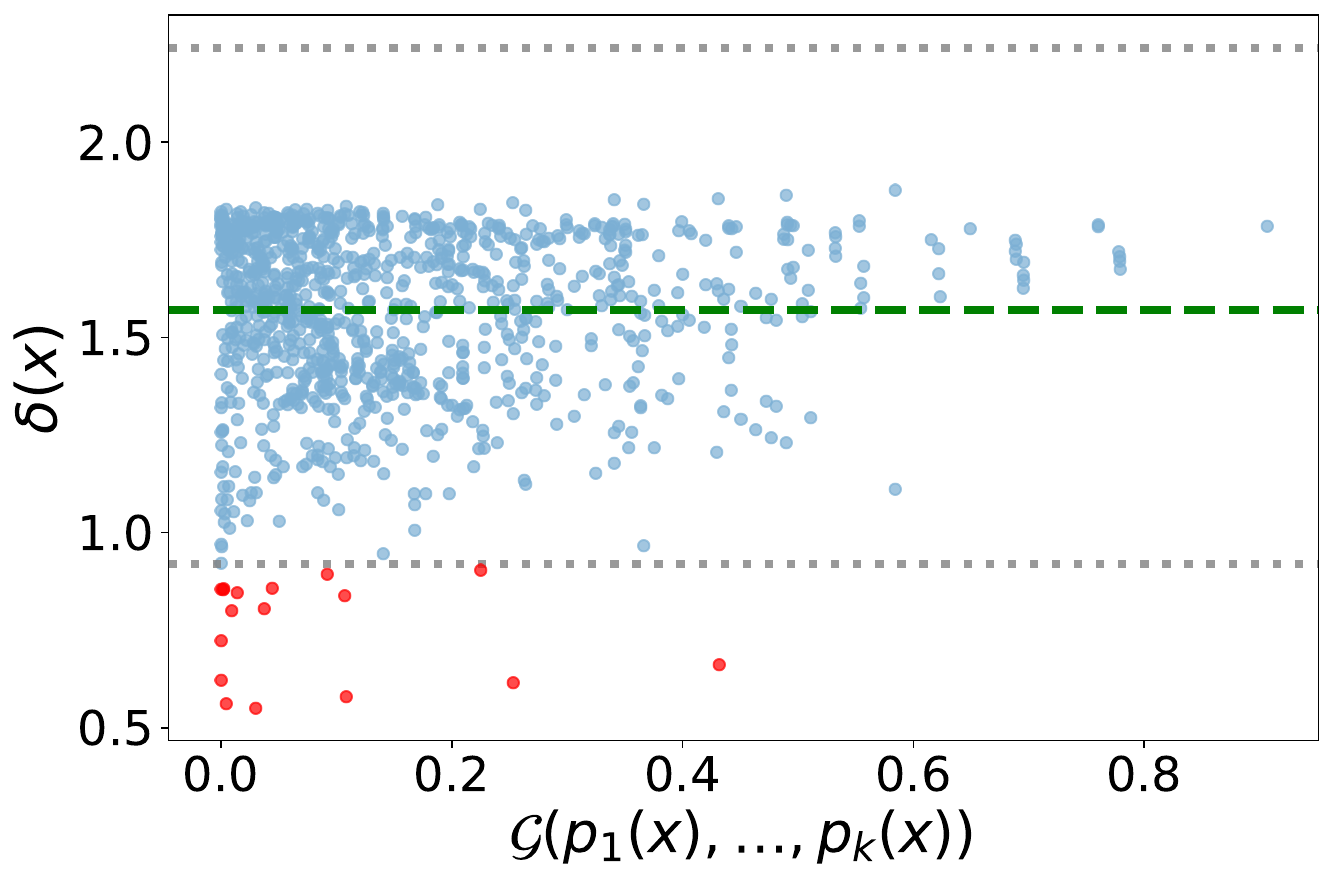}
        \label{fig:ds_shubert_cont_diagonal}
    }%
    \subfigure[$p_{\scriptscriptstyle\mathrm{Diagonal}} \protect\halfcircle\ p_{\scriptscriptstyle\mathrm{Shubert}}$]{
        \includegraphics[width=0.32\textwidth]{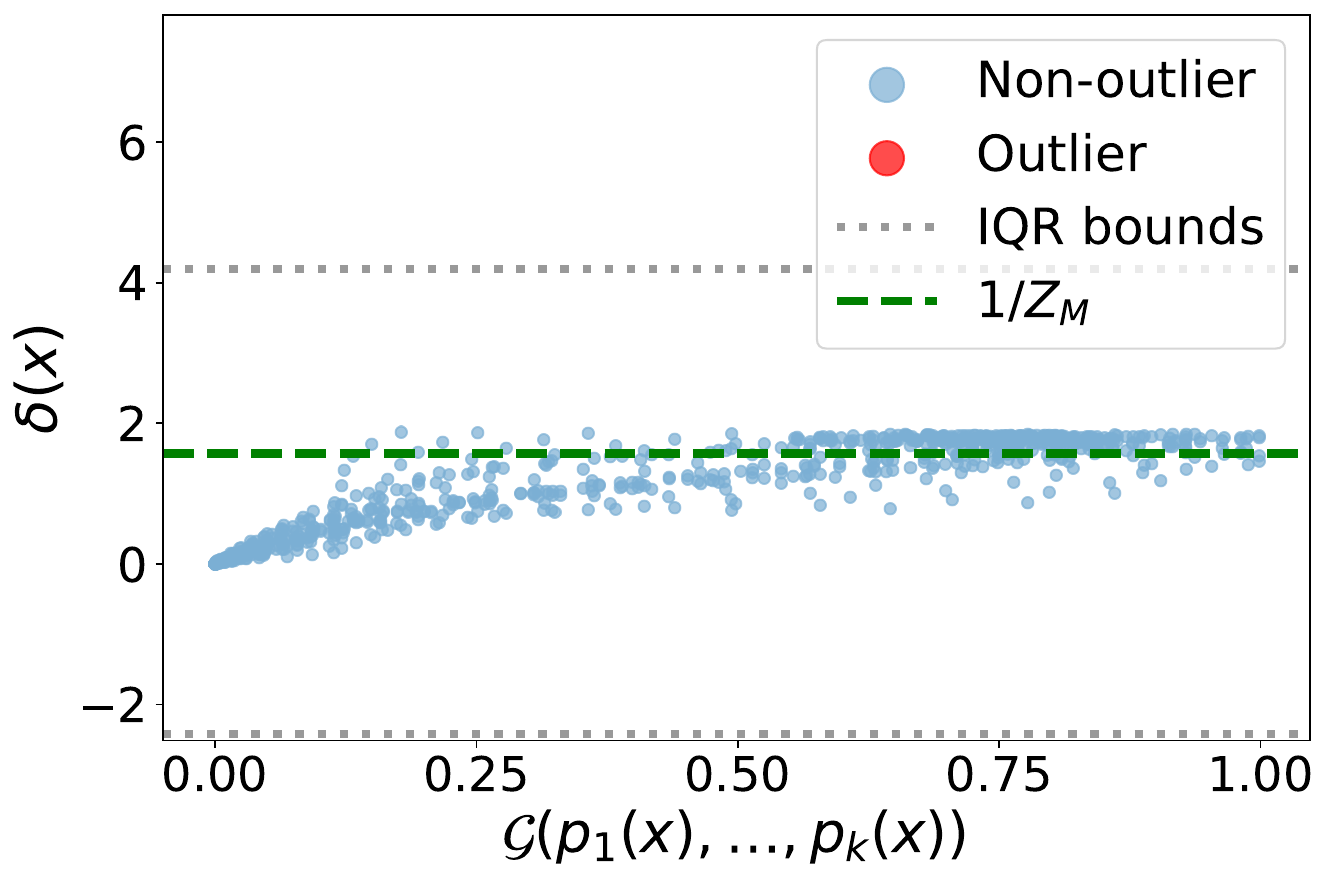}
        \label{fig:ds_diagonal_cont_shubert}
    }

    \subfigure[$p_{\scriptscriptstyle\mathrm{Shubert}} \otimes p_{\scriptscriptstyle\mathrm{Sphere}}$]{
        \includegraphics[width=0.32\textwidth]{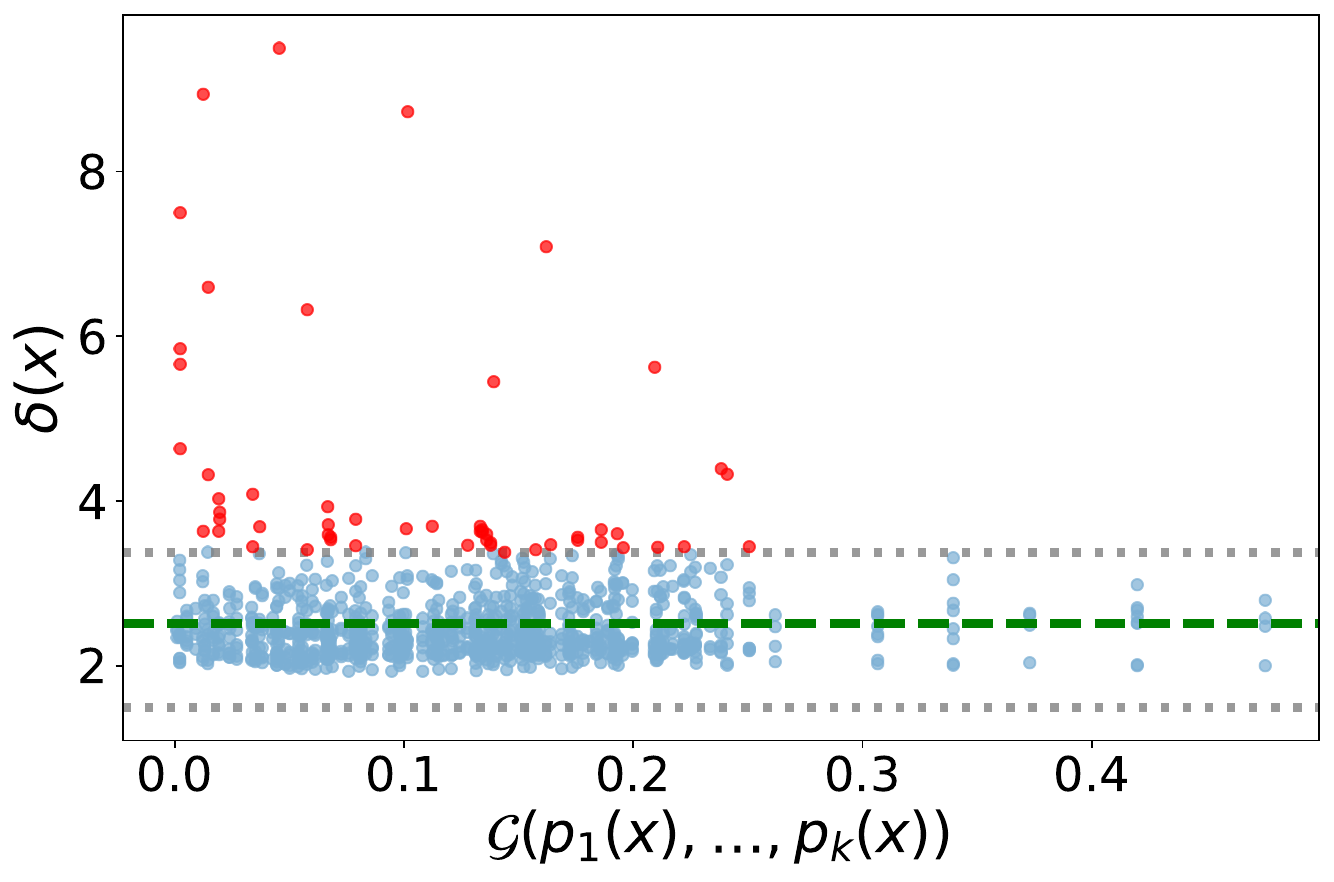}
        \label{fig:ds_shubert_hm_sphere}
    }%
    \subfigure[$p_{\scriptscriptstyle\mathrm{Shubert}} \protect\halfcircle\ p_{\scriptscriptstyle\mathrm{Sphere}}$]{
        \includegraphics[width=0.32\textwidth]{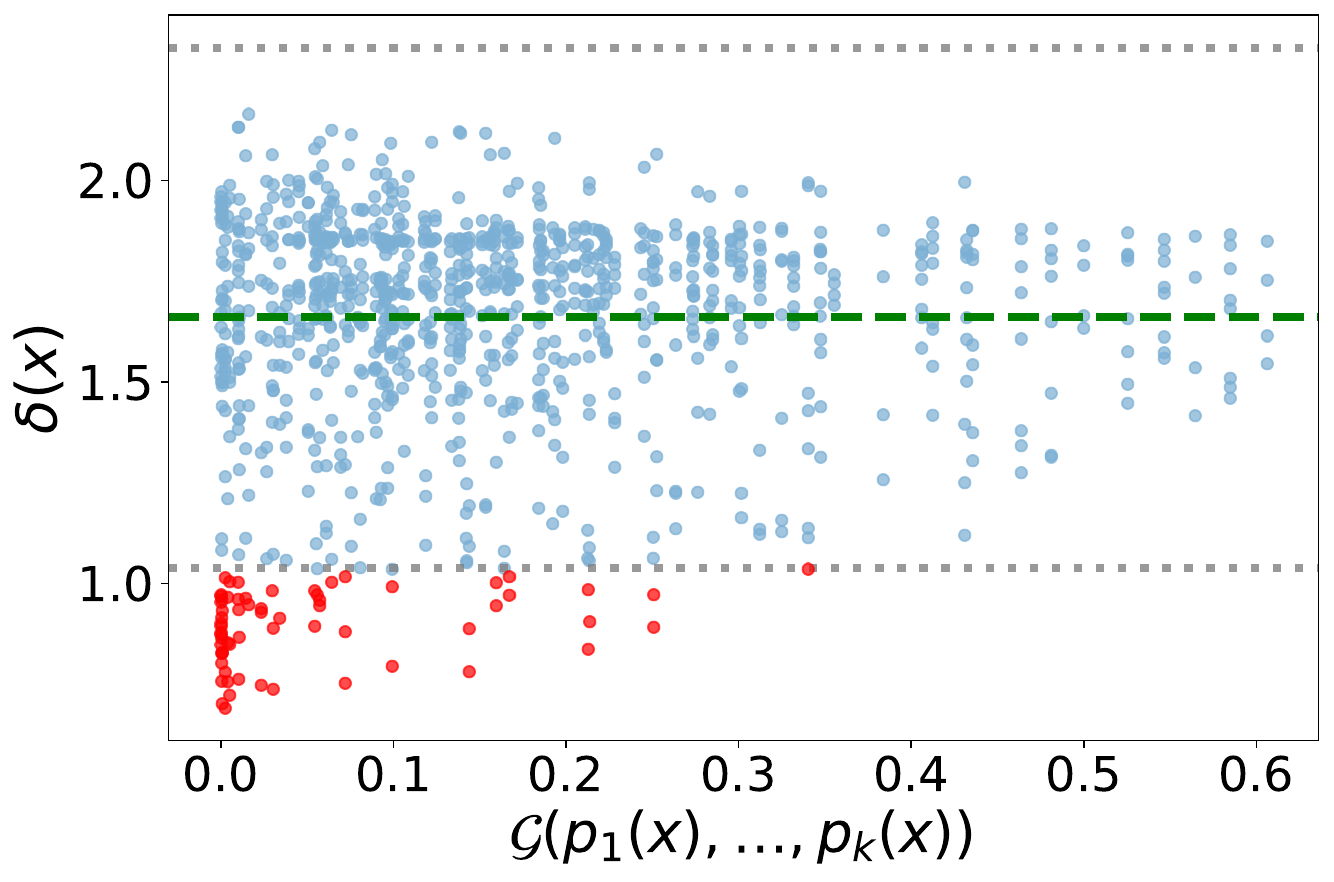}
        \label{fig:ds_shubert_cont_sphere}
    }%
    \subfigure[$p_{\scriptscriptstyle\mathrm{Sphere}} \protect\halfcircle\ p_{\scriptscriptstyle\mathrm{Shubert}}$]{
        \includegraphics[width=0.32\textwidth]{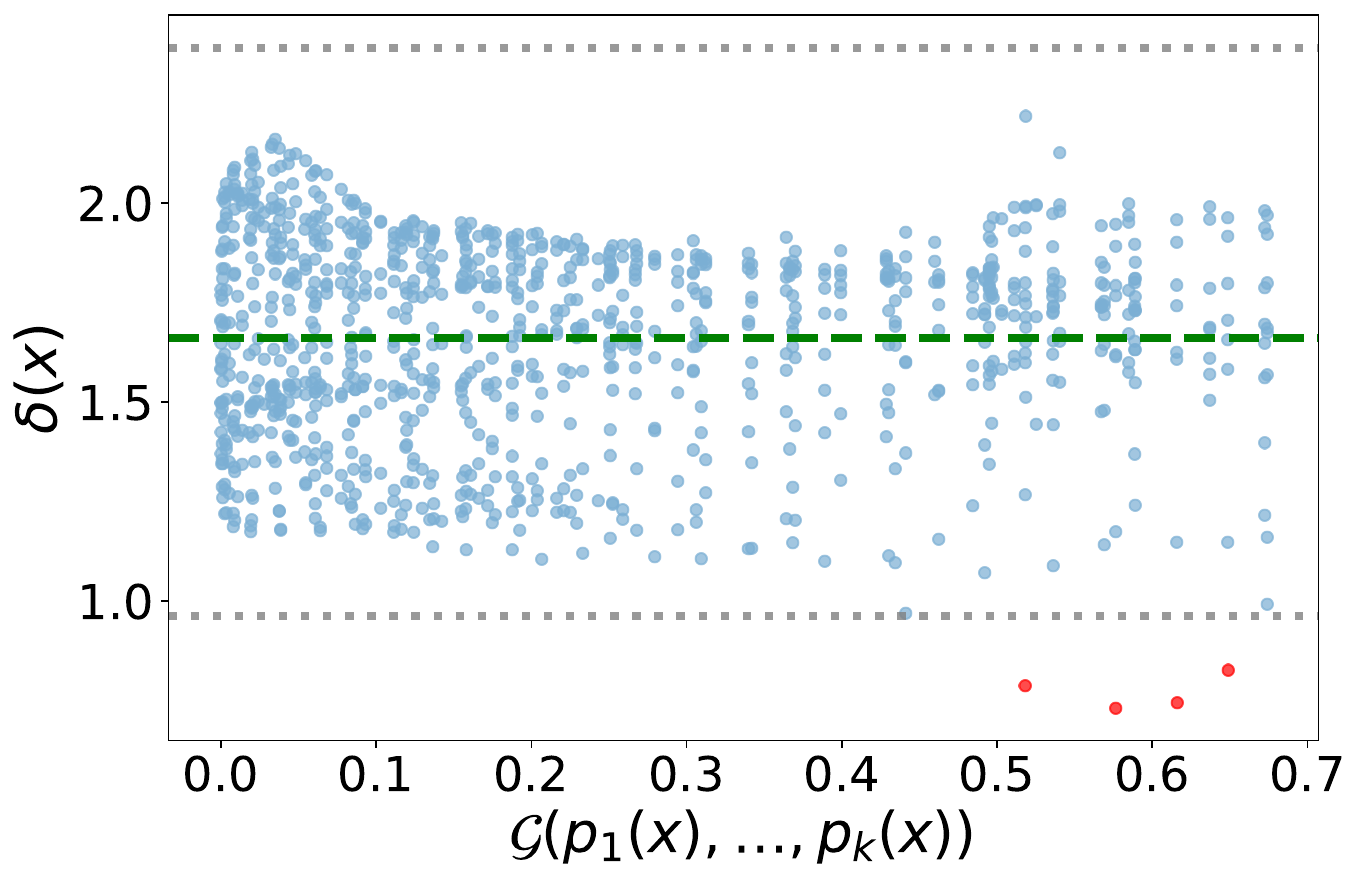}
        \label{fig:ds_sphere_cont_shubert}
    }

    \subfigure[$p_{\scriptscriptstyle\mathrm{Circle1}} \otimes p_{\scriptscriptstyle\mathrm{Circle2}}$]{
        \includegraphics[width=0.32\textwidth]{assets/outlier_scatter_Circle1_hm_Circle2.pdf}
        \label{fig:ds_circle1_hm_circle2}
    }%
    \subfigure[$p_{\scriptscriptstyle\mathrm{Circle1}} \protect\halfcircle\ p_{\scriptscriptstyle\mathrm{Circle2}}$]{
        \includegraphics[width=0.32\textwidth]{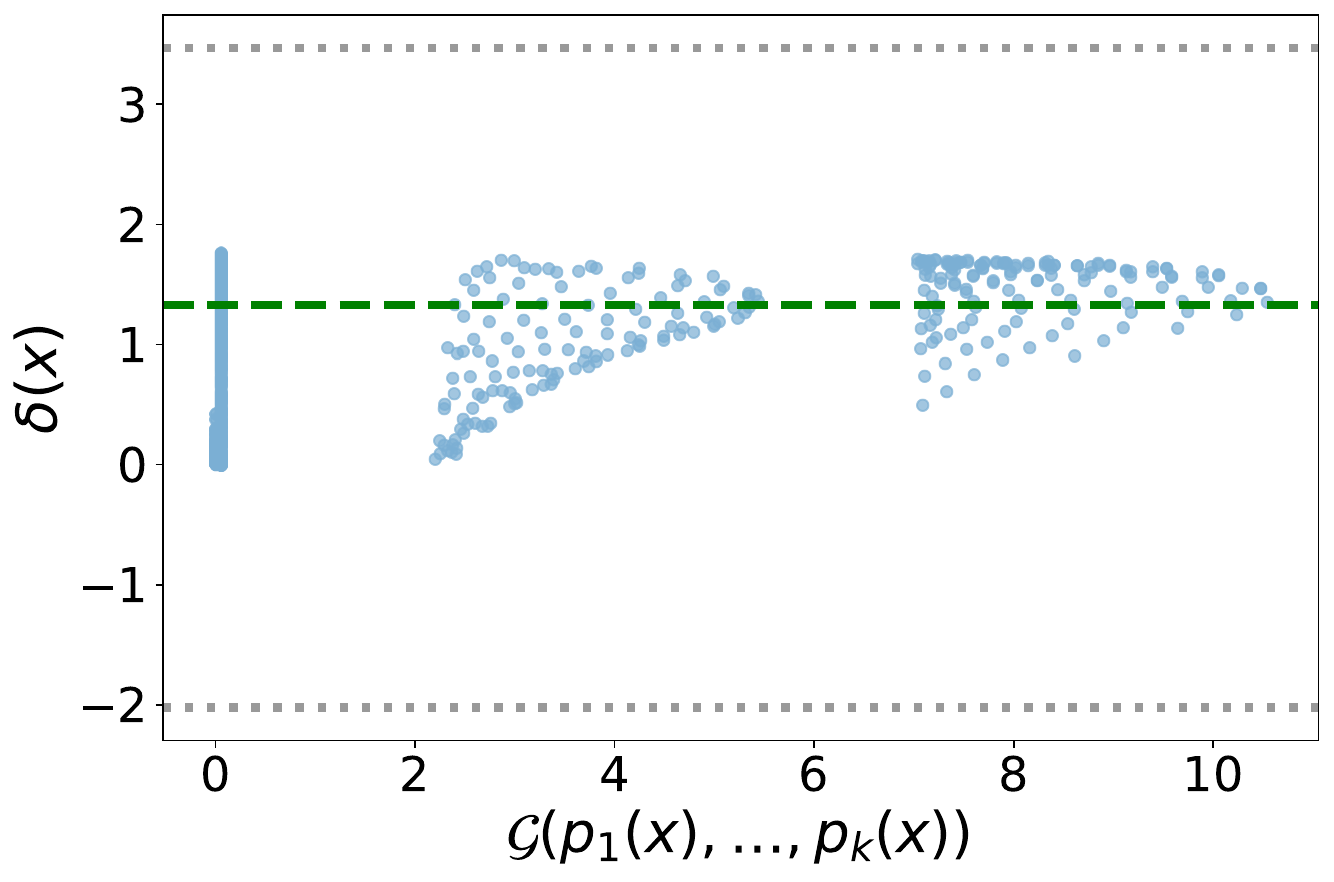}
        \label{fig:ds_circle1_cont_circle2}
    }%
    \subfigure[$p_{\scriptscriptstyle\mathrm{Circle2}} \protect\halfcircle\ p_{\scriptscriptstyle\mathrm{Circle1}}$]{
        \includegraphics[width=0.32\textwidth]{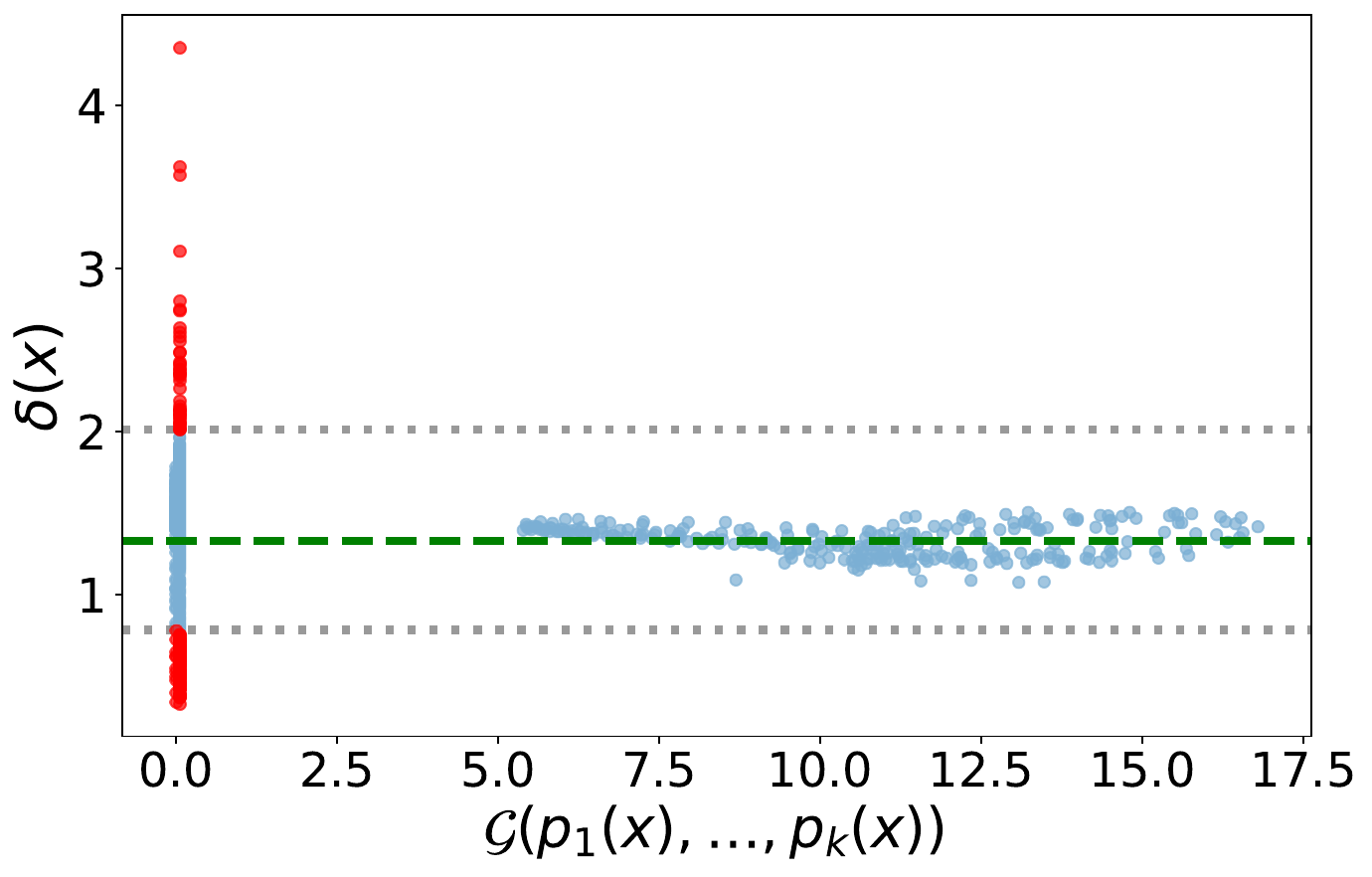}
        \label{fig:ds_circle2_cont_circle1}
    }

    \subfigure[$p_{\scriptscriptstyle\mathrm{Circle2}} \otimes p_{\scriptscriptstyle\mathrm{Circle3}}$]{
        \includegraphics[width=0.32\textwidth]{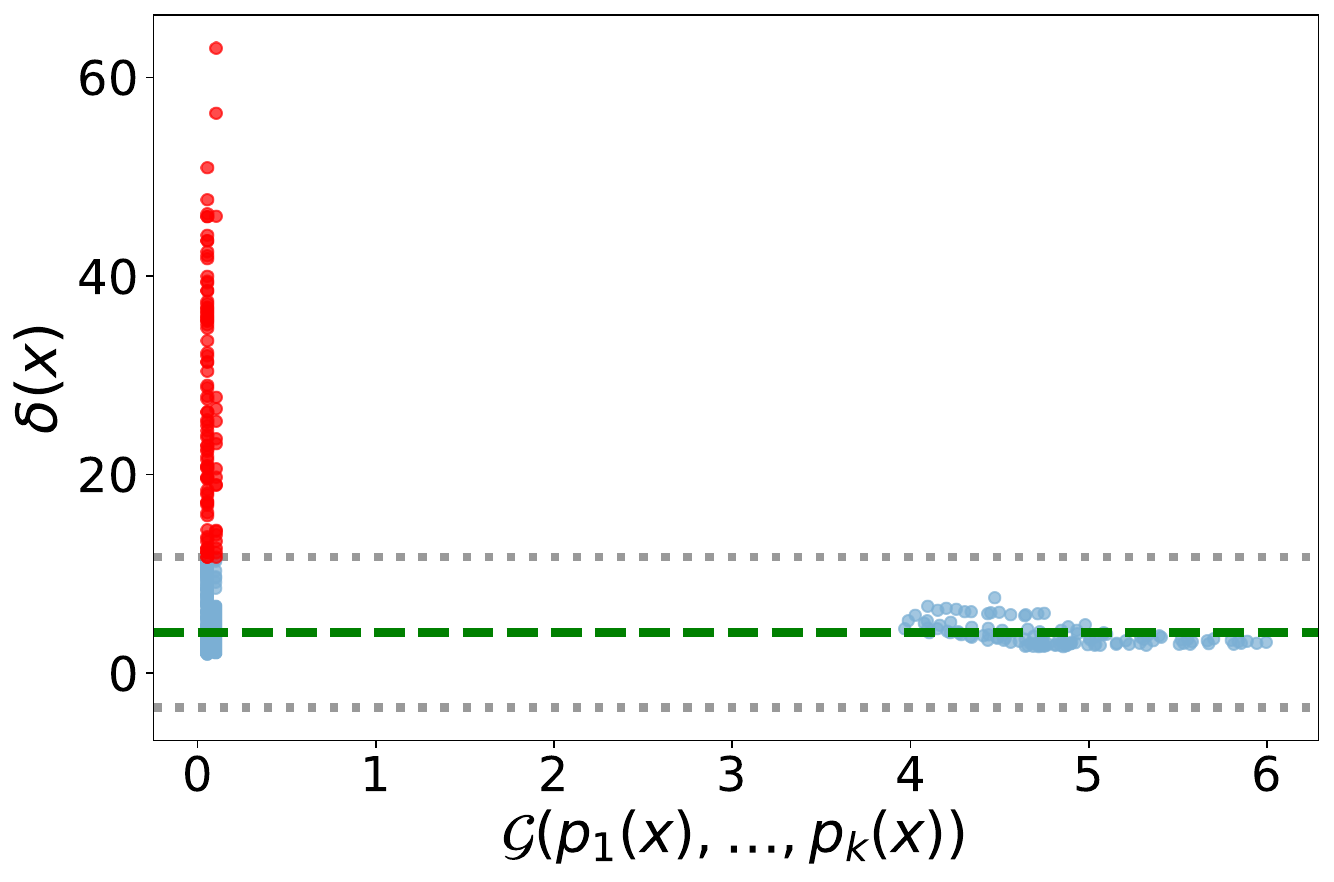}
        \label{fig:ds_circle2_hm_circle3}
    }%
    \subfigure[$p_{\scriptscriptstyle\mathrm{Circle2}} \protect\halfcircle\ p_{\scriptscriptstyle\mathrm{Circle3}}$]{
        \includegraphics[width=0.32\textwidth]{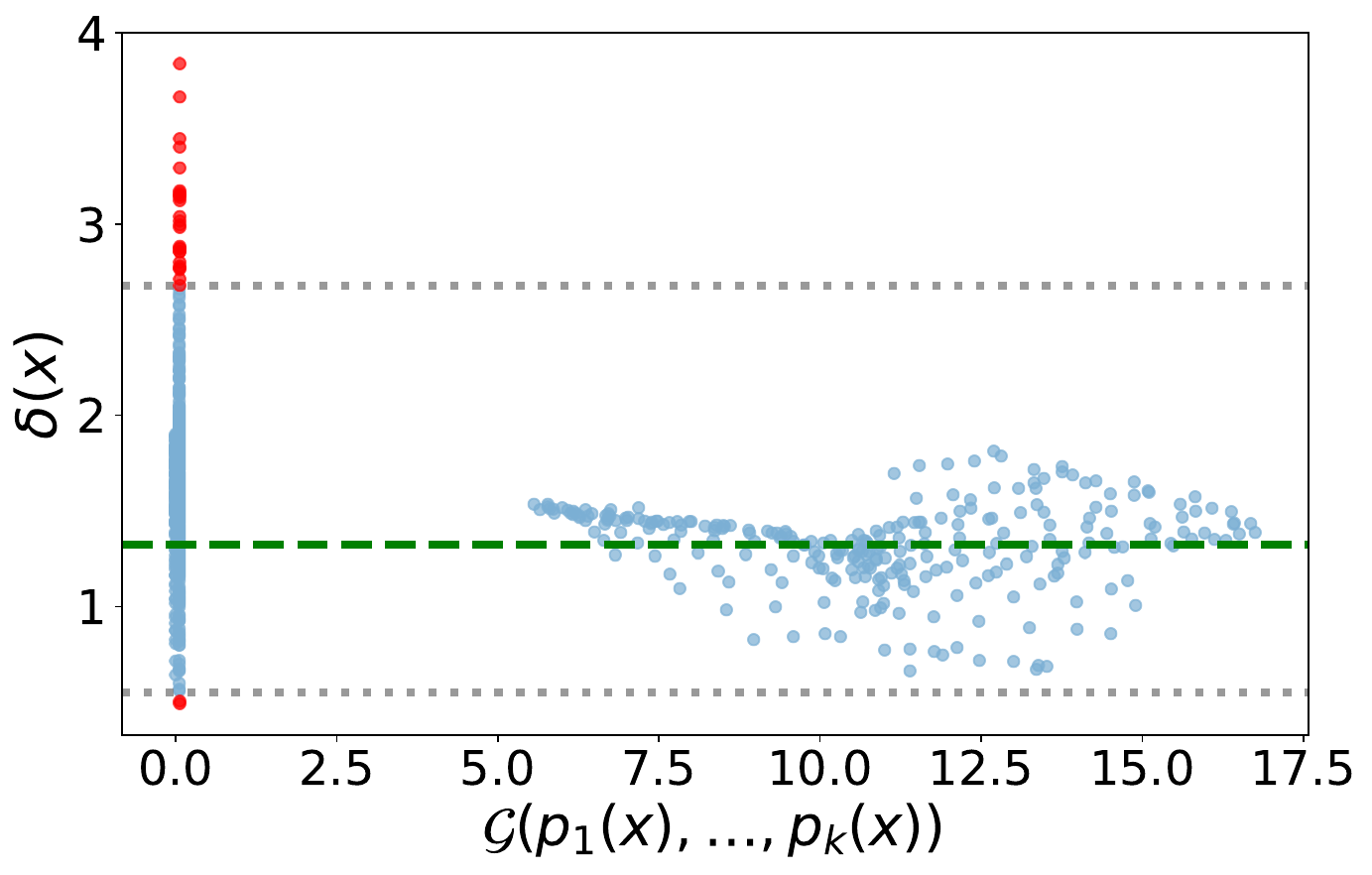}
        \label{fig:ds_circle2_cont_circle3}
    }%
    \subfigure[$p_{\scriptscriptstyle\mathrm{Circle3}} \protect\halfcircle\ p_{\scriptscriptstyle\mathrm{Circle2}}$]{
        \includegraphics[width=0.32\textwidth]{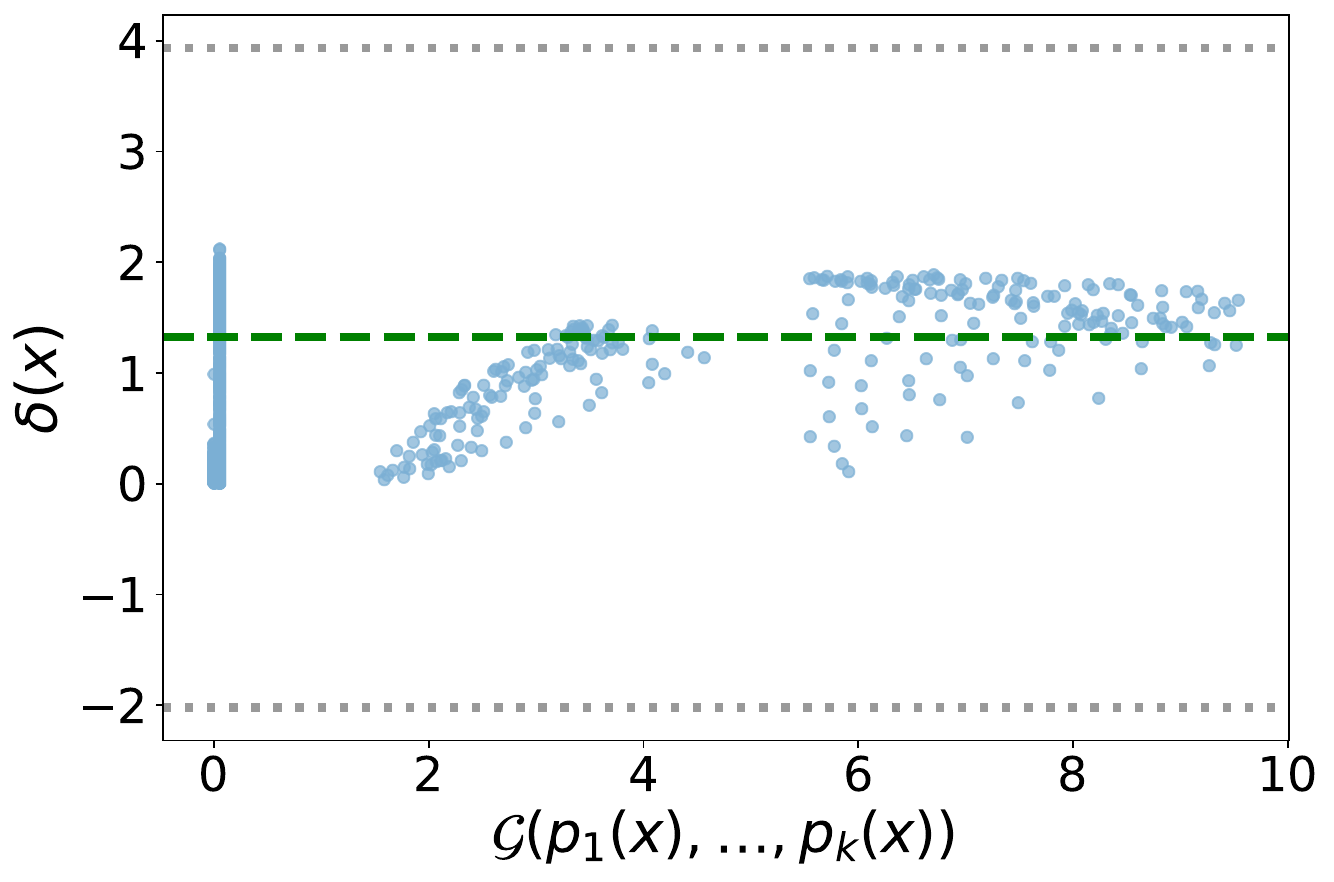}
        \label{fig:ds_circle3_cont_circle2}
    }


    \caption{Extended results of \cref{fig:distortion_hypergrid}. Distortion factor $\delta(x) = u_M(x)/N_M(x)$ vs.\ unnormalized target density $\mathcal{G}(p_1(x),\ldots,p_k(x))$ on the 2D grid domain. Red points indicate outliers, defined as states where $\delta(x)$ falls outside $[Q_1 - 1.5 \cdot \mathrm{IQR}, Q_3 + 1.5 \cdot \mathrm{IQR}]$ (gray dotted lines), where $Q_1$, $Q_3$ are the 25th/75th percentiles and $\mathrm{IQR} = Q_3 - Q_1$. The green dashed line marks $1/Z_M$ with $Z_M = \sum_x \mathcal{G}(p_1(x),\ldots,p_k(x))$.}
    \label{fig:distortion_scatter}
\end{figure}

\begin{figure}[t]
    \centering
    \includegraphics[width=0.80\columnwidth]
    {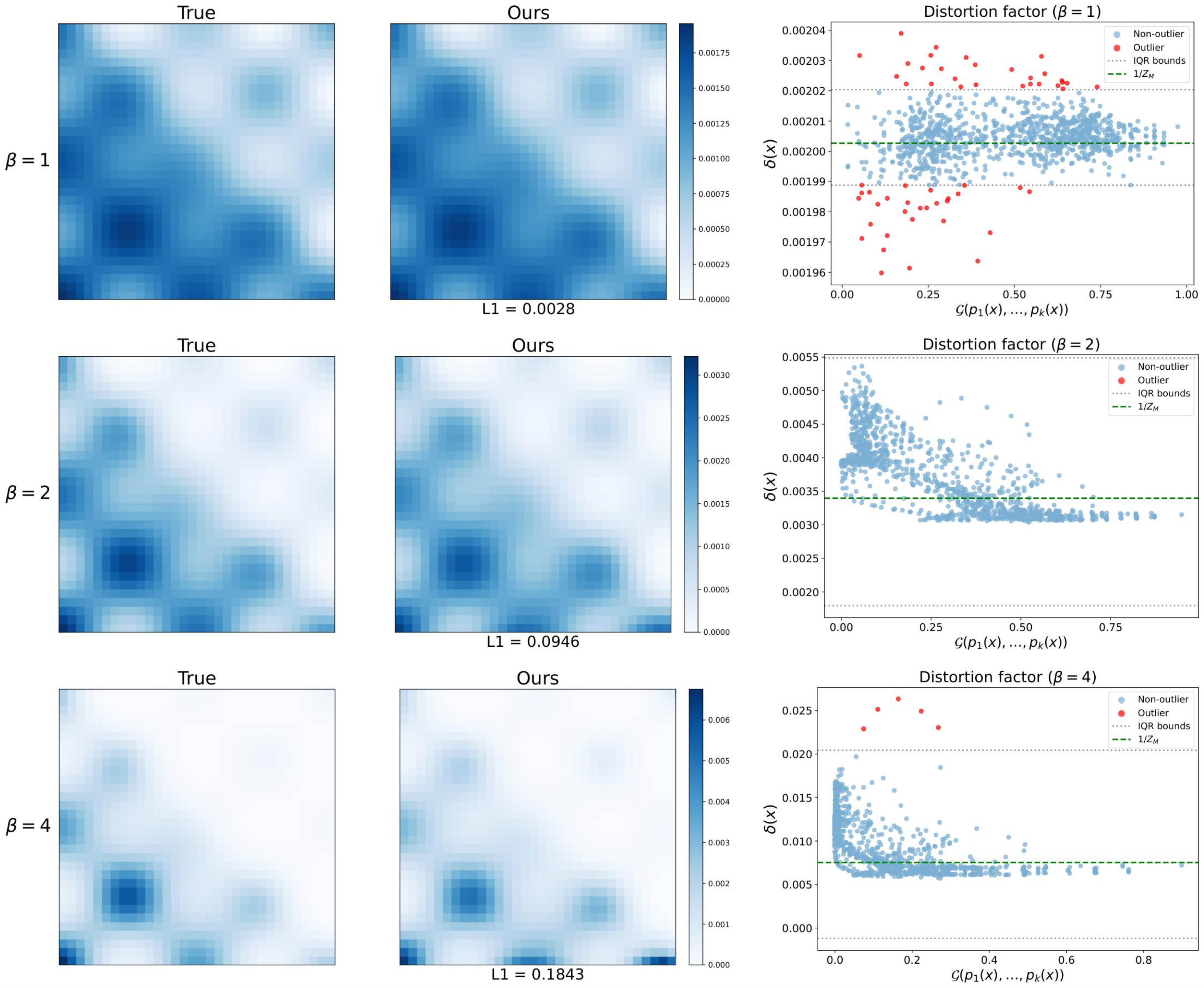}
    \caption{
        Qualitative results of scalarization with $\beta=1,2,4$ on a 2D grid domain, where the reward is defined as $(0.5\ R_{\scriptscriptstyle\mathrm{Shubert}} + 0.5\ R_{\scriptscriptstyle\mathrm{Diagonal}})^\beta$. Pre-trained base models trained with reward exponent $\beta$ are composed using the reward-sharpened mixing policy~\cref{eq:beta_mixing_policy}. We visualize the density over the grid of the true distribution (left), Ours (middle), and the distortion factor analysis (right).
    }
    \label{fig:hypergrid_beta_distortion}
\end{figure}

\subsection{Real-world Experiments}
\label{sec:real_exp_details}

\paragraph{Reward functions.}
For fragment-based generation, we use three reward functions.
\textbf{SEH} is the predicted binding affinity to soluble epoxide hydrolase, computed using a pre-trained proxy model~\citep{bengio2021flownetworkbasedgenerative}, with raw values divided by 8 to normalize to approximately $[0, 1]$.
\textbf{SA} is the synthetic accessibility score~\citep{ertl2009sa}, computed using RDKit~\citep{landrum2010rdkit} and transformed as $(10 - \text{SA}_{\text{raw}}) / 9$ to map to $[0, 1]$ with higher values indicating easier synthesis.
\textbf{QED} is the quantitative estimate of drug-likeness~\citep{bickerton2012qed}, computed using RDKit~\citep{landrum2010rdkit}, already in $[0, 1]$.
For atom-based generation (QM9), we use \textbf{GAP}, the HOMO--LUMO gap predicted by an MXMNet~\citep{zhang2020mmgnn} proxy trained on QM9, rescaled and clipped to $[0, 2]$ so that smaller gaps yield higher rewards; SA and QED are computed as above.

\paragraph{Model details and hyperparameters.}
All GFlowNets use a Graph Transformer~\citep{yun2019graphtransformernetworks} backbone, with 6 layers and embedding dimension 128 for fragment-based generation (taking a junction tree representation as input where nodes correspond to fragments and edges encode attachment points), and 4 layers and embedding dimension 64 for atom-based generation (taking the current molecular graph as input with node features encoding atom types and edge features encoding bond types).
Training uses the Sub-Trajectory Balance (SubTB)~\citep{madan2023learninggflownetsfrompartialepisodes} objective with Adam~\citep{kingma2015adam} (learning rate $5 \times 10^{-4}$, weight decay $10^{-8}$), batch size 64, random action probability 0.05, and 15{,}000 iterations.

For multi-objective baselines (MOGFN and HN-GFN), preferences $\boldsymbol{\omega} \in \Delta^{k-1}$ are sampled from $\mathrm{Dirichlet}(\alpha=1.0)$ during training.
MOGFN conditions the model by concatenating preferences to the graph-level embedding before the output heads.
HN-GFN~\citep{zhu2023sample} uses a HyperNetwork with a 3-layer MLP (hidden dimension 100) to generate policy network weights from preferences.
OP-GFN~\citep{chen2024orderpreserving} replaces the standard trajectory balance loss with a cross-entropy over Pareto-dominance labels computed from each batch. Other hyperparameters match MOGFN.

For the classifier-guided baseline~\citep{garipov2023compositionalsculpting}, we train 2-way and 3-way joint classifiers for two- and three-objective compositions, respectively. Each classifier is a Graph Transformer with 4 layers and 2 attention heads, with embedding dimension 128 for fragment-based and 64 for atom-based generation.
Training uses Adam with learning rate $10^{-3}$, weight decay $10^{-6}$, batch size 8 per distribution, and 15{,}000 iterations.
We use an EMA target network (smoothing factor 0.995) and linearly increase the non-terminal loss weight $\gamma$ from 0 to 1 over the first 4{,}000 steps. Training trajectories are sampled from pre-trained single-objective GFlowNets.

\paragraph{Mixing reward-sharpened GFlowNets for scalarization.}
For the scalarization target $p_M^*(x) \propto (\sum_i \omega_i R_i(x))^\beta$, we leverage pre-trained ingredient GFlowNets trained on $R_i^\beta$. In our mixing framework (\cref{sec:mixing_rule}), sampling from $p_M^*(x)$ requires ingredient models with terminating distributions $p_i(x) \propto R_i(x)$, which we can then mix as $p_{M,F} \propto (\sum_i \omega_i Z_i u_i p_{i,F})^\beta$.
However, our ingredient GFlowNets are trained with reward exponent $\beta$, inducing terminating distributions $\tilde{p}_i(x) \propto R_i(x)^\beta$ with partition functions $\tilde{Z}_i = \sum_x R_i(x)^\beta$.

The unnormalized reward can be recovered as $R_i(x) = ( \tilde{Z}_i \, \tilde{p}_i(x) )^{1/\beta}$, since $\tilde{Z}_i \tilde{p}_i(x) = R_i(x)^\beta$. Substituting into the scalarization target:
\begin{align}
    p_M^*(x) &\propto \left( \sum_{i=1}^{k} \omega_i \, \bigl( \tilde{Z}_i \, \tilde{p}_i(x) \bigr)^{1/\beta} \right)^\beta.
\end{align}
The mixing policy is therefore:
\begin{align}
    \tilde{p}_{M,F}(s' \mid s) &\propto \left( \sum_{i=1}^{k} \omega_i \, \bigl( \tilde{Z}_i \tilde{u}_i(s) \cdot \tilde{p}_{i,F}(s' \mid s) \bigr)^{1/\beta} \right)^\beta,
    \label{eq:beta_mixing_policy}
\end{align}

\paragraph{Additional results.}
\cref{tab:ensemble_comparison,tab:moo_igd,tab:moo_pareto,tab:combined_bins_model_f,tab:combined_bins_db_f,tab:combined_bins_cls,tab:qm9_reward_cor,fig:reward_corr} show additional results.

\paragraph{Ablation: ensemble baseline and training objectives.}
\cref{tab:ensemble_comparison} compares our mixing policy against the ensemble baseline for both SubTB- and TB-trained base GFlowNets. The ensemble baseline mixes forward policies without the reaching probability $u_i(\cdot)$, isolating the contribution of flow-based weighting. Our method consistently outperforms the ensemble across most settings, confirming that the reaching probability is essential. Results are reported with our DB $F$ variant for both SubTB and TB, and with the Model $F$ variant for SubTB.

\begin{table*}[h]
\centering
\caption{Ensemble baseline vs.\ our mixing policy on molecule generation tasks. Base GFlowNets are trained with either SubTB or TB. We report the average reward and diversity of the top-10 samples across different objective combinations (mean $\pm$ std over 3 seeds). \textit{ALL} denotes all three objectives (SEH/GAP, SA, and QED).}
\label{tab:ensemble_comparison}
\resizebox{\linewidth}{!}{%
\begin{tabular}{llrrrrrrrrrrr}
\toprule
& & \multicolumn{5}{c}{Reward $\uparrow$} & \multicolumn{5}{c}{Diversity $\uparrow$} \\
\cmidrule(lr){3-7} \cmidrule(lr){8-12}
& & \multicolumn{3}{c}{SubTB} & \multicolumn{2}{c}{TB} & \multicolumn{3}{c}{SubTB} & \multicolumn{2}{c}{TB} \\
\cmidrule(lr){3-5} \cmidrule(lr){6-7} \cmidrule(lr){8-10} \cmidrule(lr){11-12}
& & \multicolumn{1}{c}{Ensemble} & \multicolumn{1}{c}{Ours (Model $F$)} & \multicolumn{1}{c}{Ours (DB $F$)} & \multicolumn{1}{c}{Ensemble} & \multicolumn{1}{c}{Ours (DB $F$)} & \multicolumn{1}{c}{Ensemble} & \multicolumn{1}{c}{Ours (Model $F$)} & \multicolumn{1}{c}{Ours (DB $F$)} & \multicolumn{1}{c}{Ensemble} & \multicolumn{1}{c}{Ours (DB $F$)} \\
\midrule
\multirow{4}{*}{\rotatebox{90}{Fragment}}
 & SEH-SA  & $0.818_{\pm 0.006}$ & $0.836_{\pm 0.001}$ & $0.835_{\pm 0.003}$ & $0.836_{\pm 0.007}$ & $\mathbf{0.851}_{\pm 0.000}$ & $0.548_{\pm 0.011}$ & $0.562_{\pm 0.005}$ & $0.554_{\pm 0.006}$ & $0.537_{\pm 0.022}$ & $\mathbf{0.567}_{\pm 0.003}$ \\
 & SEH-QED & $0.648_{\pm 0.005}$ & $0.775_{\pm 0.010}$ & $0.777_{\pm 0.006}$ & $0.656_{\pm 0.004}$ & $\mathbf{0.787}_{\pm 0.003}$ & $0.499_{\pm 0.006}$ & $0.564_{\pm 0.006}$ & $\mathbf{0.565}_{\pm 0.013}$ & $0.492_{\pm 0.014}$ & $0.560_{\pm 0.004}$ \\
 & SA-QED  & $0.606_{\pm 0.003}$ & $0.797_{\pm 0.010}$ & $0.790_{\pm 0.008}$ & $0.607_{\pm 0.005}$ & $\mathbf{0.801}_{\pm 0.005}$ & $0.587_{\pm 0.007}$ & $\mathbf{0.624}_{\pm 0.007}$ & $0.621_{\pm 0.012}$ & $0.596_{\pm 0.004}$ & $0.617_{\pm 0.004}$ \\
 & ALL     & $0.617_{\pm 0.001}$ & $0.741_{\pm 0.009}$ & $0.742_{\pm 0.010}$ & $0.625_{\pm 0.004}$ & $\mathbf{0.743}_{\pm 0.005}$ & $0.563_{\pm 0.005}$ & $0.608_{\pm 0.009}$ & $\mathbf{0.610}_{\pm 0.008}$ & $0.553_{\pm 0.011}$ & $0.601_{\pm 0.006}$ \\
\midrule
\multirow{4}{*}{\rotatebox{90}{QM9}}
 & GAP-SA  & $0.817_{\pm 0.009}$ & $0.873_{\pm 0.003}$ & $0.868_{\pm 0.003}$ & $0.810_{\pm 0.007}$ & $\mathbf{0.891}_{\pm 0.002}$ & $\mathbf{0.908}_{\pm 0.004}$ & $0.899_{\pm 0.010}$ & $0.902_{\pm 0.009}$ & $0.903_{\pm 0.008}$ & $0.883_{\pm 0.010}$ \\
 & GAP-QED & $0.755_{\pm 0.006}$ & $0.778_{\pm 0.004}$ & $0.781_{\pm 0.005}$ & $0.747_{\pm 0.020}$ & $\mathbf{0.782}_{\pm 0.005}$ & $\mathbf{0.893}_{\pm 0.008}$ & $0.880_{\pm 0.014}$ & $0.881_{\pm 0.014}$ & $0.885_{\pm 0.010}$ & $0.874_{\pm 0.008}$ \\
 & SA-QED  & $0.735_{\pm 0.017}$ & $0.710_{\pm 0.013}$ & $0.689_{\pm 0.014}$ & $\mathbf{0.752}_{\pm 0.000}$ & $0.718_{\pm 0.007}$ & $0.728_{\pm 0.081}$ & $0.778_{\pm 0.040}$ & $\mathbf{0.826}_{\pm 0.019}$ & $0.643_{\pm 0.021}$ & $0.752_{\pm 0.014}$ \\
 & ALL     & $0.668_{\pm 0.006}$ & $0.727_{\pm 0.005}$ & $0.719_{\pm 0.004}$ & $0.649_{\pm 0.014}$ & $\mathbf{0.733}_{\pm 0.008}$ & $\mathbf{0.899}_{\pm 0.004}$ & $0.872_{\pm 0.014}$ & $0.882_{\pm 0.010}$ & $0.897_{\pm 0.013}$ & $0.868_{\pm 0.009}$ \\
\bottomrule
\end{tabular}
}
\end{table*}

\paragraph{Multi-objective optimization metrics.}
We use three multi-objective optimization metrics. Hypervolume ($\uparrow$) is the volume dominated by the Pareto front approximation, capturing both convergence and spread. IGD+ ($\downarrow$) is the average distance from the reference Pareto front to the method's approximation, a Pareto-compliant variant of IGD~\citep{ishibuchi2015modified}. Pareto Count ($\uparrow$) is the number of non-dominated solutions found across all generated samples. The hypervolume is reported in the main text (\cref{tab:moo_hv}). IGD+ (\cref{tab:moo_igd}) and Pareto Count (\cref{tab:moo_pareto}) are reported below. Our method achieves competitive IGD+ and Pareto Count compared to the baselines, consistent with the hypervolume results.

\begin{table}[h]
\centering
\caption{IGD+ ($\downarrow$) on molecule generation tasks. We report mean $\pm$ std over 3 seeds. \textit{ALL} denotes all three objectives (SEH/GAP, SA, and QED).}
\label{tab:moo_igd}
\small
\setlength{\tabcolsep}{4pt}
\begin{tabular}{llrrrrr}
\toprule
& & \multicolumn{5}{c}{IGD+ $\downarrow$} \\
\cmidrule(lr){3-7}
& & \multicolumn{1}{c}{MOGFN} & \multicolumn{1}{c}{HN-GFN} & \multicolumn{1}{c}{OP-GFN} & \multicolumn{1}{c}{Ours (Model F)} & \multicolumn{1}{c}{Ours (DB F)} \\
\midrule
\multirow{4}{*}{\rotatebox{90}{Fragment}}
 & SEH-SA  & $0.058_{\pm 0.002}$ & $\mathbf{0.053}_{\pm 0.001}$ & $0.065_{\pm 0.008}$ & $0.056_{\pm 0.002}$          & $0.060_{\pm 0.003}$          \\
 & SEH-QED & $0.075_{\pm 0.009}$ & $0.082_{\pm 0.008}$          & $0.144_{\pm 0.012}$ & $\mathbf{0.074}_{\pm 0.005}$ & $0.079_{\pm 0.009}$          \\
 & SA-QED  & $0.072_{\pm 0.005}$ & $\mathbf{0.069}_{\pm 0.005}$          & $0.115_{\pm 0.007}$ & $0.070_{\pm 0.001}$ & $0.071_{\pm 0.006}$          \\
 & ALL     & $0.111_{\pm 0.004}$ & $0.117_{\pm 0.002}$          & $0.185_{\pm 0.023}$ & $0.110_{\pm 0.009}$          & $\mathbf{0.108}_{\pm 0.008}$ \\
\midrule
\multirow{4}{*}{\rotatebox{90}{QM9}}
 & GAP-SA  & $0.046_{\pm 0.026}$ & $0.036_{\pm 0.021}$          & $0.041_{\pm 0.018}$          & $\mathbf{0.023}_{\pm 0.001}$ & $0.028_{\pm 0.003}$          \\
 & GAP-QED & $0.140_{\pm 0.003}$ & $\mathbf{0.134}_{\pm 0.006}$ & $0.136_{\pm 0.003}$          & $0.142_{\pm 0.001}$          & $0.136_{\pm 0.002}$          \\
 & SA-QED  & $0.153_{\pm 0.009}$ & $0.142_{\pm 0.001}$          & $0.137_{\pm 0.003}$          & $\mathbf{0.133}_{\pm 0.003}$ & $0.140_{\pm 0.005}$          \\
 & ALL     & $0.138_{\pm 0.004}$ & $0.140_{\pm 0.004}$          & $0.164_{\pm 0.015}$          & $\mathbf{0.132}_{\pm 0.001}$ & $\mathbf{0.132}_{\pm 0.002}$ \\
\bottomrule
\end{tabular}
\end{table}

\begin{table}[h]
\centering
\caption{Pareto Count ($\uparrow$) on molecule generation tasks. We report the number of non-dominated solutions (mean $\pm$ std over 3 seeds). \textit{ALL} denotes all three objectives (SEH/GAP, SA, and QED).}
\label{tab:moo_pareto}
\small
\setlength{\tabcolsep}{4pt}
\begin{tabular}{llrrrrr}
\toprule
& & \multicolumn{1}{c}{MOGFN} & \multicolumn{1}{c}{HN-GFN} & \multicolumn{1}{c}{OP-GFN} & \multicolumn{1}{c}{Ours (Model F)} & \multicolumn{1}{c}{Ours (DB F)} \\
\midrule
\multirow{4}{*}{\rotatebox{90}{Fragment}}
 & SEH-SA  & $9_{\pm 2}$            & $7_{\pm 0}$            & $8_{\pm 2}$            & $9_{\pm 2}$            & $\mathbf{11}_{\pm 2}$  \\
 & SEH-QED & $16_{\pm 2}$           & $11_{\pm 1}$           & $11_{\pm 4}$           & $14_{\pm 2}$           & $\mathbf{18}_{\pm 1}$  \\
 & SA-QED  & $6_{\pm 1}$            & $6_{\pm 1}$            & $\mathbf{7}_{\pm 0}$   & $6_{\pm 3}$            & $6_{\pm 1}$            \\
 & ALL     & $\mathbf{119}_{\pm 18}$ & $76_{\pm 11}$         & $68_{\pm 12}$          & $113_{\pm 5}$          & $99_{\pm 22}$          \\
\midrule
\multirow{4}{*}{\rotatebox{90}{QM9}}
 & GAP-SA  & $6_{\pm 3}$            & $9_{\pm 2}$            & $7_{\pm 1}$            & $\mathbf{15}_{\pm 2}$  & $13_{\pm 1}$           \\
 & GAP-QED & $12_{\pm 1}$           & $\mathbf{15}_{\pm 2}$  & $9_{\pm 3}$            & $13_{\pm 2}$           & $10_{\pm 2}$           \\
 & SA-QED  & $8_{\pm 2}$            & $10_{\pm 3}$           & $10_{\pm 2}$           & $\mathbf{12}_{\pm 1}$  & $11_{\pm 2}$           \\
 & ALL     & $77_{\pm 20}$          & $65_{\pm 25}$          & $57_{\pm 19}$          & $\mathbf{101}_{\pm 16}$ & $89_{\pm 16}$         \\
\bottomrule
\end{tabular}
\end{table}

\begin{table}[h!]
\centering
\caption{Results of logical operators with our mixing policy (Model $F$) on molecule generation tasks (mean over 3 seeds). We report the percentage of valid samples falling into each of 8 bins based on reward thresholds: Fragment-based (SEH: 0.5, SA: 0.6, QED: 0.25) and Atom-based QM9 (GAP: 0.85, SA: 0.3, QED: 0.3). \textbf{Bold} indicates the \textit{target bin}: for harmonic mean ($\otimes$), where all objectives are high; for contrast ($\halfcircle$), where the first objective is high and the rest low.}
\small
\setlength{\tabcolsep}{3pt}
\resizebox{\linewidth}{!}{%
\begin{tabular}{lrrrrrrrr r lrrrrrrrr}
        \toprule
        \multicolumn{9}{c}{{Fragment-based}} & & \multicolumn{9}{c}{{Atom-based (QM9)}} \\
        \cmidrule(r){1-9} \cmidrule(l){11-19}
         \multicolumn{1}{r}{SEH} & \multicolumn{4}{c}{low} & \multicolumn{4}{c}{high} & &
         \multicolumn{1}{r}{GAP} & \multicolumn{4}{c}{low} & \multicolumn{4}{c}{high}
         \\
         \cmidrule(lr){2-5} \cmidrule(lr){6-9} \cmidrule(lr){12-15} \cmidrule(lr){16-19}
         \multicolumn{1}{r}{SA} & \multicolumn{2}{c}{low} & \multicolumn{2}{c}{high} & \multicolumn{2}{c}{low} & \multicolumn{2}{c}{high} & &
         \multicolumn{1}{r}{SA} & \multicolumn{2}{c}{low} & \multicolumn{2}{c}{high} & \multicolumn{2}{c}{low} & \multicolumn{2}{c}{high}
         \\
         \cmidrule(lr){2-3} \cmidrule(lr){4-5} \cmidrule(lr){6-7} \cmidrule(lr){8-9}
         \cmidrule(lr){12-13} \cmidrule(lr){14-15} \cmidrule(lr){16-17} \cmidrule(lr){18-19}
         \multicolumn{1}{r}{QED} & low & high & low & high & low & high & low & high & &
         \multicolumn{1}{r}{QED} & low & high & low & high & low & high & low & high
         \\
         \midrule
         \multicolumn{19}{l}{\textit{Base distributions}} \\
         $p_{\scriptscriptstyle\mathrm{SEH}}$ & 1 & 0 & 0 & 0 & \textbf{56} & \textbf{12} & \textbf{28} & \textbf{3} & & $p_{\scriptscriptstyle\mathrm{GAP}}$ & 0 & 0 & 1 & 0 & \textbf{43} & \textbf{11} & \textbf{35} & \textbf{10} \\
         $p_{\scriptscriptstyle\mathrm{SA}}$ & 0 & 0 & \textbf{74} & \textbf{5} & 0 & 0 & \textbf{18} & \textbf{3} & & $p_{\scriptscriptstyle\mathrm{SA}}$ & 0 & 0 & \textbf{3} & \textbf{32} & 0 & 0 & \textbf{9} & \textbf{56} \\
         $p_{\scriptscriptstyle\mathrm{QED}}$ & 0 & \textbf{38} & 0 & \textbf{25} & 0 & \textbf{24} & 0 & \textbf{13} & & $p_{\scriptscriptstyle\mathrm{QED}}$ & 0 & \textbf{1} & 0 & \textbf{1} & 2 & \textbf{54} & 2 & \textbf{40} \\
         \midrule
         \multicolumn{19}{l}{\textit{Harmonic mean ($\otimes$)}} \\
         $p_{\scriptscriptstyle\mathrm{SEH}} \otimes p_{\scriptscriptstyle\mathrm{SA}}$ & 1 & 0 & 7 & 0 & 3 & 1 & \textbf{77} & \textbf{11} & & $p_{\scriptscriptstyle\mathrm{GAP}} \otimes p_{\scriptscriptstyle\mathrm{SA}}$ & 0 & 0 & 1 & 1 & 4 & 7 & \textbf{30} & \textbf{57} \\
         $p_{\scriptscriptstyle\mathrm{SEH}} \otimes p_{\scriptscriptstyle\mathrm{QED}}$ & 0 & 9 & 0 & 4 & 0 & \textbf{61} & 0 & \textbf{26} & & $p_{\scriptscriptstyle\mathrm{GAP}} \otimes p_{\scriptscriptstyle\mathrm{QED}}$ & 0 & 0 & 0 & 0 & 13 & \textbf{46} & 8 & \textbf{33} \\
         $p_{\scriptscriptstyle\mathrm{SA}} \otimes p_{\scriptscriptstyle\mathrm{QED}}$ & 0 & 5 & 0 & \textbf{68} & 0 & 3 & 0 & \textbf{24} & & $p_{\scriptscriptstyle\mathrm{SA}} \otimes p_{\scriptscriptstyle\mathrm{QED}}$ & 0 & 1 & 0 & \textbf{11} & 2 & 11 & 5 & \textbf{70} \\
         $p_{\scriptscriptstyle\mathrm{SEH}} \otimes p_{\scriptscriptstyle\mathrm{SA}} \otimes p_{\scriptscriptstyle\mathrm{QED}}$ & 0 & 6 & 0 & 12 & 0 & 17 & 0 & \textbf{65} & & $p_{\scriptscriptstyle\mathrm{GAP}} \otimes p_{\scriptscriptstyle\mathrm{SA}} \otimes p_{\scriptscriptstyle\mathrm{QED}}$ & 3 & 0 & 0 & 2 & 6 & 17 & 11 & \textbf{61} \\
         \midrule
         \multicolumn{19}{l}{\textit{Contrast ($\halfcircle$)}} \\
         $p_{\scriptscriptstyle\mathrm{SEH}} \ \halfcircle\ p_{\scriptscriptstyle\mathrm{SA}}$ & 1 & 0 & 0 & 0 & \textbf{62} & \textbf{14} & 20 & 3 & & $p_{\scriptscriptstyle\mathrm{GAP}} \ \halfcircle\ p_{\scriptscriptstyle\mathrm{SA}}$ & 1 & 0 & 1 & 0 & \textbf{47} & \textbf{8} & 37 & 6 \\
         $p_{\scriptscriptstyle\mathrm{SA}} \ \halfcircle\ p_{\scriptscriptstyle\mathrm{SEH}}$ & 0 & 0 & \textbf{80} & \textbf{5} & 0 & 0 & 13 & 2 & & $p_{\scriptscriptstyle\mathrm{SA}} \ \halfcircle\ p_{\scriptscriptstyle\mathrm{GAP}}$ & 0 & 0 & \textbf{1} & \textbf{35} & 0 & 0 & 7 & 57 \\
         $p_{\scriptscriptstyle\mathrm{SEH}} \ \halfcircle\ p_{\scriptscriptstyle\mathrm{QED}}$ & 1 & 0 & 1 & 0 & \textbf{57} & 10 & \textbf{28} & 3 & & $p_{\scriptscriptstyle\mathrm{GAP}} \ \halfcircle\ p_{\scriptscriptstyle\mathrm{QED}}$ & 0 & 0 & 1 & 0 & \textbf{47} & 7 & \textbf{38} & 7 \\
         $p_{\scriptscriptstyle\mathrm{QED}} \ \halfcircle\ p_{\scriptscriptstyle\mathrm{SEH}}$ & 0 & \textbf{40} & 0 & \textbf{26} & 0 & 22 & 0 & 12 & & $p_{\scriptscriptstyle\mathrm{QED}} \ \halfcircle\ p_{\scriptscriptstyle\mathrm{GAP}}$ & 0 & \textbf{1} & 0 & \textbf{1} & 1 & 56 & 1 & 40 \\
         $p_{\scriptscriptstyle\mathrm{SA}} \ \halfcircle\ p_{\scriptscriptstyle\mathrm{QED}}$ & 1 & 0 & \textbf{77} & 4 & 0 & 0 & \textbf{16} & 2 & & $p_{\scriptscriptstyle\mathrm{SA}} \ \halfcircle\ p_{\scriptscriptstyle\mathrm{QED}}$ & 0 & 0 & \textbf{4} & 30 & 0 & 0 & \textbf{13} & 53 \\
         $p_{\scriptscriptstyle\mathrm{QED}} \ \halfcircle\ p_{\scriptscriptstyle\mathrm{SA}}$ & 0 & \textbf{43} & 0 & 19 & 0 & \textbf{27} & 0 & 11 & & $p_{\scriptscriptstyle\mathrm{QED}} \ \halfcircle\ p_{\scriptscriptstyle\mathrm{SA}}$ & 0 & \textbf{0} & 0 & 1 & 2 & \textbf{56} & 1 & 40 \\
         $p_{\scriptscriptstyle\mathrm{SEH}} \ \halfcircle\ p_{\scriptscriptstyle\mathrm{SA}} \ \halfcircle\ p_{\scriptscriptstyle\mathrm{QED}}$ & 1 & 0 & 0 & 0 & \textbf{64} & 12 & 20 & 3 & & $p_{\scriptscriptstyle\mathrm{GAP}} \ \halfcircle\ p_{\scriptscriptstyle\mathrm{SA}} \ \halfcircle\ p_{\scriptscriptstyle\mathrm{QED}}$ & 1 & 0 & 1 & 0 & \textbf{49} & 5 & 39 & 5 \\
         $p_{\scriptscriptstyle\mathrm{SA}} \ \halfcircle\ p_{\scriptscriptstyle\mathrm{SEH}} \ \halfcircle\ p_{\scriptscriptstyle\mathrm{QED}}$ & 0 & 0 & \textbf{82} & 4 & 0 & 0 & 12 & 2 & & $p_{\scriptscriptstyle\mathrm{SA}} \ \halfcircle\ p_{\scriptscriptstyle\mathrm{GAP}} \ \halfcircle\ p_{\scriptscriptstyle\mathrm{QED}}$ & 0 & 0 & \textbf{3} & 34 & 0 & 0 & 11 & 52 \\
         $p_{\scriptscriptstyle\mathrm{QED}} \ \halfcircle\ p_{\scriptscriptstyle\mathrm{SEH}} \ \halfcircle\ p_{\scriptscriptstyle\mathrm{SA}}$ & 0 & \textbf{45} & 0 & 20 & 0 & 25 & 0 & 10 & & $p_{\scriptscriptstyle\mathrm{QED}} \ \halfcircle\ p_{\scriptscriptstyle\mathrm{GAP}} \ \halfcircle\ p_{\scriptscriptstyle\mathrm{SA}}$ & 2 & \textbf{0} & 0 & 1 & 1 & 55 & 1 & 40 \\
         \bottomrule
    \end{tabular}
}
\label{tab:combined_bins_model_f}
\end{table}

\begin{table}[h!]
\centering
\caption{Results of logical operators with our mixing policy (DB $F$) on molecule generation tasks (mean over 3 seeds). We report the percentage of valid samples falling into each of 8 bins based on reward thresholds: Fragment-based (SEH: 0.5, SA: 0.6, QED: 0.25) and Atom-based QM9 (GAP: 0.85, SA: 0.3, QED: 0.3). \textbf{Bold} indicates the \textit{target bin}: for harmonic mean ($\otimes$), where all objectives are high; for contrast ($\halfcircle$), where the first objective is high and the rest low.}
\small
\setlength{\tabcolsep}{3pt}
\resizebox{\linewidth}{!}{%
\begin{tabular}{lrrrrrrrr r lrrrrrrrr}
        \toprule
        \multicolumn{9}{c}{{Fragment-based}} & & \multicolumn{9}{c}{{Atom-based (QM9)}} \\
        \cmidrule(r){1-9} \cmidrule(l){11-19}
         \multicolumn{1}{r}{SEH} & \multicolumn{4}{c}{low} & \multicolumn{4}{c}{high} & &
         \multicolumn{1}{r}{GAP} & \multicolumn{4}{c}{low} & \multicolumn{4}{c}{high}
         \\
         \cmidrule(lr){2-5} \cmidrule(lr){6-9} \cmidrule(lr){12-15} \cmidrule(lr){16-19}
         \multicolumn{1}{r}{SA} & \multicolumn{2}{c}{low} & \multicolumn{2}{c}{high} & \multicolumn{2}{c}{low} & \multicolumn{2}{c}{high} & &
         \multicolumn{1}{r}{SA} & \multicolumn{2}{c}{low} & \multicolumn{2}{c}{high} & \multicolumn{2}{c}{low} & \multicolumn{2}{c}{high}
         \\
         \cmidrule(lr){2-3} \cmidrule(lr){4-5} \cmidrule(lr){6-7} \cmidrule(lr){8-9}
         \cmidrule(lr){12-13} \cmidrule(lr){14-15} \cmidrule(lr){16-17} \cmidrule(lr){18-19}
         \multicolumn{1}{r}{QED} & low & high & low & high & low & high & low & high & &
         \multicolumn{1}{r}{QED} & low & high & low & high & low & high & low & high
         \\
         \midrule
         \multicolumn{19}{l}{\textit{Base distributions}} \\
         $p_{\scriptscriptstyle\mathrm{SEH}}$ & 1 & 0 & 0 & 0 & \textbf{56} & \textbf{12} & \textbf{28} & \textbf{3} & & $p_{\scriptscriptstyle\mathrm{GAP}}$ & 0 & 0 & 1 & 0 & \textbf{43} & \textbf{11} & \textbf{35} & \textbf{10} \\
         $p_{\scriptscriptstyle\mathrm{SA}}$ & 0 & 0 & \textbf{74} & \textbf{5} & 0 & 0 & \textbf{18} & \textbf{3} & & $p_{\scriptscriptstyle\mathrm{SA}}$ & 0 & 0 & \textbf{3} & \textbf{32} & 0 & 0 & \textbf{9} & \textbf{56} \\
         $p_{\scriptscriptstyle\mathrm{QED}}$ & 0 & \textbf{38} & 0 & \textbf{25} & 0 & \textbf{24} & 0 & \textbf{13} & & $p_{\scriptscriptstyle\mathrm{QED}}$ & 0 & \textbf{1} & 0 & \textbf{1} & 2 & \textbf{54} & 2 & \textbf{40} \\
         \midrule
         \multicolumn{19}{l}{\textit{Harmonic mean ($\otimes$)}} \\
         $p_{\scriptscriptstyle\mathrm{SEH}} \otimes p_{\scriptscriptstyle\mathrm{SA}}$ & 1 & 0 & 6 & 1 & 3 & 1 & \textbf{78} & \textbf{10} & & $p_{\scriptscriptstyle\mathrm{GAP}} \otimes p_{\scriptscriptstyle\mathrm{SA}}$ & 0 & 0 & 1 & 1 & 5 & 7 & \textbf{30} & \textbf{56} \\
         $p_{\scriptscriptstyle\mathrm{SEH}} \otimes p_{\scriptscriptstyle\mathrm{QED}}$ & 0 & 8 & 0 & 3 & 0 & \textbf{64} & 0 & \textbf{25} & & $p_{\scriptscriptstyle\mathrm{GAP}} \otimes p_{\scriptscriptstyle\mathrm{QED}}$ & 0 & 0 & 0 & 0 & 12 & \textbf{44} & 9 & \textbf{35} \\
         $p_{\scriptscriptstyle\mathrm{SA}} \otimes p_{\scriptscriptstyle\mathrm{QED}}$ & 0 & 5 & 0 & \textbf{68} & 0 & 3 & 0 & \textbf{24} & & $p_{\scriptscriptstyle\mathrm{SA}} \otimes p_{\scriptscriptstyle\mathrm{QED}}$ & 0 & 0 & 0 & \textbf{12} & 1 & 9 & 5 & \textbf{73} \\
         $p_{\scriptscriptstyle\mathrm{SEH}} \otimes p_{\scriptscriptstyle\mathrm{SA}} \otimes p_{\scriptscriptstyle\mathrm{QED}}$ & 0 & 5 & 0 & 10 & 0 & 18 & 1 & \textbf{66} & & $p_{\scriptscriptstyle\mathrm{GAP}} \otimes p_{\scriptscriptstyle\mathrm{SA}} \otimes p_{\scriptscriptstyle\mathrm{QED}}$ & 2 & 0 & 0 & 2 & 4 & 17 & 12 & \textbf{63} \\
         \midrule
         \multicolumn{19}{l}{\textit{Contrast ($\halfcircle$)}} \\
         $p_{\scriptscriptstyle\mathrm{SEH}} \ \halfcircle\ p_{\scriptscriptstyle\mathrm{SA}}$ & 1 & 0 & 0 & 0 & \textbf{62} & \textbf{14} & 20 & 3 & & $p_{\scriptscriptstyle\mathrm{GAP}} \ \halfcircle\ p_{\scriptscriptstyle\mathrm{SA}}$ & 1 & 0 & 1 & 0 & \textbf{48} & \textbf{7} & 36 & 7 \\
         $p_{\scriptscriptstyle\mathrm{SA}} \ \halfcircle\ p_{\scriptscriptstyle\mathrm{SEH}}$ & 0 & 0 & \textbf{80} & \textbf{5} & 0 & 0 & 13 & 2 & & $p_{\scriptscriptstyle\mathrm{SA}} \ \halfcircle\ p_{\scriptscriptstyle\mathrm{GAP}}$ & 0 & 0 & \textbf{2} & \textbf{35} & 0 & 0 & 7 & 56 \\
         $p_{\scriptscriptstyle\mathrm{SEH}} \ \halfcircle\ p_{\scriptscriptstyle\mathrm{QED}}$ & 1 & 0 & 1 & 0 & \textbf{57} & 10 & \textbf{28} & 3 & & $p_{\scriptscriptstyle\mathrm{GAP}} \ \halfcircle\ p_{\scriptscriptstyle\mathrm{QED}}$ & 0 & 0 & 1 & 0 & \textbf{46} & 8 & \textbf{38} & 7 \\
         $p_{\scriptscriptstyle\mathrm{QED}} \ \halfcircle\ p_{\scriptscriptstyle\mathrm{SEH}}$ & 0 & \textbf{40} & 0 & \textbf{26} & 0 & 22 & 0 & 12 & & $p_{\scriptscriptstyle\mathrm{QED}} \ \halfcircle\ p_{\scriptscriptstyle\mathrm{GAP}}$ & 0 & \textbf{1} & 0 & \textbf{1} & 1 & 55 & 1 & 41 \\
         $p_{\scriptscriptstyle\mathrm{SA}} \ \halfcircle\ p_{\scriptscriptstyle\mathrm{QED}}$ & 0 & 0 & \textbf{78} & 4 & 0 & 0 & \textbf{16} & 2 & & $p_{\scriptscriptstyle\mathrm{SA}} \ \halfcircle\ p_{\scriptscriptstyle\mathrm{QED}}$ & 0 & 0 & \textbf{4} & 30 & 0 & 0 & \textbf{12} & 54 \\
         $p_{\scriptscriptstyle\mathrm{QED}} \ \halfcircle\ p_{\scriptscriptstyle\mathrm{SA}}$ & 0 & \textbf{43} & 0 & 19 & 0 & \textbf{27} & 0 & 11 & & $p_{\scriptscriptstyle\mathrm{QED}} \ \halfcircle\ p_{\scriptscriptstyle\mathrm{SA}}$ & 0 & \textbf{1} & 0 & 1 & 1 & \textbf{56} & 1 & 40 \\
         $p_{\scriptscriptstyle\mathrm{SEH}} \ \halfcircle\ p_{\scriptscriptstyle\mathrm{SA}} \ \halfcircle\ p_{\scriptscriptstyle\mathrm{QED}}$ & 1 & 0 & 0 & 0 & \textbf{63} & 12 & 21 & 3 & & $p_{\scriptscriptstyle\mathrm{GAP}} \ \halfcircle\ p_{\scriptscriptstyle\mathrm{SA}} \ \halfcircle\ p_{\scriptscriptstyle\mathrm{QED}}$ & 1 & 0 & 1 & 0 & \textbf{50} & 6 & 37 & 5 \\
         $p_{\scriptscriptstyle\mathrm{SA}} \ \halfcircle\ p_{\scriptscriptstyle\mathrm{SEH}} \ \halfcircle\ p_{\scriptscriptstyle\mathrm{QED}}$ & 0 & 0 & \textbf{82} & 4 & 0 & 0 & 13 & 1 & & $p_{\scriptscriptstyle\mathrm{SA}} \ \halfcircle\ p_{\scriptscriptstyle\mathrm{GAP}} \ \halfcircle\ p_{\scriptscriptstyle\mathrm{QED}}$ & 0 & 0 & \textbf{3} & 33 & 0 & 0 & 10 & 54 \\
         $p_{\scriptscriptstyle\mathrm{QED}} \ \halfcircle\ p_{\scriptscriptstyle\mathrm{SEH}} \ \halfcircle\ p_{\scriptscriptstyle\mathrm{SA}}$ & 0 & \textbf{45} & 0 & 19 & 0 & 26 & 0 & 10 & & $p_{\scriptscriptstyle\mathrm{QED}} \ \halfcircle\ p_{\scriptscriptstyle\mathrm{GAP}} \ \halfcircle\ p_{\scriptscriptstyle\mathrm{SA}}$ & 1 & \textbf{0} & 0 & 1 & 1 & 56 & 1 & 40 \\
         \bottomrule
    \end{tabular}
}
\label{tab:combined_bins_db_f}
\end{table}

\begin{table}[h!]
\centering
\caption{Results of logical operators with classifier-guided baseline on molecule generation tasks (mean over 3 seeds). We report the percentage of valid samples falling into each of 8 bins based on reward thresholds: Fragment-based (SEH: 0.5, SA: 0.6, QED: 0.25) and Atom-based QM9 (GAP: 0.85, SA: 0.3, QED: 0.3). \textbf{Bold} indicates the \textit{target bin}: for harmonic mean ($\otimes$), where all objectives are high; for contrast ($\halfcircle$), where the first objective is high and the rest low.}
\small
\setlength{\tabcolsep}{3pt}
\resizebox{\linewidth}{!}{%
\begin{tabular}{lrrrrrrrr r lrrrrrrrr}
        \toprule
        \multicolumn{9}{c}{{Fragment-based}} & & \multicolumn{9}{c}{{Atom-based (QM9)}} \\
        \cmidrule(r){1-9} \cmidrule(l){11-19}
         \multicolumn{1}{r}{SEH} & \multicolumn{4}{c}{low} & \multicolumn{4}{c}{high} & &
         \multicolumn{1}{r}{GAP} & \multicolumn{4}{c}{low} & \multicolumn{4}{c}{high}
         \\
         \cmidrule(lr){2-5} \cmidrule(lr){6-9} \cmidrule(lr){12-15} \cmidrule(lr){16-19}
         \multicolumn{1}{r}{SA} & \multicolumn{2}{c}{low} & \multicolumn{2}{c}{high} & \multicolumn{2}{c}{low} & \multicolumn{2}{c}{high} & &
         \multicolumn{1}{r}{SA} & \multicolumn{2}{c}{low} & \multicolumn{2}{c}{high} & \multicolumn{2}{c}{low} & \multicolumn{2}{c}{high}
         \\
         \cmidrule(lr){2-3} \cmidrule(lr){4-5} \cmidrule(lr){6-7} \cmidrule(lr){8-9}
         \cmidrule(lr){12-13} \cmidrule(lr){14-15} \cmidrule(lr){16-17} \cmidrule(lr){18-19}
         \multicolumn{1}{r}{QED} & low & high & low & high & low & high & low & high & &
         \multicolumn{1}{r}{QED} & low & high & low & high & low & high & low & high
         \\
         \midrule
         \multicolumn{19}{l}{\textit{Base distributions}} \\
         $p_{\scriptscriptstyle\mathrm{SEH}}$ & 1 & 0 & 0 & 0 & \textbf{56} & \textbf{12} & \textbf{28} & \textbf{3} & & $p_{\scriptscriptstyle\mathrm{GAP}}$ & 0 & 0 & 1 & 0 & \textbf{43} & \textbf{11} & \textbf{35} & \textbf{10} \\
         $p_{\scriptscriptstyle\mathrm{SA}}$ & 0 & 0 & \textbf{74} & \textbf{5} & 0 & 0 & \textbf{18} & \textbf{3} & & $p_{\scriptscriptstyle\mathrm{SA}}$ & 0 & 0 & \textbf{3} & \textbf{32} & 0 & 0 & \textbf{9} & \textbf{56} \\
         $p_{\scriptscriptstyle\mathrm{QED}}$ & 0 & \textbf{38} & 0 & \textbf{25} & 0 & \textbf{24} & 0 & \textbf{13} & & $p_{\scriptscriptstyle\mathrm{QED}}$ & 0 & \textbf{1} & 0 & \textbf{1} & 2 & \textbf{54} & 2 & \textbf{40} \\
         \midrule
         \multicolumn{19}{l}{\textit{Harmonic mean ($\otimes$)}} \\
         $p_{\scriptscriptstyle\mathrm{SEH}} \otimes p_{\scriptscriptstyle\mathrm{SA}}$ & 1 & 0 & 9 & 1 & 7 & 1 & \textbf{72} & \textbf{9} & & $p_{\scriptscriptstyle\mathrm{GAP}} \otimes p_{\scriptscriptstyle\mathrm{SA}}$ & 0 & 0 & 4 & 4 & 9 & 5 & \textbf{39} & \textbf{39} \\
         $p_{\scriptscriptstyle\mathrm{SEH}} \otimes p_{\scriptscriptstyle\mathrm{QED}}$ & 0 & 12 & 0 & 6 & 1 & \textbf{52} & 1 & \textbf{28} & & $p_{\scriptscriptstyle\mathrm{GAP}} \otimes p_{\scriptscriptstyle\mathrm{QED}}$ & 0 & 0 & 0 & 1 & 13 & \textbf{41} & 10 & \textbf{35} \\
         $p_{\scriptscriptstyle\mathrm{SA}} \otimes p_{\scriptscriptstyle\mathrm{QED}}$ & 0 & 6 & 2 & \textbf{58} & 0 & 3 & 1 & \textbf{30} & & $p_{\scriptscriptstyle\mathrm{SA}} \otimes p_{\scriptscriptstyle\mathrm{QED}}$ & 0 & 0 & 1 & \textbf{21} & 1 & 5 & 7 & \textbf{65} \\
         $p_{\scriptscriptstyle\mathrm{SEH}} \otimes p_{\scriptscriptstyle\mathrm{SA}} \otimes p_{\scriptscriptstyle\mathrm{QED}}$ & 1 & 1 & 6 & 7 & 2 & 5 & 38 & \textbf{40} & & $p_{\scriptscriptstyle\mathrm{GAP}} \otimes p_{\scriptscriptstyle\mathrm{SA}} \otimes p_{\scriptscriptstyle\mathrm{QED}}$ & 0 & 0 & 1 & 6 & 8 & 19 & 15 & \textbf{51} \\
         \midrule
         \multicolumn{19}{l}{\textit{Contrast ($\halfcircle$)}} \\
         $p_{\scriptscriptstyle\mathrm{SEH}} \ \halfcircle\ p_{\scriptscriptstyle\mathrm{SA}}$ & 1 & 0 & 1 & 0 & \textbf{56} & \textbf{12} & 27 & 3 & & $p_{\scriptscriptstyle\mathrm{GAP}} \ \halfcircle\ p_{\scriptscriptstyle\mathrm{SA}}$ & 1 & 0 & 0 & 0 & \textbf{44} & \textbf{11} & 34 & 10 \\
         $p_{\scriptscriptstyle\mathrm{SA}} \ \halfcircle\ p_{\scriptscriptstyle\mathrm{SEH}}$ & 0 & 0 & \textbf{75} & \textbf{5} & 0 & 0 & 17 & 3 & & $p_{\scriptscriptstyle\mathrm{SA}} \ \halfcircle\ p_{\scriptscriptstyle\mathrm{GAP}}$ & 0 & 0 & \textbf{2} & \textbf{32} & 0 & 0 & 9 & 57 \\
         $p_{\scriptscriptstyle\mathrm{SEH}} \ \halfcircle\ p_{\scriptscriptstyle\mathrm{QED}}$ & 1 & 0 & 1 & 0 & \textbf{57} & 12 & \textbf{26} & 3 & & $p_{\scriptscriptstyle\mathrm{GAP}} \ \halfcircle\ p_{\scriptscriptstyle\mathrm{QED}}$ & 0 & 0 & 1 & 0 & \textbf{45} & 11 & \textbf{34} & 9 \\
         $p_{\scriptscriptstyle\mathrm{QED}} \ \halfcircle\ p_{\scriptscriptstyle\mathrm{SEH}}$ & 0 & \textbf{38} & 0 & \textbf{25} & 0 & 24 & 0 & 13 & & $p_{\scriptscriptstyle\mathrm{QED}} \ \halfcircle\ p_{\scriptscriptstyle\mathrm{GAP}}$ & 0 & \textbf{1} & 0 & \textbf{1} & 2 & 54 & 1 & 41 \\
         $p_{\scriptscriptstyle\mathrm{SA}} \ \halfcircle\ p_{\scriptscriptstyle\mathrm{QED}}$ & 0 & 0 & \textbf{75} & 5 & 0 & 0 & \textbf{17} & 3 & & $p_{\scriptscriptstyle\mathrm{SA}} \ \halfcircle\ p_{\scriptscriptstyle\mathrm{QED}}$ & 0 & 0 & \textbf{2} & 32 & 0 & 0 & \textbf{10} & 56 \\
         $p_{\scriptscriptstyle\mathrm{QED}} \ \halfcircle\ p_{\scriptscriptstyle\mathrm{SA}}$ & 0 & \textbf{38} & 0 & 25 & 0 & \textbf{24} & 0 & 13 & & $p_{\scriptscriptstyle\mathrm{QED}} \ \halfcircle\ p_{\scriptscriptstyle\mathrm{SA}}$ & 0 & \textbf{0} & 0 & 1 & 2 & \textbf{54} & 2 & 41 \\
         $p_{\scriptscriptstyle\mathrm{SEH}} \ \halfcircle\ p_{\scriptscriptstyle\mathrm{SA}} \ \halfcircle\ p_{\scriptscriptstyle\mathrm{QED}}$ & 2 & 0 & 1 & 0 & \textbf{57} & 11 & 26 & 3 & & $p_{\scriptscriptstyle\mathrm{GAP}} \ \halfcircle\ p_{\scriptscriptstyle\mathrm{SA}} \ \halfcircle\ p_{\scriptscriptstyle\mathrm{QED}}$ & 0 & 0 & 0 & 0 & \textbf{45} & 10 & 36 & 9 \\
         $p_{\scriptscriptstyle\mathrm{SA}} \ \halfcircle\ p_{\scriptscriptstyle\mathrm{SEH}} \ \halfcircle\ p_{\scriptscriptstyle\mathrm{QED}}$ & 0 & 0 & \textbf{74} & 5 & 0 & 0 & 18 & 3 & & $p_{\scriptscriptstyle\mathrm{SA}} \ \halfcircle\ p_{\scriptscriptstyle\mathrm{GAP}} \ \halfcircle\ p_{\scriptscriptstyle\mathrm{QED}}$ & 0 & 0 & \textbf{2} & 33 & 0 & 0 & 9 & 56 \\
         $p_{\scriptscriptstyle\mathrm{QED}} \ \halfcircle\ p_{\scriptscriptstyle\mathrm{SEH}} \ \halfcircle\ p_{\scriptscriptstyle\mathrm{SA}}$ & 0 & \textbf{38} & 0 & 24 & 0 & 24 & 0 & 14 & & $p_{\scriptscriptstyle\mathrm{QED}} \ \halfcircle\ p_{\scriptscriptstyle\mathrm{GAP}} \ \halfcircle\ p_{\scriptscriptstyle\mathrm{SA}}$ & 0 & \textbf{0} & 0 & 1 & 2 & 55 & 1 & 41 \\
         \bottomrule
    \end{tabular}
}
\label{tab:combined_bins_cls}
\end{table}

\paragraph{Effect of reward correlations on logical operators.}
The effectiveness of the contrast operator depends on the correlation structure of the underlying reward distributions. \cref{tab:qm9_reward_cor} illustrates challenging cases: contrast operations such as $p_{\scriptscriptstyle\mathrm{QED}} \ \halfcircle \ p_{\scriptscriptstyle\mathrm{GAP}}$ and $p_{\scriptscriptstyle\mathrm{SA}} \ \halfcircle \ p_{\scriptscriptstyle\mathrm{QED}}$ show limited separation. As shown in \cref{fig:reward_corr}, we observe that among samples generated by the trained GFlowNets, those with high QED tend to also have high GAP, and those with high SA tend to have high QED. When rewards exhibit such positive correlations in the generated samples, the contrast operator struggles to isolate the target region since samples that score high on one objective inherently tend to score high on the other.

\begin{table}[h!]
\centering
\caption{Challenging cases of the contrast operator on atom-based molecule generation (QM9). We report the percentage of samples falling into each of 8 bins based on reward thresholds (GAP: 0.85, SA: 0.3, QED: 0.3). \textbf{Bold} indicates the \textit{target bin}: for contrast ($\halfcircle$), where only the first objective is high.}
\small
\setlength{\tabcolsep}{4pt}
\begin{tabular}{lrrrrrrrr}
        \toprule
         \multicolumn{1}{r}{GAP} & \multicolumn{4}{c}{low} &
         \multicolumn{4}{c}{high}
         \\
         \cmidrule(l{5pt}r{5pt}){2-5} \cmidrule(l{5pt}r{5pt}){6-9}
         \multicolumn{1}{r}{SA} & \multicolumn{2}{c}{low} &
         \multicolumn{2}{c}{high} &
         \multicolumn{2}{c}{low} &
         \multicolumn{2}{c}{high}
         \\
         \cmidrule(l{5pt}r{5pt}){2-3} \cmidrule(l{5pt}r{5pt}){4-5}
         \cmidrule(l{5pt}r{5pt}){6-7} \cmidrule(l{5pt}r{5pt}){8-9}
         \multicolumn{1}{r}{QED}
         & \multicolumn{1}{c}{low}
         & \multicolumn{1}{c}{high}
         & \multicolumn{1}{c}{low}
         & \multicolumn{1}{c}{high}
         & \multicolumn{1}{c}{low}
         & \multicolumn{1}{c}{high}
         & \multicolumn{1}{c}{low}
         & \multicolumn{1}{c}{high}
         \\
         \midrule
         $p_{\scriptscriptstyle\mathrm{GAP}}$ & 0 & 0 & 1 & 0 & \textbf{43} & \textbf{11} & \textbf{35} & \textbf{10} \\
         $p_{\scriptscriptstyle\mathrm{SA}}$ & 0 & 0 & \textbf{3} & \textbf{32} & 0 & 0 & \textbf{9} & \textbf{56} \\
         $p_{\scriptscriptstyle\mathrm{QED}}$ & 0 & \textbf{1} & 0 & \textbf{1} & 2 & \textbf{54} & 2 & \textbf{40} \\
         \midrule
         $p_{\scriptscriptstyle\mathrm{SA}} \ \halfcircle\ p_{\scriptscriptstyle\mathrm{QED}}$ & 0 & 0 & \textbf{4} & 30 & 0 & 0 & \textbf{13} & 53 \\
         $p_{\scriptscriptstyle\mathrm{QED}} \ \halfcircle\ p_{\scriptscriptstyle\mathrm{GAP}}$ & 0 & \textbf{1} & 0 & \textbf{1} & 1 & 56 & 1 & 40 \\
         $p_{\scriptscriptstyle\mathrm{SA}} \ \halfcircle\ p_{\scriptscriptstyle\mathrm{GAP}} \ \halfcircle\ p_{\scriptscriptstyle\mathrm{QED}}$ & 0 & 0 & \textbf{3} & 34 & 0 & 0 & 11 & 52 \\
         $p_{\scriptscriptstyle\mathrm{QED}} \ \halfcircle\ p_{\scriptscriptstyle\mathrm{GAP}} \ \halfcircle\ p_{\scriptscriptstyle\mathrm{SA}}$ & 2 & \textbf{0} & 0 & 1 & 1 & 55 & 1 & 40 \\
         \bottomrule
    \end{tabular}

\label{tab:qm9_reward_cor}
\end{table}

\begin{figure}[h!]
    \centering
    \subfigure[]{
        \includegraphics[width=0.3\columnwidth]{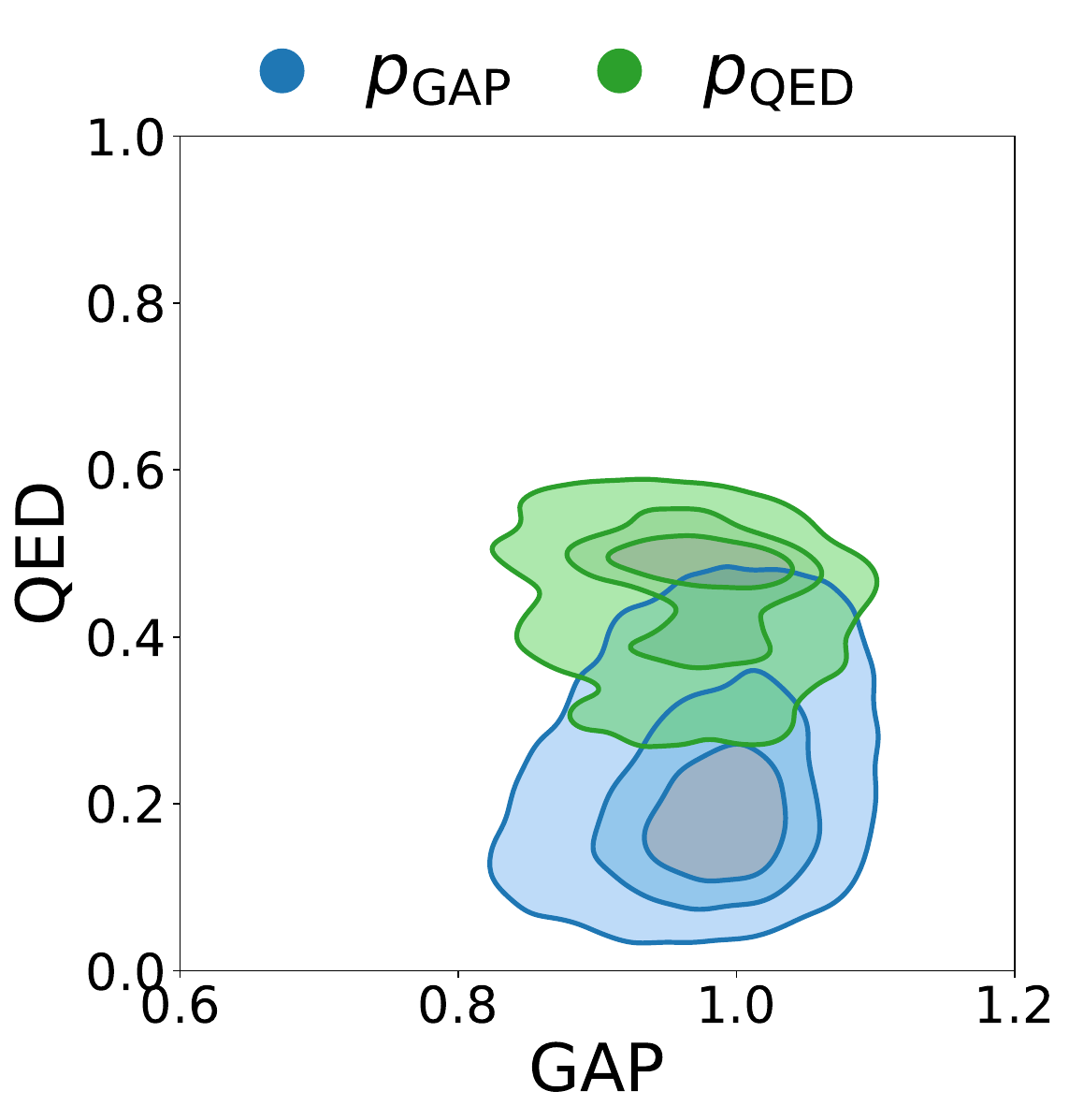}
        \label{fig:reward_corr_gap_qed}
    }
    \subfigure[]{
        \includegraphics[width=0.3\columnwidth]{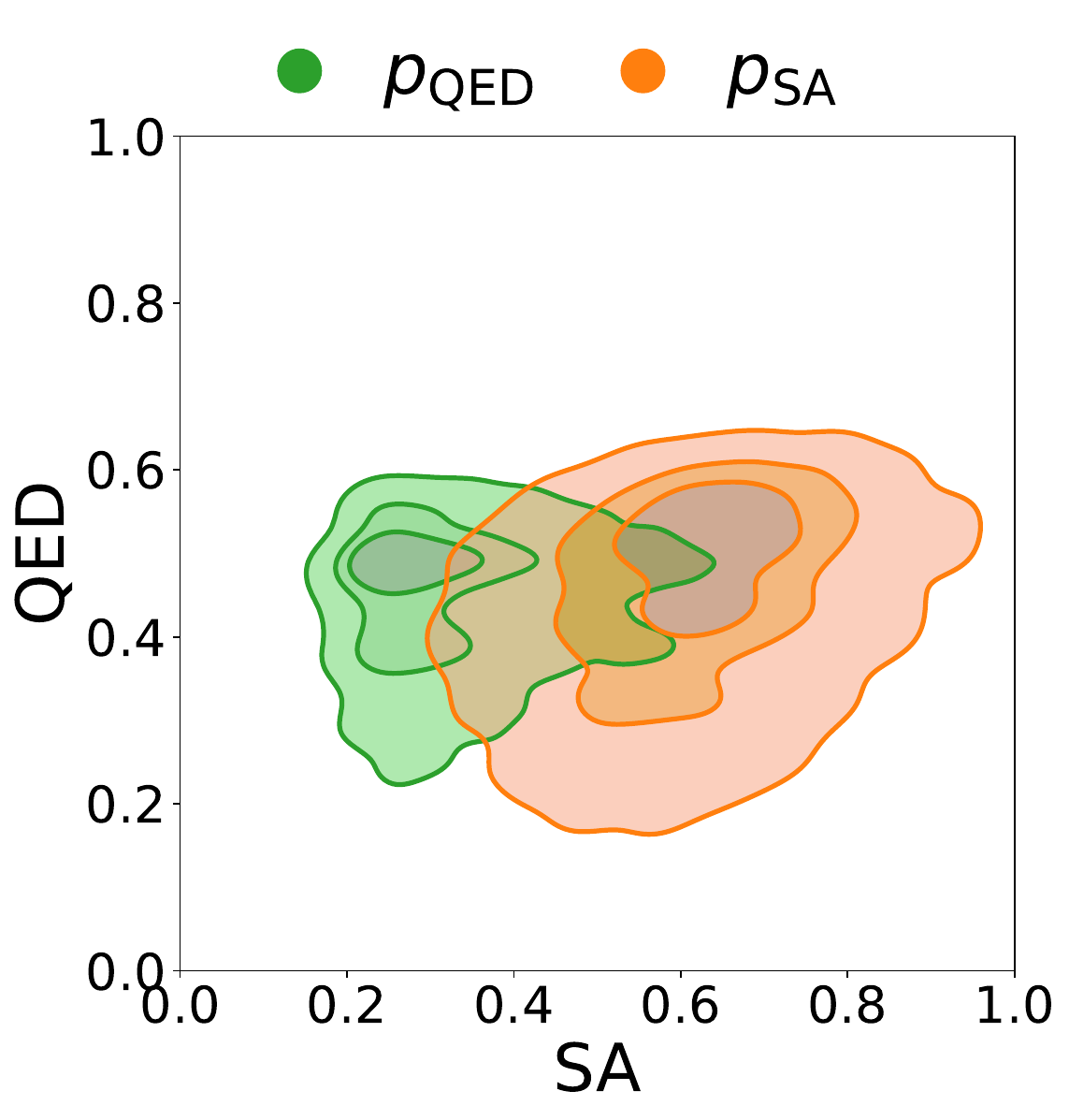}
        \label{fig:reward_corr_sa_qed}
    }
    \caption{Reward distributions of base models in QM9, illustrating positive correlations between objectives. We observe that (a) high QED samples tend to have high GAP, and (b) high SA samples tend to have high QED.}
    \label{fig:reward_corr}
\end{figure}

\end{document}